\newcommand{\PreserveBackslash}[1]{\let\temp=\\#1\let\\=\temp}
\newcolumntype{C}[1]{>{\PreserveBackslash\centering}p{#1}}
\newcolumntype{?}{!{\vrule width 1pt}}
\definecolor{darkblue}{rgb}{0,0,0.95}
\newenvironment{proofsketch}{%
  \proof}{\endproof}
\newtheorem{theorem}{Theorem}
\newtheorem{lemma}{Lemma}
\newtheorem{proposition}{Proposition}
\newtheorem{remark}{Remark}
\newtheorem{definition}{Definition}
\newtheorem{theorem-rst}[theorem]{Theorem}
\newtheorem{lemma-rst}[theorem]{Lemma}
\newtheorem{proposition-rst}[theorem]{Proposition}
\DeclarePairedDelimiter\br{(}{)}
\DeclarePairedDelimiter\brs{[}{]}
\DeclarePairedDelimiter\brc{\{}{\}}
\DeclarePairedDelimiter\abs{\lvert}{\rvert}
\DeclarePairedDelimiter\floor{\lfloor}{\rfloor}
\DeclarePairedDelimiter\ceil{\lceil}{\rceil}
\DeclareMathOperator*{\argmax}{arg\,max}
\newcommand{\E}{\mathbb{E}}
\newcommand{\R}{\mathbb{R}}
\newcommand{\G}{\mathbb{G}}
\newcommand{\N}{\mathbb{N}}
\newcommand{\D}{\mathcal{D}}
\newcommand{\X}{\mathcal{X}}
\newcommand{\Mcal}{\mathcal{M}}
\newcommand{\Acal}{\mathcal{A}}
\newcommand{\Ocal}{\mathcal{O}}
\newcommand{\Pcal}{\mathcal{P}}
\newcommand{\Scal}{\mathcal{S}}
\newcommand{\Rcal}{\mathcal{R}}
\newcommand{\Zcal}{\mathcal{Z}}
\newcommand{\ind}[1]{\mathds{1}\brc*{#1}}
\def\showComments{} 
    \newcommand{\comN}[1]{\textcolor{blue}{\{Nadav: #1\}}}
    \newcommand{\comD}[1]{\textcolor{orange}{\{Dorian: #1\}}}
    \newcommand{\comV}[1]{\textcolor{red}{\{Vianney: #1\}}}
    \newcommand{\comN}[1]{}
    \newcommand{\comD}[1]{}
    \newcommand{\comV}[1]{}
\title{The Value of Reward Lookahead in \\ Reinforcement Learning}
\author{%
    Nadav Merlis \\
    FairPlay Joint Team, CREST, ENSAE Paris \\
    \texttt{nadav.merlis@ensae.fr} \\
 \And
 Dorian Baudry \\
 FairPlay Joint Team, CREST, ENSAE Paris \\
 Institut Polytechnique de Paris%
 \And
 Vianney Perchet \\
  FairPlay Joint Team, CREST, ENSAE Paris \\
 Criteo AI Lab
}
\begin{document}

\maketitle

\begin{abstract}
  In reinforcement learning (RL), agents sequentially interact with changing environments while aiming to maximize the obtained rewards. Usually, rewards are observed only \emph{after} acting, and so the goal is to maximize the \emph{expected} cumulative reward. Yet, in many practical settings, reward information is observed in advance -- prices are observed before performing transactions; nearby traffic information is partially known; and goals are oftentimes given to agents prior to the interaction. In this work, we aim to quantifiably analyze the value of such future reward information through the lens of \emph{competitive analysis}. In particular, we measure the ratio between the value of standard RL agents and that of agents with partial future-reward lookahead. We characterize the worst-case reward distribution and derive exact ratios for the worst-case reward expectations. Surprisingly, the resulting ratios relate to known quantities in offline RL and reward-free exploration. We further provide tight bounds for the ratio given the worst-case dynamics. Our results cover the full spectrum between observing the immediate rewards before acting to observing all the rewards before the interaction starts.
\end{abstract}

\section{Introduction}
Reinforcement Learning \citep[RL,][]{sutton2018reinforcement} is the problem of learning how to interact with a changing environment. The setting usually consists of two major elements: a transition kernel, which governs how the state of the environment evolves due to the actions of an agent, and a reward given to the agent for performing an action at a given environment state. Agents must decide which actions to perform in order to collect as much reward as possible, taking into account not only the immediate reward gain, but also the long-term effects of actions on the state dynamics.

In the standard RL framework, reward information is usually observed after playing an action, and agents only aim to maximize their cumulative \emph{expected} reward, also known as the \emph{value} \citep{jaksch2010near,azar2017minimax,jin2018q,dann2019policy,zanette2019tighter,efroni2019tight,simchowitz2019non,zhang2021reinforcement}. Yet, in many real-world scenarios, partial information about the future reward is accessible in advance. For example, when performing transactions, prices are usually known. In navigation settings, rewards are sometimes associated with traffic, which can be accurately estimated for the near future. In goal-oriented problems \citep{schaul2015universal,andrychowicz2017hindsight}, the location of the goal is oftentimes revealed in advance. This information is completely ignored by agents that maximize the expected reward, even though using this future information on the reward should greatly increase the reward collected by the agent. 

As an illustration, consider a driving problem where an agent travels between two locations, aiming to collect as much reward as possible. In one such scenario, rewards are given only when traveling free roads. It would then be reasonable to assume that agents see whether there is traffic before deciding in which way to turn at every intersection (`one-step lookahead'). In an alternative scenario, the agent participates in ride-sharing and gains a reward when picking up a passenger. In this case, agents gain information on nearby passengers along the path, not necessarily just in the closest intersection (`multi-step lookahead'). 
Finally, the destination might be revealed only at the beginning of the interaction, and reward is only gained when reaching it (`full lookahead').
In all examples, the additional information should be utilized by the agent to increase its collected reward.

In this paper, we analyze the value of future (lookahead) information on the reward that could be obtained by the agent through the lens of competitive analysis. More precisely, we study the \emph{competitive ratio} (CR) between the value of an agent that only has access to reward distributions and that of a lookahead agent who sees the actual reward realizations for several future timesteps before choosing each action. Our contributions are the following: 
\textbf{(i)} Given an environment and its expected rewards, we characterize the distribution that maximizes the value of lookahead agents, for all ranges of lookahead from one step to full lookahead; this distribution therefore minimizes the CR. In particular, we show that the lookahead value is maximized by \emph{long-shot} rewards -- very high rewards at extremely low probabilities. 
\textbf{(ii)} We derive the worst-case CR as a function of the dynamics of the environment (that is, for the worst-case reward expectations). Surprisingly, the CR that emerges is closely related to fundamental quantities in reward-free exploration and offline RL \citep{xie2022role,al2023active}. 
\textbf{(iii)} We analyze the CR for the worst-possible environment. Specifically, tree-like environments that require deciding both \emph{when} and \emph{where} to navigate exhibit near-worst-case CR.
\textbf{(iv)}  Lastly, we complement these results by presenting different environments and their CR, providing more intuition to our results. 

\textbf{Related Work. }
The idea of utilizing lookahead information to update the played policy is related to a control concept called Model Predictive control \citep[MPC,][]{camacho2007model}, also known as receding horizon control. 
In complex control problems, it could be challenging to predict the system behavior in long horizons due to errors in the model or nonlinear dynamics. To mitigate this, MPC designs a control scheme for much shorter horizons, where the model is approximately accurate, oftentimes on a simplified (e.g., linearized) model. Then, to correct the deviations due to modeling errors, MPC continuously updates the controller according to the actual system state. In our context, the localized system estimates could be seen as lookahead information. Similar ideas have also been used for planning in reinforcement learning settings \citep{tamar2017learning,efroni2019combine,efroni2020online}. Yet, these concepts are mainly used to improve planning efficiency and account for nonlinearities/disturbances in the model. A few notable exceptions study the competitive ratio (and/or dynamic regret) between controllers with partial lookahead information to ones with full information \citep{li2019online,zhang2021regret,lin2021perturbation,lin2022bounded} -- a different measure than ours. Moreover, there is no clear way to translate any of these results into tabular problems.

The special case of one-step lookahead, where immediate rewards are observed before making a decision, has been studied in various problems. Possibly the most famous instance of such a problem is the prophet inequality. There, a set of known distributions is sequentially observed, and agents choose whether to either take a reward and end the interaction or discard it and move to the next distribution \citep{correa2019recent}. This could be formulated as a chain environment with two actions -- a rewarding action that moves to an absorbing state and a non-rewarding one that moves forward in the chain. A generalization of the prophet problem to resource allocation over Markov chains was studied in \citep{jia2023online}. To obtain a CR that is independent of the interaction length, the authors allow both the online and offline algorithms to choose their initial state. 
In both cases (and many other problems), the CR is measured between a one-step lookahead and a full lookahead agent, which observes all rewards in advance. In contrast, we measure the CR between no-lookahead agents and all possible lookaheads, so our results are complementary. 

Finally, \citet{garg2013online} studied another related resource allocation model.
In their work, the competitive ratio for Markov Decision Processes is measured between an online agent with access to the $L$-future reward distributions and transition probabilities, versus an agent who observes all statistical information in advance. A similar adversarial notion is also presented specifically for resource allocation.  
In contrast, we assume that the distributions are known to both agents and only the oracle observes reward realizations.

\section{Preliminaries}
\label{section: preliminaries}
We work under the episodic tabular reinforcement learning model. The environment is modeled as a Markov Decision Process (MDP), defined by the tuple $\br*{\Scal, \Acal, H, P, R, \mu}$, where $\Scal$ is the state space ($\abs{\Scal}=S$), $\Acal$ is the action space ($\abs{\Acal}=A$), $H \in \N$ is the horizon, $P$ is the transition kernel, $R$ is the stochastic reward and $\mu\in\Delta_S$ is the initial state distribution. At the first timestep, an initial state is generated $s_1\sim\mu$. Then, at every timestep $h\in[H]\triangleq\brc*{1,\dots,H}$, given environment state $s_h\in\Scal$, the agent performs an action $a_h\in\Acal$, obtains a stochastic reward $R_h(s_h,a_h)$ and transitions to a state $s_{h+1}\in\Scal$ with probability $P_h(s_{h+1}\vert s_h,a_h)$. For brevity, we use the notation $\X=[H]\times\Scal\times\Acal$.

We assume that rewards at different timesteps are independent, but allow them to be arbitrarily correlated between state-actions at the same step. We denote the expected reward by $r_h(s,a)$ and assume that the rewards are non-negative.\footnote{This assumption is standard when performance is measured by ratios -- otherwise, ratios are not well-defined.} Rewards and transitions are always assumed to be mutually independent, and transitions are independent between rounds. While we focus on non-stationary models, where the reward and transition distributions could depend on the timestep $h$, our analysis techniques could be easily adapted to stationary models, where the distributions are timestep-independent, and all the proofs in the appendix also state the results for stationary models.

\subsection{Lookahead Policies and Values}
We assume w.l.o.g.\ that all rewards are generated before the interaction starts. We denote by $\Rcal_h=\brc*{R_h(s,a)}_{s\in\Scal,a\in\Acal}$, the set of all rewards at timestep $h$ and by $\Rcal_h^L=\brc*{\Rcal_t}_{t=h}^{h+L-1}$, the $L$-lookahead reward information, containing all reward information for $L$-timesteps starting from $h$. By convention, $\Rcal_h^0$ is the empty set. A lookahead policy is defined as follows.
\begin{definition}
    A lookahead policy $\pi^L:[H]\times\Scal\times \R^{SAL}\mapsto \Delta_{\Acal}$ is a policy that for each timestep $h$, observes the state $s_h$ and the lookahead reward information $\Rcal_h^L$ and generates an action $a_h$ with probability  $\pi^L_h(a_h\vert s_h,\Rcal_h^L)$. The set of all lookahead policies is denoted by $\Pi^L$.
\end{definition}
For example, a one-step lookahead policy observes the immediate rewards at the current state before acting, while a full lookahead policy has access to all reward realizations before the interaction starts. When $L=0$, the policy only depends on the state and is Markovian; we therefore denote $\Pi^{\Mcal}=\Pi^0$. 

The goal of any agent is to maximize its cumulative reward, also known as the \emph{value}, $V^{L,{\pi}} = \E\brs*{\sum_{h=1}^H R_h(s_h,a_h)\vert s_1\sim\mu, \pi}$. For brevity, we omit the conditioning on the initial state distribution. The optimal value given a lookahead $L$ is $V^{L,*} = \sup_{\pi^L\in\Pi^L}V^{L,\pi^L}$. If we want to emphasize that an environment parameter (say, the transition kernel $P$) is fixed, we shall specify it, e.g., $V^{L,\pi}(P,r)$.

We analyze the relation between the `standard value' of an agent that plays optimally using no future information ($V^{0,*}$) and a lookahead agent that observes the $L$-future rewards before acting ($V^{L,*}$). Formally, let $\D(r)$ be the set of all non-negative distributions with rewards expectations $r_h(s,a)$. The $L$-lookahead competitive ratio (CR) is defined as 
\begin{align}
    CR^L(P,r) = \inf_{\Rcal^H\sim\D(r)}\frac{V^{0,*}(P,r)}{V^{L,*}(P,r)}.
\end{align}
That is, the competitive ratio is the worst-possible multiplicative loss of the standard (no-lookahead) policy, compared to an $L$-lookahead policy, given fixed transition kernel and expected rewards. For ratios to be well-defined, we follow the convention that any division by zero equals $+\infty$.

\begin{remark}
    We emphasize that the reward distributions are known in advance to both the no-lookahead and the $L$-step lookahead agents, in striking contrast to adversarial settings. In the latter,  the reward could be arbitrary and is only given to an oracle agent. In particular, any upper bound on $CR^L(P,r)$ will also apply to adversarial settings.
\end{remark}
\begin{remark}
\label{remark:CR for worst-case distribution}
    Without lookahead information, $P$ and $r$ suffice to calculate the optimal value \citep{sutton2018reinforcement}, so one could also write $CR^L(P,r) = \frac{V^{0,*}(P,r)}{\sup_{\Rcal^H\sim\D(r)}V^{L,*}(P,r)}$. 
\end{remark}
We similarly study the $L$-lookahead CR for the worst-case reward expectations, defined as\footnote{While we limit the expectations to $[0,1]$, the same results hold for $r_h\in\R_+^{SA}$ (see \Cref{remark: unbounded expected rewards} in the appendix).} 
$CR^L(P)=\inf_{r_h\in[0,1]^{SA}}CR^L(P,r).$ 
Finally, we study the CR for the worst-case environment $P$ and initial state distribution $\mu$, denoted by $CR^L$. In particular, we show that stationary environments achieve near-worst-case CR.

\textbf{Interpretation: the gain from lookahead information.} The no-lookahead agent is the standard agent used throughout the RL literature and serves as an ‘off-the-shelf’ agent. As such, the competitive ratios quantify the potential gain when moving from classic RL settings to agents that utilize future reward information. While using future information always increases the value, it often comes at some price -- either because access to such information is costly, or since lookahead algorithms are much more complicated and computationally expensive. The CRs analyzed in the paper can help determine whether the potential gain is worth the price -- and choosing which agent to deploy.

\subsection{Occupancy Measures}
Occupancy measures are the visitation probabilities of an agent in different state-actions. In particular, for any (potentially lookahead) policy, we define $d_h^\pi(s) = \Pr\brc*{s_h=s}$ and $d_h^\pi(s,a) = \Pr\brc*{s_h=s,a_h=a}$, where randomness is w.r.t. both transitions, rewards and internal policy randomization, given that actions are generated from the policy $\pi\in\Pi^L$. For $h=1$, the state distribution only depends on the initial state distribution $\mu$, and we use $d_1^{\pi}(s)$, $d_1(s)$ and $\mu(s)$ interchangeably. We also define the conditional occupancy measure as $d_h^\pi(s\vert s_t=s') = \Pr\brc*{s_h=s\vert s_t=s'}$ for some $t\le h$ and similarly use $d_h^\pi(s,a\vert s_t=s')$. Intuitively, this is the reaching probability from state $s'$ at time $t$ to a state $s$ at time $h$ when playing a policy $\pi$. Without lookahead information, it is well-known that the set of occupancy measures induced by Markovian policies is a convex compact polytope \citep{altman2021constrained}, and the value of any Markovian policy could be expressed using occupancies by
\begin{align}
    \label{eq: value as occupancy}
    V^{0,\pi}
    &= \E\brs*{\sum_{h=1}^HR_h(s_h,a_h)}
    = \E\brs*{\sum_{(h,s,a)\in\X}\ind{s_h=s,a_h=a}R_h(s,a)} \nonumber\\
    & = \sum_{(h,s,a)\in\X}\Pr\brc{s_h=s,a_h=a}\E\brs*{R_h(s,a)}
    = \sum_{(h,s,a)\in\X}d_h^\pi(s,a)r_h(s,a)
    = {d^\pi}^Tr.
\end{align}
Finally, denote the optimal reaching probability to a state $s\in\Scal$ as $d_h^*(s) = \max_{\pi\in\Pi^{\Mcal}}d_h^\pi(s)$. Notice that rewards and transitions are independent, so reward information does not affect the optimal reaching probability and it is sufficient to look at Markovian policies. Moreover, after reaching a state $s$, an agent could always deterministically choose an action $a$, so $d_h^*(s,a) = \max_{\pi\in\Pi^{\Mcal}}d_h^\pi(s,a)= d_h^*(s)$. Similarly, we define the optimal conditional reaching probability as $d_h^*(s\vert s_t=s') = \max_{\pi\in\Pi^{\Mcal}}d_h^\pi(s\vert s_t=s')$, and as the for non-conditional occupancy measures, we have that $d_h^*(s,a\vert s_t=s') = d_h^*(s\vert s_t=s')$.

\section{Competitiveness Versus Full Lookahead Agents}
\label{section:full lookahead}

Before analyzing the CR for the full range of lookahead values, we start by studying the full lookahead case, where all rewards are observed before the interaction starts. This regime is applicable, for example, in goal-oriented problems, where goals are given to the agent before an episode starts \citep{andrychowicz2017hindsight}. Notably, we show a link between the CR for the worst-case reward expectations, $CR^H(P)$, and existing complexity measures in offline RL and reward-free exploration. While the results of this section will later be covered by the more general multi-step lookahead, this case gives valuable insights on the worst-case distributions. Moreover, much of the proof techniques presented in this section will later be used to prove the results for the multi-step lookahead.

When all rewards are observed before the interaction starts, each instantiation of the reward is equivalent to an RL problem with \emph{known deterministic} rewards. In particular, the optimal policy given the reward is Markovian, and using the value formulation in \Cref{eq: value as occupancy}, we have 
\begin{align}
    \label{eq: full info value upper bound}
    V^{H,*}(P,r)
    &= \E\brs*{\max_{\pi\in\Pi^{\Mcal}}\sum_{(h,s,a)\in\X}d_h^\pi(s,a)R_h(s,a)} 
    \le \E\brs*{\sum_{(h,s,a)\in\X}\max_{\pi\in\Pi^{\Mcal}}d_h^\pi(s,a)R_h(s,a)} \nonumber\\ 
    & = \sum_{(h,s,a)\in\X}d_h^*(s)\E\brs*{R_h(s,a)} 
    = \sum_{(h,s,a)\in\X}d_h^*(s)r_h(s,a).
\end{align}
At first glance, this bound seems extremely crude -- the agent optimally navigates to collect all the expected rewards. Yet, at a second glance, it gives intuition on the worst-case distribution: a situation where only one reward at a single state is realized in every episode. Then, full lookahead agents can optimally navigate to this state and still collect all the realized rewards. While we cannot fully enforce a single reward realization (due to the independence of rewards in different timesteps), we can approximate this behavior by focusing on \emph{long-shot} distributions \citep{hill1981ratio}. 
\begin{definition}
    Rewards have long-shot distributions with parameter $\epsilon\in(0,1)$ and expectation $r$ if 
    \begin{align*}
    & \forall h\in[H], s\in\Scal, a\in\Acal:
    &R_h(s,a) = \left\{
        \begin{array}{ll}
             r_h(s,a)/\epsilon&  w.p.\enspace \epsilon \\
             0&  w.p.\enspace 1-\epsilon
        \end{array}
    \right.
\end{align*}
independently for all $h,s,a$. We also use the notation $R\sim LS_\epsilon(r)$.
\end{definition}
Notice that for any given $\epsilon$, long-shot distributions are bounded; thus, long-shot rewards could always be scaled to be supported by $[0,1]$ without affecting the CR. Moreover, when $\epsilon\ll 1/SAH$, with high probability, at most a single reward will be realized, and the bound in \Cref{eq: full info value upper bound} is achieved in equality as $\epsilon\to0$. Formally, the CR versus a full lookahead agent is characterized as follows: 
\begin{theorem}
\label[theorem]{theorem:fullInfoCR}[CR versus Full Lookahead Agents; see \Cref{appendix: proofs full lookahead} for the proof]
    
    \underline{Worst-case distributions:} $CR^H(P,r) =\max_{\pi\in\Pi^{\Mcal}}\frac{\sum_{(h,s,a)\in\X}d_h^{\pi}(s,a)r_h(s,a)}{\sum_{(h,s,a)\in\X}d_h^*(s)r_h(s,a)}.$
    
    \underline{Worst-case reward expectations:}  $CR^H(P) =\max_{\pi\in\Pi^{\Mcal}}\min_{(h,s,a)\in\X}\frac{d_h^{\pi}(s,a)}{d_h^*(s)}.$
    
    \underline{Worst-case environments:} For all environments, $CR^H\ge \max\brc*{\frac{1}{SAH},\frac{1}{A^H}}$. Also, for any $\delta\in(0,1)$ there exist stationary environments with rewards over $[0,1]$ s.t. if $S=A^n+1$ for $n\in\brc*{0,\dots,H-1}$, then $CR^H(P,r)\le \frac{1+\delta}{(H-\log_A(S-1))\cdot(A-1)(S-1)}$, and if $S\ge A^H-1$, then $CR^H(P,r)\le \frac{1+\delta}{A^H}$.
\end{theorem}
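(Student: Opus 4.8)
The plan is to establish the three items in order, using \Cref{eq: full info value upper bound} and the long‑shot distributions as the common engine, and deriving items 2 and 3 from item 1.

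\textbf{Worst-case distributions.} I would pair \Cref{eq: full info value upper bound}, which already gives $V^{H,*}(P,r)\le\sum_{(h,s,a)\in\X}d_h^*(s)r_h(s,a)$ for \emph{every} distribution in $\D(r)$, with a matching lower bound from $LS_\epsilon(r)$. A full-lookahead agent may use the suboptimal rule: \emph{if exactly one triple $(h_0,s_0,a_0)$ has nonzero realized reward, follow from the start a Markovian policy that reaches $s_0$ at time $h_0$ with probability $d_{h_0}^*(s_0)$ and then plays $a_0$; otherwise act arbitrarily.} Since rewards and transitions are independent, conditioning on the event that only $(h_0,s_0,a_0)$ is nonzero --- of probability $\epsilon(1-\epsilon)^{SAH-1}$ --- yields conditional value $d_{h_0}^*(s_0)r_{h_0}(s_0,a_0)/\epsilon$, so summing over $(h_0,s_0,a_0)$ gives $V^{H,*}\ge(1-\epsilon)^{SAH-1}\sum_{(h,s,a)\in\X}d_h^*(s)r_h(s,a)$, which tends to $\sum_{(h,s,a)\in\X}d_h^*(s)r_h(s,a)$ as $\epsilon\to0$. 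Hence $\sup_{\Rcal^H\sim\D(r)}V^{H,*}(P,r)=\sum_{(h,s,a)\in\X}d_h^*(s)r_h(s,a)$, and combining with $V^{0,*}(P,r)=\max_{\pi\in\Pi^{\Mcal}}{d^\pi}^Tr$ (\Cref{eq: value as occupancy}) and \Cref{remark:CR for worst-case distribution} --- pulling the $\pi$-independent denominator out of the maximum --- gives the stated formula.

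\textbf{Worst-case reward expectations.} Plugging the item 1 formula into $CR^H(P)=\inf_r CR^H(P,r)$, I would recognize a minimax over the occupancy polytope $\K=\{d^\pi:\pi\in\Pi^{\Mcal}\}$, convex and compact by \citet{altman2021constrained}, and the simplex $\Delta(\X_+)$ over the reachable set $\X_+=\{(h,s,a)\in\X:d_h^*(s)>0\}$ (triples with $d_h^*(s)=0$ force $d_h^\pi(s,a)=0$ and drop out of both sums, which is also how I read the $\min$ over $\X$ in the statement). One direction is immediate: for any $r$ and $\pi$, writing $\sum d_h^\pi(s,a)r_h(s,a)=\sum_{\X_+}\frac{d_h^\pi(s,a)}{d_h^*(s)}d_h^*(s)r_h(s,a)\ge\big(\min_{\X_+}\frac{d_h^\pi(s,a)}{d_h^*(s)}\big)\sum d_h^*(s)r_h(s,a)$ gives $CR^H(P,r)\ge\max_\pi\min_{\X_+}\frac{d_h^\pi(s,a)}{d_h^*(s)}$, hence $CR^H(P)$ is too. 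For the converse I would write $\min_{\X_+}\frac{d_{h,s,a}}{d_h^*(s)}=\min_{\lambda\in\Delta(\X_+)}\sum_{\X_+}\lambda_{h,s,a}\frac{d_{h,s,a}}{d_h^*(s)}$, apply Sion's minimax theorem to this bilinear game to swap $\max_{d\in\K}$ and $\min_\lambda$, take a minimizer $\lambda^\dagger$, and set $r^\dagger_h(s,a)=\lambda^\dagger_{h,s,a}/d_h^*(s)$ on $\X_+$ and $0$ elsewhere (rescaled by a positive constant into $[0,1]^{SA}$, which does not change the CR): then $\sum d_h^*(s)r^\dagger_h(s,a)=\sum\lambda^\dagger_{h,s,a}=1$, while $V^{0,*}(P,r^\dagger)=\max_{d\in\K}\sum_{\X_+}\lambda^\dagger_{h,s,a}\frac{d_{h,s,a}}{d_h^*(s)}=\max_\pi\min_{\X_+}\frac{d_h^\pi(s,a)}{d_h^*(s)}$ (by the choice of $\lambda^\dagger$ and Sion), so $CR^H(P,r^\dagger)$ equals the claimed quantity and $CR^H(P)\le$ it.

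\textbf{Worst-case environments.} For the two lower bounds I would exhibit a single good policy and feed it to the item 2 formula. The (history-dependent) policy that at the start picks $(h_0,s_0,a_0)\in\X_+$ uniformly and then follows a Markovian policy reaching $(s_0,a_0)$ at time $h_0$ with probability $d_{h_0}^*(s_0)$ has occupancy $\frac{1}{|\X_+|}\sum_{(h_0,s_0,a_0)\in\X_+}d^{\pi^{h_0,s_0,a_0}}\in\K$, hence equals $d^{\pi'}$ for some $\pi'\in\Pi^{\Mcal}$, so $\max_\pi\min_{\X_+}\frac{d_h^\pi(s,a)}{d_h^*(s)}\ge\frac{1}{|\X_+|}\ge\frac{1}{SAH}$; the uniform policy $\pi^u$ satisfies $d_h^{\pi^u}(s)\ge A^{-(h-1)}d_h^*(s)$ by comparing path probabilities factor-by-factor (each policy factor is $\le1$ along any path and $=1/A$ under $\pi^u$, while transition factors match), so $d_h^{\pi^u}(s,a)=\frac1A d_h^{\pi^u}(s)\ge A^{-H}d_h^*(s)$; together these give $CR^H\ge\max\{1/(SAH),1/A^H\}$. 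For the upper bounds --- the main obstacle --- I would construct explicit \emph{stationary} ``tree-like'' MDPs: the agent must spend $\approx n=\log_A(S-1)$ of its $H$ steps navigating (via a shift-register / De Bruijn-type layout on the $\Theta(S)$ states, so that every destination is reachable with probability $\approx1$) while also choosing \emph{when} to commit, and the worst-case reward is a long-shot distribution concentrated on a single uniformly random rewarding ``destination--action--time'' triple, of which there are $\Theta((H-n)(A-1)(S-1))$. A full-lookahead agent navigates straight to the realized triple, giving $V^{H,*}=\sum d_h^*(s)r_h(s,a)$ (item 1); the delicate step is designing the transitions so that every no-lookahead policy has total target-occupancy $O(1)$ across time, forcing $V^{0,*}=\max_\pi\sum d_h^\pi(s,a)r_h(s,a)=O\big(\frac{1}{(H-n)(A-1)(S-1)}\big)$ and hence the claimed ratio; the regime $S\ge A^H-1$ is the full-depth variant yielding $\frac{1+\delta}{A^H}$ (essentially matching the $1/A^H$ lower bound), and the $\delta$'s absorb the $O(\epsilon)$ long-shot errors. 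I expect designing these stationary instances and verifying the $V^{0,*}$ bound --- that no clever hedging beats it --- to be where the real work lies; items 1 and 2 are essentially bookkeeping around \Cref{eq: full info value upper bound} and Sion's theorem.
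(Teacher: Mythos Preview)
Your arguments for items 1 and 2 are correct and essentially coincide with the paper's. For item 2 the paper packages the minimax step into an auxiliary lemma (\Cref{lemma: minmax for CR}) that first rescales the rewards to lie on the affine slice $\alpha^Tz=1$, applies the minimax theorem, and then solves the inner LP via KKT; this is the same Sion-type argument you sketch, just organized differently. Your handling of unreachable triples via $\X_+$ also matches the paper's convention.

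Your $1/A^H$ lower bound is a genuinely different route. The paper does \emph{not} plug a policy into item 2 here; instead it works directly on the Bellman recursions, proving by backward induction that $\E[\bar V_h^*(s\mid R)]\le A^{H+1-h}V_h^{0,*}(s)$ (bounding $\max_a$ by $\sum_a$ at every step). Your argument---feed the uniform policy into the item-2 formula using the pathwise inequality $d_h^{\pi^u}(s)\ge A^{-(h-1)}d_h^*(s)$---is correct and arguably cleaner once item 2 is available; the paper's Bellman route has the advantage of being self-contained (it does not rely on item 2). The $1/(SAH)$ argument is the same in both.

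For the upper-bound construction your sketch is too vague, and the ``shift-register / De Bruijn'' idea is off track. The paper's construction (\Cref{prop: worst case env example}) is much simpler: a root state with a self-loop on one action (the delay), the remaining $A-1$ actions enter a deterministic $A$-ary tree of depth $n$, and every leaf transitions to a single absorbing terminal state; rewards are independent Bernoulli($\epsilon$) at leaf--action pairs only. The crucial structural fact is that any trajectory visits at most one leaf exactly once (after which it is absorbed), so $V^{0,*}\le\epsilon$ holds \emph{trivially}---there is no hedging to rule out, because the agent can only ever collect one Bernoulli. The full-lookahead agent waits at the root watching $(H-n)\cdot(A-1)(S-1)$ independent Bernoulli's go by and enters the tree only when one fires, yielding $V^{H,*}\ge 1-(1-\epsilon)^{(H-n)(A-1)(S-1)}$. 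No De Bruijn layout is needed: the tree already gives stationary deterministic transitions, and it is the absorbing terminal state---not any cleverness in the transition design---that enforces the ``total target-occupancy $O(1)$'' you correctly identified as the crux. The $S\ge A^H-1$ regime just drops the self-loop and uses a full-depth tree.
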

\begin{proofsketch}

    \textbf{Part I.} Recalling \Cref{remark:CR for worst-case distribution} and \Cref{eq: value as occupancy}, to prove the first part of the proposition, one only needs to calculate the full lookahead value for the worst-case distribution. An upper bound for this value is already given in \Cref{eq: full info value upper bound}; we directly calculate the value for long-shot distributions $LS_\epsilon(r)$ and show that this bound is achieved at the limit of $\epsilon\to0$.

    \textbf{Part II.} The proof of the second part of the theorem utilizes the previously calculated $CR^H(P,r)$ to optimize for the worst-case expectations. This is done using the minimax theorem, exchanging the reward minimization and the policy maximization. To make the internal maximization problem concave, we move from the space of Markovian policies to the set of occupancy measures induced by Markovian policies, which is convex \citep{altman2021constrained}. To make the reward minimization convex, we show that the denominator can be converted to the constraint $\sum_{(h,s,a)\in\X}d_h^*(s)r_h(s,a)=1$. Then, the minimax theorem can be applied, and we explicitly solve the resulting optimization problem. The formal application of the minimax theorem and its solution is done in \Cref{lemma: minmax for CR} in the appendix.

    \textbf{Part III.} The proof of the final statement is further divided into two parts.
    
    \emph{Lower bounding} $CR^H$. The lower bound $CR^H\ge 1/A^H$ is inductively achieved from the dynamic programming equations for both the no-lookahead and full lookahead values. The bound $CR^H\ge 1/SAH$ is obtained by choosing a specific policy $\pi\in\Pi^{\Mcal}$ and substituting in $CR^H(P)$: the Markovian policy whose occupancy is $d_h^{\pi_u}(s,a) =\frac{1}{SAH}\sum_{(h',s',a')\in\X}d_h^{\pi^*_{h',s',a'}}(s,a)$, where $\pi^*_{h,s,a}\in\Pi^{\Mcal}$ is a policy that maximizes the reaching probability to $(h,s,a)\in\X$.
        
    \emph{Upper bounding }$CR^H$ -- designing a worst-case environment. We show that a modified tree graph achieves a near-worst-case competitive ratio. In tree-based MDPs, each state represents a node in a tree, with the initial state as its root, and actions take the agent downwards through the tree. In our example, rewards are long-shots located at the leaves of such trees. However, this structure, by itself, does not lead to the worst-case bound. Intuitively, a standard RL agent would navigate to the leaf with the maximal expected reward, while an agent with a full lookahead would navigate to the leaf with the highest reward realization. Since there are at most $S$ leaves with $A$ actions in each, this would lead to $CR^H(P)\approx\frac{1}{SA}$. This is improved by a simple modification: at the root of the tree, we allocate one action to `delay' the entrance to the tree and stay in the root (as illustrated in \Cref{figure:lower-bound-tree-mdp} in the appendix). While agents without lookahead have no incentive to use this action, a full lookahead agent could predict \emph{when} a reward will be realized and enter the tree at a timing that allows its collection. When $H$ is large enough (compared to the tree depth), this allows the full lookahead agent to have approximately $H$ attempts to collect a reward and lead to the additional $H$-factor (up to log factors). The proof could be extended to any value of $S$ by allowing the tree to be incomplete -- we refer the readers to the remark at the end of \Cref{prop: worst case env example} in the appendix for more details.
\end{proofsketch}
Surprisingly, the CR for the worst-case reward expectation $CR^H(P)$ is the inverse of a concentrability coefficient that appears in many different RL settings, called the \emph{coverability coefficient}. In particular, it affects the learning complexity in both online and offline RL settings, where agents must learn to act optimally either based on logged date or interaction with the environment \citep{xie2022role}.\footnote{A subtle difference between the coefficients is whether the outer maximum is over all valid occupancy measures or all possible state-action distributions; see \citep[Section 2.3]{al2023active} for further discussion.} It also has a central role in reward-free exploration, where agents aim to learn the environment so that they can perform well for \emph{any} given reward function \citep{al2023active}. We emphasize that the lookahead setting is fundamentally different -- we assume that all agents have exact information on both the dynamics and reward distributions and ask about the multiplicative performance improvement due to additional knowledge on reward realization. In contrast, in learning settings, the main complexity is usually in learning the dynamics, and the rewards are oftentimes assumed to be deterministic. Moreover, the analyzed quantities are either regret measures or sample complexity, which cannot be directly linked to the competitive ratio.

The last part of \Cref{theorem:fullInfoCR} shows that tree-like environments with a delaying action at their root exhibit worst-case CR. Similar delay mechanisms were previously used to prove regret and PAC lower bounds for nonstationary MDPs \citep{domingues2021episodic,tirinzoni2022near}, though with a major difference -- in previous works, a nonstationary reward distribution is used to force the agent to learn when to traverse the tree and where to navigate, and the reward is time-extended (obtained for $\Omega(H)$ rounds). In contrast, our formulation is fully stationary and a reward can only be collected once. Still, the lookahead agent can use the delay to linearly increase the reward-collection probability, without any need to create time-extended rewards.

\section{Competitiveness Versus Multi-Step Lookahead Agents}
We now generalize the results of \Cref{section:full lookahead} and analyze the competitive ratio compared to $L$-lookahead agents, for \emph{any} possible lookahead range $L\in[H]$.
We also give special attention to the case of one-step lookahead, where the immediate rewards are revealed \emph{before} taking an action.

Inspired by the full lookahead case, we focus on long-shot rewards. For such rewards, an agent would expect to see no more than a single reward during an episode, which would only be discovered $L$-steps in advance. As such, a reasonable strategy would play a Markovian policy that maintains a `favorable' state distribution, such that whenever and wherever a future reward is realized, the agent could optimally navigate to it. Letting $t_L(h)$ be the time step where the $h$-step rewards are revealed to an $L$-lookahead agent, this corresponds with the following worst-case value:
\begin{restatable}{proposition-rst}{oracleValueMultiStep}
    \label{prop: multi-step value}
    For any $L\in[H]$, let $t_L(h)=\max\brc*{h-L+1,1}$. Then, it holds that
    \begin{align*}
        \sup_{\Rcal^H\sim\D(r)}V^{L,*}(P,r) = \max_{\pi\in\Pi^{\Mcal}}\sum_{(h,s,a)\in\X}r_h(s,a)\sum_{s'\in\Scal}d_{t_L(h)}^\pi(s')d_h^*(s\vert s_{t_L(h)}=s')
    \end{align*}
\end{restatable}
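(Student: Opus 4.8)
The plan is to prove the identity by two matching inequalities, generalizing the argument behind Part~I of \Cref{theorem:fullInfoCR}. For the upper bound I would imitate the derivation of \Cref{eq: full info value upper bound}, except that instead of splitting the value over the state-actions reached from the fixed initial distribution $\mu$, I split it over the state occupied at the \emph{revelation time} $t_L(h)=\max\brc*{h-L+1,1}$, the first step at which the rewards $\Rcal_h$ enter an $L$-lookahead window. For the lower bound, exactly as in \Cref{theorem:fullInfoCR}, I would evaluate the value of an explicit lookahead policy on long-shot rewards $LS_\epsilon(r)$ and let $\epsilon\to0$.

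\textbf{Upper bound.} Fix any $\Rcal^H\sim\D(r)$ and any $\pi^L\in\Pi^L$. Writing the value as in \Cref{eq: value as occupancy} and conditioning on $s_{t_L(h)}$ gives
\[
    V^{L,\pi^L}(P,r)=\sum_{(h,s,a)\in\X}\sum_{s'\in\Scal}d_{t_L(h)}^{\pi^L}(s')\,\E\brs*{\ind{s_h=s,a_h=a}R_h(s,a)\mid s_{t_L(h)}=s'}.
\]
I would then establish three facts. (i) The state $s_{t_L(h)}$ depends on the rewards only through $\Rcal_1,\dots,\Rcal_{h-1}$ (the window observed at step $t_L(h)-1$, if any, reaches no further than $\Rcal_{h-1}$), so by reward independence across timesteps $s_{t_L(h)}$ is independent of $\Rcal_h$ and $\E\brs*{R_h(s,a)\mid s_{t_L(h)}=s'}=r_h(s,a)$. (ii) Conditioning additionally on \emph{all} reward realizations makes $\pi^L$ a (possibly randomized) Markovian policy on the steps $t_L(h),\dots,h$, so the conditional probability of visiting $(s,a)$ at step $h$ from $s'$ at step $t_L(h)$ is at most $d_h^*(s\mid s_{t_L(h)}=s')$; taking expectations over the rewards and using $R_h(s,a)\ge0$ together with (i), the inner conditional expectation above is at most $d_h^*(s\mid s_{t_L(h)}=s')\,r_h(s,a)$. (iii) The marginal $d_{t_L(h)}^{\pi^L}(\cdot)$ is realized by the Markovian policy $\bar\pi_t(a\mid s)=\Pr\brc*{a_t=a\mid s_t=s}$ under $\pi^L$, by the standard induction on $t$ for history-dependent policies, which goes through here because reward information is independent of the dynamics. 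Combining (i)--(iii) bounds $V^{L,\pi^L}(P,r)$ by $\sum_{(h,s,a)\in\X}r_h(s,a)\sum_{s'\in\Scal}d_{t_L(h)}^{\bar\pi}(s')d_h^*(s\mid s_{t_L(h)}=s')$, which is at most the claimed maximum over $\Pi^{\Mcal}$; taking the supremum over $\pi^L$ and then over $\Rcal^H$ completes this direction.

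\textbf{Lower bound.} Let $\pi^{*}\in\Pi^{\Mcal}$ attain the maximum in the statement and take $R\sim LS_\epsilon(r)$. Consider the lookahead policy that plays $\pi^{*}$ until the first step $t$ whose window $\Rcal^L_t$ contains a nonzero reward $R_h(s,a)=r_h(s,a)/\epsilon$, and thereafter plays the Markovian policy maximizing the reaching probability to $(s,a)$ at step $h$. On the event that exactly one reward is realized in the episode, this switch occurs exactly at $t_L(h)$; by then $\pi^{*}$ has induced the reward-independent marginal $d_{t_L(h)}^{\pi^{*}}$, and the realized reward is subsequently collected with probability $d_h^*(s\mid s_{t_L(h)}=s')$ from each $s'$. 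Since rewards are non-negative, keeping only this event,
\[
    V^{L,*}(P,r)\ge\sum_{(h,s,a)\in\X}\epsilon(1-\epsilon)^{SAH-1}\,\frac{r_h(s,a)}{\epsilon}\sum_{s'\in\Scal}d_{t_L(h)}^{\pi^{*}}(s')\,d_h^*(s\mid s_{t_L(h)}=s'),
\]
which equals $(1-\epsilon)^{SAH-1}$ times the right-hand side of the statement; letting $\epsilon\to0$ and combining with the upper bound yields the identity. As a consistency check, $L=H$ gives $t_L(h)=1$, $d_1^\pi\equiv\mu$, and $\sum_{s'\in\Scal}\mu(s')d_h^*(s\mid s_1=s')=d_h^*(s)$, recovering Part~I of \Cref{theorem:fullInfoCR}.

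\textbf{Main difficulty.} The delicate step is (ii): the lookahead agent on $[t_L(h),h]$ may exploit reward realizations at steps \emph{beyond} $h$, so a priori it is not a Markovian policy on that window. The fix is to condition on \emph{all} realized rewards first --- which freezes those future windows and renders the policy Markovian on $[t_L(h),h]$ --- and only then take the expectation, with non-negativity of rewards being exactly what lets this conditional bound pass through the product $\ind{s_h=s,a_h=a}R_h(s,a)$. The rest is bookkeeping: (i) is a clean independence argument and (iii) is the classical occupancy-measure fact extended to lookahead policies.
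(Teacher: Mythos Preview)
Your proposal is correct and follows essentially the same approach as the paper: the upper bound conditions on the state at the revelation time $t_L(h)$, uses the independence of $s_{t_L(h)}$ from $\Rcal_h$, and bounds the subsequent reaching probability by $d_h^*(s\mid s_{t_L(h)}=s')$, while the lower bound evaluates an explicit lookahead policy on $LS_\epsilon(r)$ restricted to the single-realization events and sends $\epsilon\to0$. The only cosmetic differences are that the paper conditions on $R_h(s,a)$ alone (rather than all rewards) in the upper bound, and passes to Markovian policies at the very end by recognizing the bound as the optimal value of an auxiliary no-lookahead MDP, whereas you do it directly via the marginal-matching fact in (iii).
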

The proof can be found at \Cref{appendix: proofs multi-step}. It is comprised of calculating the value of long-shot rewards $R\sim LS_\epsilon(r)$ at the limit when $\epsilon\to0$ and then showing that the same quantity also serves as an upper bound of the value for all reward distributions.

For full lookahead, we have $t_H(h)=1$, and $d_{t_H(h)}^\pi$ becomes the initial state distribution $\mu$. This leads to the same value as in \Cref{eq: full info value upper bound}. The second extremity is when $L=1$ and $t_1(h)=h$. Then, the conditional occupancy is $d_h^*(s\vert s_{t_L(h)}=s')=\ind{s=s'}$ and we get the simplified expression
\begin{align}
    \label{eq: one-step value}
    \sup_{\Rcal^H\sim\D(r)}V^{1,*}(P,r) = \max_{\pi\in\Pi^{\Mcal}}\sum_{(h,s,a)\in\X}r_h(s,a)d_h^\pi(s).
\end{align}
Notably, this is the value of an agent that collects the rewards of all the actions in visited states (regardless of the action it actually played) but has no lookahead information.

Recalling \Cref{remark:CR for worst-case distribution}, one could use \Cref{prop: multi-step value} to directly calculate $CR^L(P,r)$. This, in turn, allows analyzing the worst-case reward expectations and environment, as stated in the following:
\begin{theorem}
\label[theorem]{theorem:multistepCR}[CR versus Multi-Step Lookahead Agents; see \Cref{appendix: proofs multi-step} for the proof]

For any $L\in[H]$, let $t_L(h)=\max\brc*{h-L+1,1}$. Then, it holds that:

\underline{Worst-case distributions:} $CR^L(P,r) =\frac{\max_{\pi\in\Pi^{\Mcal}}\sum_{(h,s,a)\in\X}r_h(s,a)d_h^{\pi}(s,a)}{ \max_{\pi\in\Pi^{\Mcal}}\sum_{(h,s,a)\in\X}r_h(s,a)\sum_{s'\in\Scal}d_{t_L(h)}^\pi(s')d_h^*(s\vert s_{t_L(h)}=s')}.$

\underline{Worst-case reward expectations:}  
    $$CR^L(P) =\min_{\pi^*\in\Pi^{\Mcal}}\max_{\pi\in\Pi^{\Mcal}}\min_{(h,s,a)\in\X}\frac{d_h^{\pi}(s,a)}{\sum_{s'\in\Scal}d_{t_L(h)}^{\pi^*}(s')d_h^*(s\vert s_{t_L(h)}=s')}.$$
\underline{Worst-case environments:} For all environments, $CR^L\ge \max\brc*{\frac{1}{SAH},\frac{1}{(H-L+1)A^L}}$. Also, for any $\delta\in(0,1)$ there exist stationary environments with rewards over $[0,1]$ s.t. if $S=A^n+1$ for $n\in\brc*{0,\dots,L-1}$, then $CR^L(P,r)\le \frac{1+\delta}{(H-\log_A(S-1))\cdot(A-1)(S-1)}$, and if $S\ge A^L+1$, then $CR^L(P,r)\le \frac{1+\delta}{(H-L+1)(A^L-1)}$.
\end{theorem}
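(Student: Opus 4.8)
The plan is to prove the three parts in order, reusing the machinery built for \Cref{theorem:fullInfoCR}.

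\textbf{Part I} is immediate: by \Cref{remark:CR for worst-case distribution} and \Cref{eq: value as occupancy}, $CR^L(P,r)=V^{0,*}(P,r)\big/\sup_{\Rcal^H\sim\D(r)}V^{L,*}(P,r)$ with $V^{0,*}(P,r)=\max_{\pi\in\Pi^{\Mcal}}\sum_{(h,s,a)\in\X}r_h(s,a)d_h^{\pi}(s,a)$, and the denominator is exactly the value computed in \Cref{prop: multi-step value}. For the remaining parts I abbreviate $c^{\pi^*}_{h,s}\eqdef\sum_{s'\in\Scal}d_{t_L(h)}^{\pi^*}(s')d_h^*(s\vert s_{t_L(h)}=s')$, so Part~I reads $CR^L(P)=\inf_{r}\frac{\max_{\pi}\sum_{(h,s,a)}r_h(s,a)d_h^{\pi}(s,a)}{\max_{\pi^*}\sum_{(h,s,a)}r_h(s,a)c^{\pi^*}_{h,s}}$, the optimizations ranging over $\Pi^{\Mcal}$.

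\textbf{Part II.} All quantities are nonnegative and the denominator is itself a maximum over Markovian policies, so the identity $\frac{x}{\max_i y_i}=\min_i\frac{x}{y_i}$ lets me pull $\pi^*$ outside: $CR^L(P)=\min_{\pi^*\in\Pi^{\Mcal}}\inf_{r}\frac{\max_{\pi}\sum r_h(s,a)d_h^{\pi}(s,a)}{\sum r_h(s,a)c^{\pi^*}_{h,s}}$. For each fixed $\pi^*$ the inner ratio is exactly the object treated by \Cref{lemma: minmax for CR}, now with the fixed vector $\big(c^{\pi^*}_{h,s}\big)_{(h,s,a)\in\X}$ playing the role that $\big(d_h^*(s)\big)_{(h,s,a)}$ played in the full-lookahead case: pass from Markovian policies to their convex compact occupancy polytope, rescale $r\ge 0$ by the (linear, hence convexity-preserving) normalization $\sum r_h(s,a)c^{\pi^*}_{h,s}=1$, apply the minimax theorem to exchange $\inf_r$ and $\max_\pi$, and solve the resulting linear program in $r$, whose value is $\max_{\pi}\min_{(h,s,a)}d_h^{\pi}(s,a)/c^{\pi^*}_{h,s}$ (with the convention that a zero denominator contributes $+\infty$, which also makes the box constraint $r_h\in[0,1]^{SA}$ versus $r_h\in\R_+^{SA}$ immaterial, cf.\ \Cref{remark: unbounded expected rewards}). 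Taking $\min_{\pi^*}$ yields the stated formula.

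\textbf{Part III, lower bounds.} Both use the Part~II formula together with the inequality $c^{\pi^*}_{h,s}\le d_h^*(s)$, which holds because the history-dependent policy that runs $\pi^*$ up to time $t_L(h)$ and then navigates optimally toward $s$ reaches $(h,s)$ with probability exactly $c^{\pi^*}_{h,s}$, while no policy can exceed $d_h^*(s)$. For $CR^L\ge\frac{1}{SAH}$ I substitute into the formula the Markovian policy $\pi_u$ with $d_h^{\pi_u}(s,a)=\frac{1}{SAH}\sum_{(h',s',a')\in\X}d_h^{\pi^*_{h',s',a'}}(s,a)$ used in \Cref{theorem:fullInfoCR} (valid by convexity of the occupancy polytope), which satisfies $d_h^{\pi_u}(s,a)\ge\frac{1}{SAH}d_h^*(s)\ge\frac{1}{SAH}c^{\pi^*}_{h,s}$ for every $\pi^*$ and $(h,s,a)$. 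For $CR^L\ge\frac{1}{(H-L+1)A^L}$, fix $\pi^*$ and, for each start time $\tau\in\{1,\dots,H-L+1\}$ (these are precisely the values taken by $t_L(\cdot)$), let $\pi^{(\tau)}$ play $\pi^*$ at steps $h<\tau$ and play uniformly at random at steps $h\ge\tau$. A coupling argument — at each step a uniformly random action coincides, with probability $1/A$ and independently of the transitions, with the action prescribed by the Markovian navigation toward $s$ that is simultaneously optimal from every source state — shows that whenever $\tau=t_L(h)$, $d_h^{\pi^{(\tau)}}(s,a)\ge A^{-(h-\tau+1)}c^{\pi^*}_{h,s}\ge A^{-L}c^{\pi^*}_{h,s}$, using $h-\tau\le L-1$. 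Averaging $\pi^{(1)},\dots,\pi^{(H-L+1)}$ into a single Markovian policy $\pi$ (convexity again) and keeping, for each $(h,s,a)$, only the term $\tau=t_L(h)$ gives $d_h^{\pi}(s,a)\ge\frac{1}{(H-L+1)A^L}c^{\pi^*}_{h,s}$; substituting into the Part~II formula and taking $\min_{\pi^*}$ concludes.

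\textbf{Part III, worst-case environment, and main obstacle.} I generalize the delayed-tree construction of \Cref{theorem:fullInfoCR}: a stationary tree-like MDP of depth $n\le L-1$ with a self-loop ``delay'' action at the root, horizon $H$, and long-shot rewards on the leaf state-actions. A no-lookahead agent must commit in advance to one root-to-leaf branch, so $V^{0,*}$ equals the (common) maximal leaf reward; an $L$-lookahead agent instead sees the next $L$ layers, holds at the root, and descends the correct length-$n$ branch exactly when a reward is about to appear, obtaining roughly $H-n$ near-independent attempts. Evaluating $\sup_{\Rcal}V^{L,*}$ for this MDP via \Cref{prop: multi-step value}, sending the long-shot parameter to $0$, and optimizing over the depth (taking $n=L-1$, or a full depth-$L$ tree with $H-L+1$ attempts when $S\ge A^L+1$) yields $CR^L(P,r)\le\frac{1+\delta}{(H-\log_A(S-1))(A-1)(S-1)}$ in the first regime and $\frac{1+\delta}{(H-L+1)(A^L-1)}$ in the second, with an incomplete last tree level accommodating arbitrary $S$. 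The delicate step — and the main obstacle — is precisely this evaluation of $\sup_{\Rcal}V^{L,*}$ for the delayed tree via \Cref{prop: multi-step value}: one must verify that the $L$-lookahead agent extracts exactly the factor $H-L+1$ from the delay action (and no more, since lookahead reveals only $L$ layers and the tree has depth $\le L$) while a no-lookahead agent gains nothing from it, and carry the counting through for every value of $S$. The remaining steps mirror the full-lookahead case closely.
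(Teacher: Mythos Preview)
Your Parts~I and~II track the paper's proof closely, but there is one genuine gap in Part~II. After you swap the two infima to write $CR^L(P)=\min_{\pi^*\in\Pi^{\Mcal}}\inf_{r}[\cdots]$, you silently keep writing $\min_{\pi^*}$ throughout; however, swapping infima preserves the \emph{value} but not attainment, and once the inner $\inf_r$ has been collapsed via \Cref{lemma: minmax for CR}, the resulting function $\pi^*\mapsto\max_{\pi}\min_{(h,s,a)}d_h^{\pi}(s,a)/c^{\pi^*}_{h,s}$ is not obviously lower semicontinuous in the occupancy $d^{\pi^*}$ (the ratios can blow up). The statement you are proving asserts a genuine $\min_{\pi^*}$, so this needs an argument. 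The paper handles it by a separate quasi-concavity lemma (\Cref{lemma: min-inf for CR}): it shows that $\alpha\mapsto\max_{y\in D}\min_i y_i/\alpha_i$ is quasi-concave on $\R_+^d$ via the mediant inequality $\frac{\lambda y+\,(1-\lambda)y'}{\lambda\alpha+\,(1-\lambda)\alpha'}\ge\min\{y/\alpha,\,y'/\alpha'\}$, hence its infimum over a compact convex polytope is attained at an extreme point. You need either this or an equivalent device.

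For the lower bound $CR^L\ge\frac{1}{(H-L+1)A^L}$ in Part~III, your route is correct but genuinely different from the paper's. You work directly at the occupancy level: switch from $\pi^*$ to the uniform policy at time $\tau$, use the elementary induction $d_h^{\mathrm{unif}}(s\mid s_\tau)\ge A^{-(h-\tau)}d_h^{\bar\pi}(s\mid s_\tau)$ for any deterministic $\bar\pi$, and average over $\tau$. The paper instead works at the value level: it decomposes both $V^{0,\pi_u}$ and $V^{L,*}$ according to reward functions $r^i$ supported on the timesteps newly revealed at step $i$, defines $\pi_i$ to play $\pi^*$ until step $i$ and then \emph{optimally} for $r^i$ (rather than uniformly), and applies the quasi-concavity of a ratio of sums to reduce each resulting term to a full-lookahead competitive ratio of horizon $L$, which was already bounded by $1/A^L$ in \Cref{theorem:fullInfoCR}. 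Your argument is more elementary and self-contained (it does not need the $1/A^H$ induction from \Cref{theorem:fullInfoCR}); the paper's argument makes the reduction to full lookahead explicit and so explains structurally where the factor $A^L$ comes from.

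For the upper bound in Part~III, the construction is the same delayed tree, but the paper does not evaluate $\sup_{\Rcal}V^{L,*}$ via \Cref{prop: multi-step value} as you propose; it instead fixes a finite long-shot parameter $\epsilon$ and directly lower-bounds $V^{L,*}$ by the probability that at least one leaf reward is realized during the $H-n$ windows available to the waiting agent, namely $1-(1-\epsilon)^{(H-n)(A-1)A^n}$, then sends $\epsilon\to0$. Your route via \Cref{prop: multi-step value} would also work (and gives the limit value exactly), but the direct probabilistic computation is shorter for this specific MDP and avoids the state-by-state evaluation of $c^{\pi^*}_{h,s}$ over the tree that you flag as delicate.
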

\begin{proofsketch}
The first part of the theorem is a direct result of \Cref{prop: multi-step value} and \Cref{remark:CR for worst-case distribution}. For the second part, we first rewrite 
    $$CR^L(P,r) =\min_{\pi^*\in\Pi^{\Mcal}}\max_{\pi\in\Pi^{\Mcal}}\frac{\sum_{(h,s,a)\in\X}r_h(s,a)d_h^{\pi}(s,a)}{ \sum_{(h,s,a)\in\X}r_h(s,a)\sum_{s'\in\Scal}d_{t_L(h)}^{\pi^*}(s')d_h^*(s\vert s_{t_L(h)}=s')}, $$
    and as in the full lookahead case, we apply the minimax theorem using \Cref{lemma: minmax for CR}. However, direct application would require calculating the infimum over $\pi^*\in\Pi^{\Mcal}$, and not a minimum. Thus, compared to the full lookahead, we also need to prove that the minimum is obtained in this set. We do so in \Cref{lemma: min-inf for CR}, relying on the set of occupancy measures being a convex compact polytope.

    In the last part, we use the same tree example to upper bound $CR^L(P,r)$. The lower bound is proven using a reduction from the full lookahead bound. In particular, the bound of $1/SAH$ trivially holds from the full lookahead case. For the second lower bound, we devise a Markovian policy $\pi_u$ such that for the appropriate choice of reward functions $r^i$, we prove that
    {\small\begin{align*}
    \frac{V^{0,\pi_u}(P,r)}{V^{L,*}(P,r)} 
    &\geq \frac{1}{H-L+1}
    \min_{\substack{i\in[H-L+1],\\ s'\in\Scal}}\brc*{\frac{\max_{\pi\in\Pi^{\Mcal}}\sum_{(s,a)\in\Scal\times\Acal}\sum_{h=i}^{i+L-1}r_h^i(s,a)d_h^{\pi}(s,a\vert s_i=s')}{\sum_{(s,a)\in\Scal\times\Acal}\sum_{h=i}^{i+L-1}r_h^i(s,a)d_h^*(s\vert s_i=s')}}.
    \end{align*}}
    Each of the terms is the competitive ratio versus a full lookahead agent with horizon $L$ that starts acting at $s_i=s'$. Hence, by \Cref{theorem:fullInfoCR}, all terms are bounded by $\frac{1}{A^L}$. 
    To elaborate, the reward $r^i$ limits the reward only to the new timesteps the lookahead agent gets to observe when it reaches step $i$. The policy $\pi_u$ is a mixture (in the occupancy space) of policies $\pi_i$ that start by playing the Markovian policy that maximizes the value of \Cref{prop: multi-step value}, up to timestep $i$, and then maximizes $r^i$. 
\end{proofsketch}

\Cref{theorem:multistepCR} extends the full lookahead results of \Cref{theorem:fullInfoCR} and tightly characterizes the CR for the full spectrum of lookaheads, both as a function of the environment and for the worst-case environments. Notice that even though lookahead policies are highly non-Markovian, all bounds are expressed using Markovian policies.

\textbf{One-step lookahead.} In the case where the immediate reward is observed before acting, \Cref{theorem:multistepCR} proves that even for the worst-case environment, $CR^1=\Theta\br*{\frac{1}{HA}}$, namely, independent of the size of the state-space. Moreover, for any transition kernel $P$, the CR is given by
\begin{align}
    \label{eq: CR one step}
    CR^1(P) = \min_{\pi^*\in\Pi^{\Mcal}}\max_{\pi\in\Pi^{\Mcal}}\min_{(h,s,a)\in\X}\frac{d_h^{\pi}(s,a)}{d_{h}^{\pi^*}(s)}.
\end{align}
While the coverability coefficient of $ CR^H(P)$ requires a policy $\pi$ to cover all states \emph{simultaneously} in proportion to their optimal reaching probability, $CR^1(P)$ provides a weaker coverability notion; it requires being able to cover \emph{any pre-known} state-distribution induced by a Markov policy $\pi^*$. We emphasize that $\pi$ must cover this distribution using all actions, so imitating the behavior of $\pi^*$ might be challenging -- with a ratio of $1/AH$ as the worst case. 

Thus, $CR^1(P)$ could be seen as an intermediate point between the coverability coefficient and \emph{single-policy coverability} \citep{xie2022role}, defined by the ratio between the state-action occupancy of the optimal policy and a single data distribution. Yet, \citet{xie2022role} argue that this notion is too weak to allow any guarantees. It is of interest to investigate whether our refined notion, which requires covering all valid state distributions, mitigates the issues they present and allows deriving meaningful results in offline and online RL.

In general, one could interpret the ratios $CR^L(P)$ as a class of decreasing\footnote{The sequence is decreasing by definition because increasing the lookahead only extends the policy class.} (inverse) concentrability coefficients, starting from the coverability of all pre-known state distributions ($CR^1(P)$) and ending with the coverability coefficient ($CR^H(P)$). Thus, it is intriguing to further study the connection of these values to other domains in which concentrability naturally arises.

\section{Examples}
\begin{figure}[t]
    \label{figure:examples}
    {%
    \subfigure[Chain MDP: agents start at the head of a chain and can either move forward in the chain or transition to an absorbing terminal state.]{%
    \label{subfigure:chain MDP}
    \includegraphics[width=.48\linewidth]{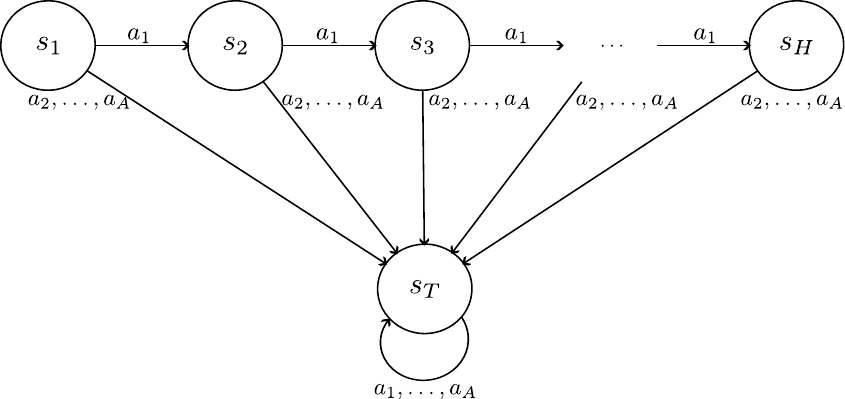}
    }\qquad\quad 
    \subfigure[Grid MDP: agents start at the bottom-left corner of an $n\times n$ grid and can move either up or right, until ending at the top-right corner after $2n-1$ steps.]{%
    \label{subfigure:grid-mdp}
    \includegraphics[width=.405\linewidth]{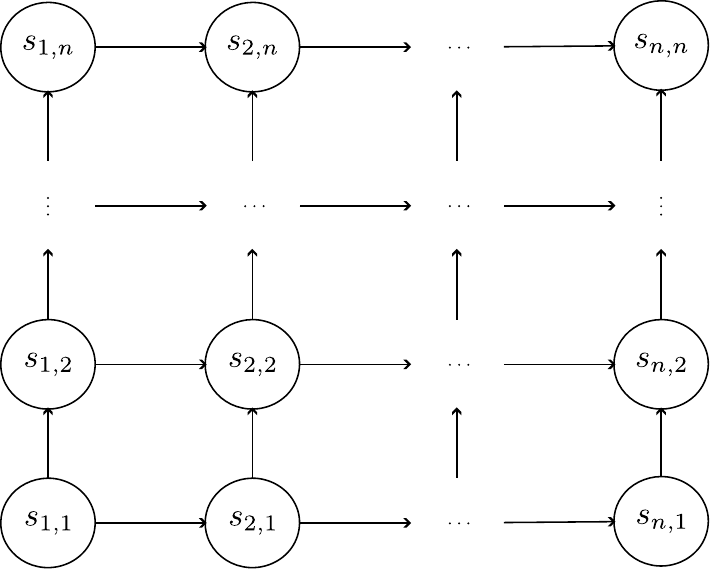}
    }
    }
    \caption{Examples: CR for grid and chain environments.}
\end{figure}

\label{section: examples}
We now present several MDP structures and analyze their competitive ratio for various lookaheads.

\textbf{Disguised contextual bandit \citep{al2023active}.} 
Maybe the most basic scenario is when actions do not affect the transitions, i.e., $P_h(s'\vert s,a) = P_h(s'\vert s)$ for all possible $(h,s,a,s')$. 
Specifically, the state distribution is independent of the played policy -- there exists an occupancy measure $d_h$ such that for all policies, $d_h^\pi(s)=d_h(s)$. Thus, it also holds that $d_h^*(s)=d_h(s)$, and
\begin{align*}
    CR^H(P) &=\max_{\pi\in\Pi^{\Mcal}}\min_{h,s,a}\frac{d_h^{\pi}(s,a)}{d_h^*(s)} = \max_{\pi\in\Pi^{\Mcal}}\min_{h,s,a}\frac{d_h(s)\pi_h(a\vert s)}{d_h(s)} =\max_{\pi\in\Pi^{\Mcal}}\min_{h,s,a}\pi_h(a\vert s) 
    = \frac{1}{A}.
\end{align*}
The last equality holds since $\pi_h(a\vert s)\in\Delta_{\Acal}$. Using the same arguments, one could also obtain this CR for one-step lookahead, so by the monotonicity of the CR in the lookahead, $CR^L(P)=\frac{1}{A}$ for all $L\in[H]$. This is to be expected -- without control over the dynamics, the best lookahead agents could do is to maximize immediate rewards, and any additional lookahead information is useless. Then, in each state, knowing the realization can only increase the reward by a factor of $\frac{\E\brs*{\max_aR_h(s,a)}}{\max_a\E\brs*{R_h(s,a)}}\le A$.

\textbf{Delayed trees.} This is the example described in the proofs of the main results, also detailed in \Cref{prop: worst case env example} and depicted in \Cref{figure:lower-bound-tree-mdp}. In such environments, we get a worst-case CR of $CR^L(P,r)\!=\!\Theta\br*{\max\brc*{\frac{1}{(H-L)A^L},\frac{1}{SAH}}}$. These trees are an extreme case where lookahead information is not only used to collect immediate rewards but rather to navigate to long-term rewards.

\textbf{Chain MDPs.} We go back to a bandit-like scenario and add limited control on the dynamics, in the form of a chain. The agent starts at the head of the chain ($s_1$), and at each node $k$ of the chain, it could choose to advance to the next node by taking the action $a=a_1$ or to move to an absorbing terminal state $s_T$ by taking any other action. The environment is depicted in \Cref{subfigure:chain MDP}.

One special problem that falls into this structure is the prophet inequality problem. In particular, assume that reward can only be obtained when moving from the chain to the terminal state ($\forall k,a,\;r_h(s_k,a_1)=r_h(s_T,a)=0$). Thus, at each node of the chain, the agent chooses whether to collect a reward and effectively end the interaction or discard it and move forward in the chain. In other words, the problem becomes an optimal-stopping problem. As such, it is reasonable to allow the agent to see the instantaneous rewards before deciding whether to stop, leading to one-step lookahead agents. This problem has numerous applications, especially in the context of posted-price mechanisms \citep{correa2017posted,correa2019prophet}. A classical result is that the CR between one-step lookahead and full lookahead agents is always bounded by $1/2$ \citep{hill1981ratio}. 

Assuming this reward structure with the worst-case reward distribution, the full lookahead agent could reach all rewards and collect them, thus collecting $V^{H,*}(P,r)=\sum_{k=1}^H\sum_{a\in\Acal}r_k(s_k,a)$ (as in \Cref{eq: full info value upper bound}). Similarly, a one-step lookahead agent could move forward in the chain using the policy $\pi_k(s_k)=a_1$ while effectively collecting all rewards and achieving the same value (see \Cref{eq: one-step value}). In contrast, a no-lookahead agent would have to choose a single reward to collect, obtaining a value of $V^{0,*}(P,r)=\max_{k\in[H], a\in\Acal}r_k(s_k,a)$. The resulting CR for this reward structure would be
\begin{align*}
    CR^H(P,r)=CR^1(P,r)
    =\frac{\max_{k\in[H], a\in\Acal}r_k(s_k,a)}{\sum_{k=1}^H\sum_{a\in\Acal}r_k(s_k,a)}
    \geq \frac{1}{(A-1)H},
\end{align*}
where the inequality is since there are only $A-1$ rewarding actions, and equality is achieved when all expected rewards are equal. Notably, the reward structure in the prophet problem is near-worst-case; one could verify that for chain MDPs, it holds that $CR^H(P)\ge \br*{1-\frac{1}{e}}\frac{1}{AH}$. This is due to the second part of \Cref{theorem:fullInfoCR}, using the following policy: for all chain states $k\in[H]$, move forward w.p. $\pi_k(a_1\vert s_k)=1-\frac{1}{H}$ and play any other action $i>1$ w.p. $\pi_k(a_i\vert s_k)=\frac{1}{(A-1)H}$. At the absorbing state $s_T$, play uniformly $\pi_h(a_i\vert s_T)=1/A$.
This simple example provides two important insights. 

\emph{Hardness versus one-step lookahead}: chain MDPs exhibit the worst-case CR versus one-step lookahead agents. A central reason is that to move towards rewarding states (forward in the chain), agents must take non-rewarding actions ($a_1$) -- there is a tradeoff between gathering instantaneous rewards and moving to future rewarding states.

\emph{Easiness versus full lookahead}: as previously mentioned, the CR between one-step and full lookahead agents is the well-known prophet inequality and is at least $1/2$; In other words, for chain MDPs, the information-gain from one-step-to full lookahead is marginal compared to the value of one-step versus no-lookahead. This is mainly because navigating to rewarding states is especially easy in chain MDPs  -- the agent only has to move forward. In contrast, in environments where navigating to rewarding states is difficult (e.g., the tree environment described in the main results), there is a substantial gain to the full lookahead.
    
These insights motivate two natural assumptions that reduce the CR.

\textbf{Dense rewards.} Assume that in all states where the reward can be strictly positive, it holds that $\frac{\max_a r_h(s,a)}{\min_a r_h(s,a)}\le C$. That is, if there exists one rewarding action at a state, all its actions yield some minimal reward. States are allowed to yield zero rewards for all actions. When this assumption holds, agents could navigate to rewarding future states and still collect rewards, mitigating the issue observed in the chain MDPs. Letting $\pi^*\in\argmax_{\pi\in\Pi^{\Mcal}}\sum_{(h,s,a)\in\X}r_h(s,a)d_h^\pi(s)$, we have
\begin{align*}
    CR^1(P,r) 
    &= \frac{\max_{\pi\in\Pi^{\Mcal}}\sum_{(h,s,a)\in\X}r_h(s,a)d_h^{\pi}(s,a)}{ \max_{\pi\in\Pi^{\Mcal}}\sum_{(h,s,a)\in\X}r_h(s,a)d_h^\pi(s)}
    \geq \frac{\sum_{(h,s,a)\in\X}r_h(s,a)d_h^{\pi^*}(s,a)}{\sum_{(h,s,a)\in\X}r_h(s,a)d_h^{\pi^*}(s)} \\
    & \geq \frac{\sum_{(h,s,a)\in\X}\frac{1}{AC} \sum_{a'\in\Acal}r_h(s,a')d_h^{\pi^*}(s,a)}{\sum_{(h,s,a)\in\X}r_h(s,a)d_h^{\pi^*}(s)} 
    \overset{(*)}{\geq} \frac{1}{AC},
\end{align*}
where $(*)$ is since $\sum_ad_h^{\pi^*}(s,a)=d_h^{\pi^*}(s)$. Thus, dense rewards remove the horizon dependence in the CR, and for small $C$, we get a similar CR as in the disguised contextual bandit problem. 

\textbf{Ergodic MDPs.} One way to make the navigation task easier is to limit the control of the agent on the state. In \citep{al2023active}, the authors suggest looking at MDPs whose transition kernels are near-uniform. Formally, for $0<\beta<\alpha<1$, they defined the family of transitions

\begin{align*}
    \Pcal_{\alpha,\beta} = \brc*{q\in\R_+^S: \sum_{i=1}^Sq_i=1, \max_i q_i\le S^{\alpha-1},\min_i q_i\ge \frac{1-S^{\beta-1}}{S-1}},
\end{align*}
and assumed that $P_h(\cdot\vert s,a)\in \Pcal_{\alpha,\beta}$ for all $h,s,a$. As $\alpha$ goes to zero, the transition distribution becomes uniform, while at the limit of $\alpha,\beta\to1$, this becomes the set of all possible transition kernels. Under this assumption, they prove that the coverability coefficient is bounded by $S^\alpha AH$ (see the end of the proof of Lemma 38 of \citealt{al2023active}), which implies that $CR^H(P) \ge \frac{1}{S^\alpha AH}$. In particular, if for all $h,s,a$, $P_h(s'\vert s,a)\in \brs*{\frac{1-C/S}{S-1},\frac{C}{S}}$, then $CR^H(P) \ge \frac{1}{C AH}$: independent of the size of the state-space. Finally, in their proof, \citealt{al2023active} show that $d_h^\pi(\cdot)\in \Pcal_{\alpha,\beta}$ for all policies and timesteps. Substituting to \Cref{eq: CR one step} (and using the uniform policy for $\pi$) directly leads to $CR^1(P)\geq \frac{1-S^{\beta-1}}{AS^\alpha}$, potentially improving the worst-case environment when $S^\alpha\le H$.

\textbf{Grid MDPs} We end this section by analyzing a navigation example, where an agent navigates from one corner of an $n\times n$ grid to the opposite corner ("Navigating in Manhattan", see \Cref{subfigure:grid-mdp}). Due to space limits, we briefly describe the results while fully proving them in \Cref{appendix: grid-mdp}. This example directly generalizes the chain example with added navigation difficulty; by enforcing zero rewards for all states above the bottom row, we effectively get a chain MDP of horizon $n$. As a direct result, we immediately get that $CR^1(P)=\Theta(\frac{1}{H})$ and $CR^H(P)=\Ocal(\frac{1}{H})$. Surprisingly, this bound is tight -- adding one additional dimension to the problem is just as difficult as a chain. Like chains, some of the difficulty comes from sparsity in the reward, but even when all rewards have unit expectations, we show that $CR^L(P)=\Theta(\frac{1}{L})$. This implies that the problem has additional hardness due to the need for navigation, which is the same order of magnitude as the one due to sparse reward. As a final remark, we show that the ratio between one-step lookahead and full lookahead in grid MDPs is at most $\Ocal(\frac{1}{H})$. This might be counter-intuitive at first, as the worst-case CR versus either of them is $\Theta(\frac{1}{H})$. In fact, this is possible since the worst-case environments are different; when competing with one-step lookahead agent, the hardness comes from reward sparsity, while versus full lookahead, it is also due to navigation issues. The one-step lookahead agent cannot use its information to navigate, so it has the same CR of $1/H$ as the no-lookahead.

\section{Conclusions and Future Work}
\label{section: conclusions}
We studied the value of future reward lookahead information in tabular reinforcement learning through the lens of competitive analysis. We characterized the CR for the worst-case distributions, reward expectations and transition kernels for the full range of possible lookahead. We also showed the connection between the resulting CR and concentrability coefficients from the literature of offline and reward-free RL. We find the appearance of the same coefficients in seemingly completely different RL problems intriguing and warrants further study.

While we took the first step in analyzing competitiveness in RL, various other competitive measures could be studied. One natural alternative would be to study transition lookahead, where agents observe future transition realizations. We believe that the results would greatly differ from ours; indeed, even with one-step lookahead, the CR can be exponentially small (as we prove in \Cref{appendix: transition lookahead}). 
Another relevant competitivity measure is to compare an agent with \emph{predictions} of the future rewards to agents with exact lookahead information. This models the realistic scenario where agents get approximate information on future rewards and want to utilize it to improve performance. Also, as in the prophet problem, one could analyze the CR between multi-step lookahead to full lookahead agents. 

Finally, we focus on the CR for the worst-case distribution, which allows us to derive the exact value of lookahead agents. However, planning with lookahead for general reward distribution can be challenging. For full lookahead, one can perform standard planning using reward realization, making planning tractable. With one-step lookahead, it is possible to write Bellman equations for the value, but each calculation depends on the full distribution of the reward, making it hard to calculate. For multi-step lookahead, there is no clear way to perform planning without incorporating the future rewards into the state, rendering the planning exponential. While exact planning might be intractable, it could be possible to devise methods for approximate planning. 
Lastly, it is of great interest to design practical algorithms that can efficiently leverage lookahead information, that is, achieve the lookahead value; our results indicate that it is significantly higher than the no-lookahead value, so aiming for it could dramatically boost the performance. We also leave this direction for future work.

\section*{Acknowledgments}
We thank Simon Mauras and Jose Correa for the helpful discussions. This project has received funding from the European Union’s Horizon 2020 research and innovation programme under the Marie Skłodowska-Curie grant agreement No 101034255. 
Dorian Baudry thanks the support of ANR-19-CHIA-02 SCAI. Vianney Perchet acknowledges support from the French National Research Agency (ANR) under grant number (ANR-19-CE23-0026 as well as the support grant, as well as from the grant “Investissements d’Avenir” (LabEx Ecodec/ANR-11-LABX-0047).

\bibliographystyle{plainnat}
\bibliography{references}

\clearpage

\appendix

\crefalias{section}{appendix} 

\section{Proofs for Full Lookahead Agents}
\label{appendix: proofs full lookahead}
\begin{theorem}[CR versus Full Lookahead Agents]\hfill
\label{theorem:fullInfoCRExtended}
   
    \underline{Worst-case distributions:} $CR^H(P,r) =\max_{\pi\in\Pi^{\Mcal}}\frac{\sum_{(h,s,a)\in\X}d_h^{\pi}(s,a)r_h(s,a)}{\sum_{(h,s,a)\in\X}d_h^*(s)r_h(s,a)}.$
    
    \underline{Worst-case reward expectations:}  For non-stationary reward expectations, 
        $$CR^H(P) =\max_{\pi\in\Pi^{\Mcal}}\min_{(h,s,a)\in\X}\frac{d_h^{\pi}(s,a)}{d_h^*(s)}.$$
        If the reward expectations are stationary ($r_h(s,a)=r(s,a)$), then 
        $$CR^H(P) =\max_{\pi\in\Pi^{\Mcal}}\min_{(s,a)\in\Scal\times\Acal}\frac{\sum_{h=1}^Hd_h^{\pi}(s,a)}{\sum_{h=1}^Hd_h^*(s)}.$$
    
    \underline{Worst-case environments:} For all environments, $CR^H\ge \max\brc*{\frac{1}{SAH},\frac{1}{A^H}}$. Also, for any $\delta\in(0,1)$ there exist stationary environments with rewards over $[0,1]$ s.t. if $S=A^n+1$ for $n\in\brc*{0,\dots,H-1}$, then $CR^H(P,r)\le \frac{1+\delta}{(H-\log_A(S-1))\cdot(A-1)(S-1)}$, and if $S\ge A^H-1$, then $CR^H(P,r)\le \frac{1+\delta}{A^H}$.
\end{theorem}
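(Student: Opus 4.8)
The three parts build on one another, and I would prove them in order, leaning throughout on the occupancy formula \Cref{eq: value as occupancy} and on the ``optimally navigate to each state separately'' bound \Cref{eq: full info value upper bound}.

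\textbf{Part I (worst-case distributions).} By \Cref{remark:CR for worst-case distribution} the numerator is fixed at $V^{0,*}(P,r)=\max_{\pi\in\Pi^{\Mcal}}\sum_{(h,s,a)\in\X}d_h^{\pi}(s,a)r_h(s,a)$, so the entire task is to show $\sup_{\Rcal^H\sim\D(r)}V^{H,*}(P,r)=\sum_{(h,s,a)\in\X}d_h^*(s)r_h(s,a)$. The inequality ``$\le$'' is exactly \Cref{eq: full info value upper bound}, valid for every distribution with the prescribed means. For ``$\ge$'' I would evaluate $V^{H,*}$ on long-shot rewards $R\sim LS_\epsilon(r)$: conditioning on the event that exactly one reward realizes, say at coordinate $(h_0,s_0,a_0)$ (probability $\epsilon(1-\epsilon)^{SAH-1}$ per coordinate), the full-lookahead agent can simply navigate to $(s_0,a_0)$ at step $h_0$ and collect $d_{h_0}^*(s_0)r_{h_0}(s_0,a_0)/\epsilon$ in expectation, while the ``two or more realized rewards'' event contributes only $O(\epsilon)$ after the $1/\epsilon$ rescaling. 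Summing over $(h_0,s_0,a_0)$ gives $V^{H,*}(LS_\epsilon(r))\ge(1-\epsilon)^{SAH-1}\sum_{(h,s,a)\in\X}d_h^*(s)r_h(s,a)$, which tends to the upper bound as $\epsilon\to0$; dividing by the numerator yields the stated formula.

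\textbf{Part II (worst-case expectations).} Starting from $CR^H(P)=\inf_{r}\max_{\pi}\frac{\sum d_h^{\pi}(s,a)r_h(s,a)}{\sum d_h^*(s)r_h(s,a)}$ (Part I), I would exchange $\inf_r$ and $\max_\pi$ via a minimax theorem. To make this legitimate I would (i) pass from $\Pi^{\Mcal}$ to the compact convex polytope of Markovian occupancy measures, making the objective linear in the decision variable; and (ii) use scale-invariance in $r$ to restrict to the hyperplane $\sum_{(h,s,a)\in\X}d_h^*(s)r_h(s,a)=1$ and then discard the coordinates with $d_h^*(s)=0$ (on which every occupancy also vanishes), leaving a compact reward set on which the objective is the bilinear form $d^{\pi}\!\cdot\! r$, so that Sion's minimax theorem applies; this is the step I would isolate as a separate lemma. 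The inner problem $\min\{d^{\pi}\!\cdot\! r:\ r\ge0,\ \sum d_h^*(s)r_h(s,a)=1\}$ is a linear program whose optimum is attained at a vertex supported on a single coordinate, i.e.\ $r_h(s,a)=1/d_h^*(s)$, with value $d_h^{\pi}(s,a)/d_h^*(s)$; minimizing over vertices gives $\min_{(h,s,a)\in\X}\frac{d_h^{\pi}(s,a)}{d_h^*(s)}$, and reinstating the outer maximum over occupancies gives the claim. The stationary variant is the identical computation with $r\in[0,1]^{SA}$ and the occupancies summed over $h$.

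\textbf{Part III (worst-case environments) and the main obstacle.} For the two lower bounds I would invoke Part II: for $CR^H\ge\frac{1}{SAH}$, evaluate $\max_\pi\min_{(h,s,a)}\frac{d_h^{\pi}(s,a)}{d_h^*(s)}$ at the policy $\pi_u$ whose occupancy is the uniform mixture $\frac{1}{SAH}\sum_{(h',s',a')\in\X}d_h^{\pi^*_{h',s',a'}}$ of reach-maximizing policies (a valid occupancy by convexity of the polytope), noting $d_h^{\pi_u}(s,a)\ge\frac{1}{SAH}d_h^{\pi^*_{h,s,a}}(s,a)=\frac{1}{SAH}d_h^*(s)$; and for $CR^H\ge\frac{1}{A^H}$, a backward induction on $h$ that compares the Bellman recursions for the no-lookahead and full-lookahead value-to-go and loses at most a factor $A$ per step, since the lookahead agent still only chooses among $A$ actions at each step. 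For the upper bound I would build the ``delayed tree'' MDP: an $A$-ary tree of depth $\approx\log_A S$ (incomplete for general $S$, and of depth $\approx H$ once $S$ is large enough), deterministic downward transitions, a single self-loop ``delay'' action at the root, and long-shot rewards at the leaves; a no-lookahead agent can only commit to one leaf and collects $\approx r$, whereas by Part I the full-lookahead value equals $r\sum_{(h,\ell,a):\,\ell\text{ leaf}}d_h^*(\ell)$, and the delay action forces $d_h^*(\ell)=1$ for every leaf $\ell$ and every $h$ in a window of length $\approx H-\log_A S$ (resp.\ all $H$), so taking the ratio and letting $\epsilon\to0$ produces both displayed bounds. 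I expect this upper bound to be the crux: one must verify that the delay really hands the full-lookahead agent $\Theta(H-\log_A S)$ independent opportunities to catch the (essentially unique) realized long-shot reward, and carry out the state-budget bookkeeping for incomplete trees so the bound holds for every $S$; by comparison Part I is a direct computation and Part II is routine once the normalization makes the minimax applicable.
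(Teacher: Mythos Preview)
Your proposal is correct and mirrors the paper's proof: long-shots achieving the bound of \Cref{eq: full info value upper bound} for Part I, the minimax swap on normalized rewards for Part II (packaged in the paper as \Cref{lemma: minmax for CR}, solved there via KKT rather than your vertex argument, but equivalently), and for Part III the mixture policy $\pi_u$, the Bellman backward induction losing a factor $A$ per step, and the delayed-tree construction of \Cref{prop: worst case env example}. The only notable deviation is that for the tree upper bound you plug directly into Part I's formula $CR^H(P,r)=V^{0,*}/\sum d_h^*(s)r_h(s,a)$ rather than redoing the long-shot calculation on Bernoulli rewards as the paper does; this is slightly cleaner and in fact removes the need for the $1+\delta$ slack in the stated bound.
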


\begin{proof}

\textbf{Worst-case distribution:} We already saw in \Cref{eq: full info value upper bound} that for all reward distributions   
\begin{align*}
    V^{H,*}(P,r)\le \sum_{(h,s,a)\in\X}d_h^*(s)r_h(s,a).
\end{align*}
We now show that for any $\delta>0$, there exists a distribution such that 
\begin{align*}
    V^{H,*}(P,r)\ge (1-\delta)\sum_{(h,s,a)\in\X}d_h^*(s)r_h(s,a).
\end{align*}
This would imply that 
\begin{align*}
    \sup_{\Rcal^H\sim\D(r)}V^{H,*}(P,r) =\sum_{(h,s,a)\in\X}d_h^*(s)r_h(s,a)
\end{align*}
and conclude this part of the proof (by \Cref{remark:CR for worst-case distribution} and \Cref{eq: value as occupancy}).

Let $\epsilon\in(0,1)$ and assume long-shot reward distribution $R\sim LS_\epsilon(r)$. For any $(h,s,a)\in\X$, define the event that a positive reward was realized just in $(h,s,a)$:
$$\G_{h,s,a} = \brc*{R_h(s,a)=\frac{r_h(s,a)}{\epsilon}, \forall (h',s',a')\ne (h,s,a): R_{h'}(s',a')=0}.$$
Under any of these events, the value of the optimal full lookahead agent is 
\begin{align*}
    \E\brs*{\max_{\pi\in\Pi^{\Mcal}}\sum_{(h',s',a')\in\X}d_h^\pi(s',a')R_{h'}(s',a')\big\vert \G_{h,s,a}}
    &= \E\brs*{\max_{\pi\in\Pi^{\Mcal}}d_h^\pi(s,a)\frac{r_h(s,a)}{\epsilon}\big\vert \G_{h,s,a}} 
    = \frac{r_h(s,a)}{\epsilon}d_h^*(s).
\end{align*}
Now, notice that each of these mutually exclusive events $\G_{h,s,a}$ occur w.p. $\epsilon\br*{1-\epsilon}^{SAH-1}$, and that the value is non-negative when none of them occur. Hence, for this reward distribution,
\begin{align}
    \label{eq: full lookahead longshots}
    V^{H,*}(P,r) 
    &\geq \sum_{(h,s,a)\in\X}\Pr\brc*{\G_{h,s,a}}\E\brs*{\max_{\pi\in\Pi^{\Mcal}}\sum_{(h',s',a')\in\X}d_h^\pi(s',a')R_{h'}(s',a')\big\vert \G_{h,s,a}} \nonumber\\
    & = \sum_{(h,s,a)\in\X}\epsilon\br*{1-\epsilon}^{SAH-1}\frac{r_h(s,a)}{\epsilon}d_h^*(s) \nonumber\\
    & = \br*{1-\epsilon}^{SAH-1}\sum_{(h,s,a)\in\X}d_h^*(s)r_h(s,a).
\end{align}
Setting $\epsilon = 1-\br*{1-\delta}^{\frac{1}{SAH-1}}\approx \frac{\delta}{SAH}$ leads to the desired bound and concludes this part of the proof.

\textbf{Worst-case reward expectations:} Before proving the results, we remark on the choice to limit reward expectations to $[0,1]$. The main motivation for doing so is the ubiquity of this boundedness assumption in the literature of RL, but in fact, it is only a matter of convention and has no real impact. Indeed, since CR is invariant to scaling, the same result would directly hold for any bounded interval $[0,C]$. Furthermore, as explained in \Cref{remark: unbounded expected rewards}, the result would also hold under the less restrictive assumptions that reward expectations are just non-negative. 

We proof both results using \Cref{lemma: minmax for CR}. As a first step, we highlight that the only dependence of the optimization problem in the Markovian policy $\pi$ is through the occupancy measure $d_h^\pi$. Therefore, denoting the set of all occupancy measures induced by a Markovian policy with transition kernel $P$ by $D=D^{\Mcal}(P)$, the problem can be reformulated as 
\begin{align}
    \label{eq: CR worst-expectation full info}
    CR^H(P)
    &=\inf_{r_h\in[0,1]^{SA}}CR^H(P,r) \nonumber\\
    &= \inf_{r_h\in[0,1]^{SA}}\max_{\pi\in\Pi^{\Mcal}}\frac{\sum_{(h,s,a)\in\X}d_h^{\pi}(s,a)r_h(s,a)}{\sum_{(h,s,a)\in\X}d_h^*(s)r_h(s,a)}\nonumber\\
    &= \inf_{r_h\in[0,1]^{SA}}\max_{d\in D^{\Mcal}(P)}\frac{\sum_{(h,s,a)\in\X}d_h(s,a)r_h(s,a)}{\sum_{(h,s,a)\in\X}d_h^*(s)r_h(s,a)}.
\end{align}
The set of the possible occupancy measures is convex and compact in $\R_+^{SAH}$ \citep{altman2021constrained}, so we can apply \Cref{lemma: minmax for CR} with $\alpha_{h,s,a}=d_h^*(s)$, $y_{h,s,a}=d_h(s,a)$ and $x_{h,s,a}=r_h(s,a)$, resulting with 
\begin{align*}
    CR^H(P)
    &= \max_{d\in D^{\Mcal}(P)}\min_{(h,s,a)\in\X}\frac{d_h(s,a)}{d_h^*(s)}
    = \max_{\pi\in \Pi^{\Mcal}}\min_{(h,s,a)\in\X}\frac{d_h^{\pi}(s,a)}{d_h^*(s)},
\end{align*}
where we again used the equivalence between optimizing over Markovian policies and their occupancy measures.

For stationary rewards, where $r_h(s,a)=r(s,a)$ for all $(h,s,a)\in\X$, we rewrite \Cref{eq: CR worst-expectation full info} as 
\begin{align*}
    CR^H(P)
    = \inf_{r_h\in[0,1]^{SA}}\max_{d\in D^{\Mcal}(P)}\frac{\sum_{(s,a)\in\Scal\times\Acal}\br*{\sum_{h=1}^Hd_h(s,a)}r(s,a)}{\sum_{(s,a)\in\Scal\times\Acal}\br*{\sum_{h=1}^Hd_h^*(s)}r(s,a)}.
\end{align*}
Now, another application of \Cref{lemma: minmax for CR} with $\alpha_{s,a}=\sum_{h=1}^Hd_h^*(s)$, $y_{s,a}=\sum_{h=1}^Hd_h(s,a)$ and $x_{s,a}=r(s,a)$ yields 
\begin{align*}
    CR^H(P)
    &= \max_{d\in D^{\Mcal}(P)}\min_{(s,a)\in\Scal\times\Acal}\frac{\sum_{h=1}^Hd_h(s,a)}{\sum_{h=1}^Hd_h^*(s)}    =\max_{\pi\in\Pi^{\Mcal}}\min_{(s,a)\in\Scal\times\Acal}\frac{\sum_{h=1}^Hd_h^{\pi}(s,a)}{\sum_{h=1}^Hd_h^*(s)},
\end{align*}
which is the desired result for stationary environments.
\clearpage

\textbf{Worst-case environment -- lower bound:} We now derive the lower bound $CR^H\ge \max\brc*{\frac{1}{SAH},\frac{1}{A^H}}$. We prove it for nonstationary environments, so in particular, it also holds for stationary ones.

Recall that by definition, for any $(h,s,a)\in\X$, $d_h^*(s,a)$ is the occupancy measure of a Markovian policy that maximizes the visitation probability in $(h,s,a)$, and let $\pi^*_{h,s,a}$ be a Markovian policy that achieves this occupancy. Since the set of occupancy measures induced by Markovian policies is convex \citep{altman2021constrained}, there exists a Markovian policy $\pi_u\in\Pi^{\Mcal}$ such that its occupancy measure is the average of all these occupancies, namely, for all $(h,s,a)\in\X$,
\begin{align*}
    d_h^{\pi_u}(s,a) 
    = \frac{1}{SAH}\sum_{(h',s',a')\in\X}d_h^{\pi^*_{h',s',a'}}(s,a).
\end{align*}
Using the previous part of the theorem, for all environments $P$, it holds that 
\begin{align*}
    CR^H(P)
    &= \max_{\pi\in \Pi^{\Mcal}}\min_{(h,s,a)\in\X}\frac{d_h^{\pi}(s,a)}{d_h^*(s)}\\
    &\ge \min_{(h,s,a)\in\X}\frac{d_h^{\pi_u}(s,a)}{d_h^*(s)}\\
    &= \min_{(h,s,a)\in\X}\frac{\frac{1}{SAH}\sum_{(h',s',a')\in\X}d_h^{\pi^*_{h',s',a'}}(s,a)}{d_h^*(s)} \\
    &\overset{(*)}{\geq} \min_{(h,s,a)\in\X}\frac{\frac{1}{SAH}d_h^{\pi^*_{h,s,a}}(s,a)}{d_h^*(s)}\\
    &= \frac{1}{SAH}\min_{(h,s,a)\in\X}\frac{d_h^{*}(s,a)}{d_h^*(s)}\\
    & = \frac{1}{SAH}.
\end{align*}
In $(*)$, we discard all the (non-negative) terms in the summation where $(h',s',a')\ne (h,s,a)$, while in the following equalities, we use the definition of $\pi^*_{h,s,a}$ and the fact that $d_h^*(s,a)=d_h^*(s)$.
As this inequality holds for all environments, it also implies that $CR^H\geq \frac{1}{SAH}$. 

To prove that $CR^H\geq\frac{1}{A^H}$, we take a different approach and go back to the Bellman equations. Denote by $\bar{V}_h^*(s\vert R)$, the optimal value of a full lookahead policy, starting from timestep $h\in[H]$ and state $s\in\Scal$, and given reward realization $R$. Therefore, the value of the full lookahead agent is given by $V^{H,*}=\E_{R,s\sim\mu}\brs*{\bar{V}_1^*(s\vert R)}$. Similarly, denote the standard value with no lookahead information starting from timestep $h\in[H]$ and state $s\in\Scal$ by $V_h^{0,*}(s)$. As previously explained, given reward realizations, the optimal full lookahead policy is Markovian, so both values can be calculated using the following Bellman equations for all $s\in\Scal$ and $h\in[H]$:
\begin{align*}
    &\bar{V}_h^*(s\vert R) = \max_{a\in\Acal}\brc*{R_h(s,a)+\sum_{s'\in\Scal}P_h(s'\vert s,a)\bar{V}_{h+1}^*(s'\vert R)},
    &\bar{V}_{H+1}^*(s\vert R) = 0 \\
    &V_h^{0,*}(s)= \max_{a\in\Acal}\brc*{r_h(s,a)+\sum_{s'\in\Scal}P_h(s'\vert s,a)V_{h+1}^{0,*}(s')},
    & V_{H+1}^{0,*}(s) = 0.
\end{align*}
We prove by backward induction that for all $h\in[H+1]$ and $s\in\Scal$, $\E\brs*{\bar{V}_{h}^*(s\vert R)}\le A^{H+1-h}V_h^{0,*}(s)$. Specifically, using this relation for $h=1$ and taking the expectation over the initial state distribution would imply that $V^{H,*}\le A^{H}V^{0,*}$, regardless of the environment, and thus $CR^H\geq\frac{1}{A^H}$.

As the base of the induction, see that the claim trivially holds for $h=H+1$, where all values are $0$. Next, for any $h\in[H]$ and $s\in\Scal$, given that the claim holds for all states in step $h+1$, we have
\begin{align*}
    \E\brs*{\bar{V}_{h}^*(s\vert R)}
    &= \E\brs*{\max_{a\in\Acal}\brc*{R_h(s,a)+\sum_{s'\in\Scal}P_h(s'\vert s,a)\bar{V}_{h+1}^*(s'\vert R)}} \\
    & \leq\E\br*{\sum_{a\in\Acal}\brc*{R_h(s,a)+\sum_{s'\in\Scal}P_h(s'\vert s,a)\bar{V}_{h+1}^*(s'\vert R)}} \\
    & = \sum_{a\in\Acal}\br*{r_h(s,a) + \sum_{s'\in\Scal}P_h(s'\vert s,a)\E\brs*{\bar{V}_{h+1}^*(s'\vert R)}} \\
    & \overset{(*)}{\leq} \sum_{a\in\Acal}\br*{r_h(s,a) + A^{H-h}\sum_{s'\in\Scal}P_h(s'\vert s,a)V_{h+1}^{0,*}(s')} \\
    & \leq A^{H-h}\sum_{a\in\Acal}\br*{r_h(s,a) + \sum_{s'\in\Scal}P_h(s'\vert s,a)V_{h+1}^{0,*}(s')}\\
    & \leq A^{H-h} \cdot A\max_{a\in\Acal}\brc*{r_h(s,a) + \sum_{s'\in\Scal}P_h(s'\vert s,a)V_{h+1}^{0,*}(s')} \\
    & = A^{H+1-h}V_h^{0,*}(s),
\end{align*}
where throughout the derivation, we use the fact that all rewards (and thus the values) are non-negative and $(*)$ is due to the induction hypothesis. This concludes the proof of the lower bound in the statement, namely, that for all dynamics and rewards, it holds that $CR^H\ge \max\brc*{\frac{1}{SAH},\frac{1}{A^H}}$.

\textbf{Worst-case environment -- upper bound:} see \Cref{prop: worst case env example}, where we present a tree-like stationary environment for which the aforementioned bounds are near-tight.
\end{proof}

\clearpage

\section{Proofs for Multi-Step Lookahead Agents}
\label{appendix: proofs multi-step}
\oracleValueMultiStep*
\begin{proof}
We start by lower-bounding the optimal value in the presence of long-shot rewards. Then, we prove a matching upper value for all rewards and $L$-step lookahead policies.

\textbf{Lower bound on the value of long-shots.} Let $\epsilon>0$ and assume that $R\sim LS_\epsilon(r)$, namely, that for any $(h,s,a)\in\X$, a reward of $r_h(s,a)/\epsilon$ is generated with probability $\epsilon$; otherwise, the reward would be zero. Let $\pi\in\Pi^{\Mcal}$ be any Markovian policy that does not observe future rewards and let $\tilde{\pi}\in\Pi^L$ be a policy that plays $\pi$ if all the $L$-step future rewards are zero and otherwise optimally navigates to one strictly positive reward (ties broken arbitrarily). In particular, if only one long-shot reward is realized at $(h,s,a)$, this policy would play $\pi$ until timestep $t_L(h)=\max\brc*{h-L+1,1}$ and then maximize the reaching probability from $s_{t_L(h)}$ to $s_h=s$. If the agent successfully reaches $s_h=s$, it will play $a_h=a$ and collect the reward. 

The value of $\tilde{\pi}$ can be lower-bounded by the value that at most one long-shot is realized; Denoting 
$$\G_{h,s,a} = \brc*{R_h(s,a=\frac{r_h(s,a)}{\epsilon}, \forall (h',s',a')\ne (h,s,a): R_h(s,a)=0},$$
the event that a reward was realized only in $(h,s,a)\in\X$, we bound
\begin{align*}
     V^{L,*}(P,r)
     &\geq V^{L,\tilde{\pi}}(P,r)\\
     &= \E\brs*{\sum_{h'=1}^H R_{h'}(s_{h'},a_{h'})\big\vert \tilde\pi}\\
     & \overset{(1)}{\geq} \sum_{(h,s,a)\in\X}\E\brs*{\sum_{h'=1}^H R_{h'}(s_{h'},a_{h'})\big\vert \tilde\pi, \G_{h,s,a}} \Pr\brc*{\G_{h,s,a}} \\
     & = \sum_{(h,s,a)\in\X}\E\brs*{\frac{r_h(s,a)}{\epsilon}\ind{s_h=s,a_h=a}\big\vert \tilde\pi, \G_{h,s,a}} \Pr\brc*{\G_{h,s,a}} \\
     & \overset{(2)}{=} \sum_{(h,s,a)\in\X}\sum_{s'\in\Scal}\Pr\brc*{s_{t_L(h)}=s'\vert\pi}\max_{\pi'\in\Pi^{\Mcal}} \Pr\brc*{s_h=s,a_h=a\vert s_{t_L(h)}=s',\pi'}\frac{r_h(s,a)}{\epsilon}\Pr\brc*{\G_{h,s,a}}  \\
     & \overset{(3)}{=} \sum_{(h,s,a)\in\X}\sum_{s'\in\Scal}d_{t_L(h)}^\pi(s')d_h^*(s\vert s_{t_L(h)}=s')\frac{r_h(s,a)}{\epsilon}\Pr\brc*{\G_{h,s,a}} \\
     & \overset{(4)}{=} \sum_{(h,s,a)\in\X}\sum_{s'\in\Scal}d_{t_L(h)}^\pi(s')d_h^*(s\vert s_{t_L(h)}=s')\frac{r_h(s,a)}{\epsilon}\cdot\epsilon\br*{1-\epsilon}^{SAH-1} \\
     & \geq e^{-\epsilon SAH}\sum_{(h,s,a)\in\X}r_h(s,a)\sum_{s'\in\Scal}d_{t_L(h)}^\pi(s')d_h^*(s\vert s_{t_L(h)}=s').
\end{align*}
In $(1)$, we use the facts that the events $\G_{h,s,a}$ are disjoint and the rewards are non-negative. Next, in $(2)$, we decompose to steps until $t_L(h)$, where we play $\pi$, and steps from $t_L(h)$ to $h$, where we try to maximize reaching probability to $(s,a)$ at timestep $h$. Notice that the reward is independent of the transition, so the optimal reaching policy is Markovian. Relation $(3)$ replaces the probability notation to conditional occupancy measure and $(4)$ substitutes the probability of the events. 
Maximizing over $\pi$ and taking the limit of small $\epsilon$, we get a lower bound of 
\begin{align*}
     \sup_{\Rcal^H\sim\D(r)}V^{L,*}(P,r)
     &\geq \sup_{\epsilon>0}\max_{\pi\in\Pi^{\Mcal}}e^{-\epsilon SAH}\sum_{(h,s,a)\in\X}r_h(s,a)\sum_{s'\in\Scal}d_{t_L(h)}^\pi(s')d_h^*(s\vert s_{t_L(h)}=s') \\
     &= \max_{\pi\in\Pi^{\Mcal}}\sum_{(h,s,a)\in\X}r_h(s,a)\sum_{s'\in\Scal}d_{t_L(h)}^\pi(s')d_h^*(s\vert s_{t_L(h)}=s').
\end{align*}

\textbf{Upper bound on the value of all reward distributions.} 
For any fixed lookahead policy $\pi\in\Pi^L$ and any reward distribution, we bound
\begin{align*}
     V^{L,\pi}& (P,r) 
     = \E\brs*{\sum_{h=1}^H R_h(s_h,a_h)\vert \pi} \\
     & = \sum_{(h,s,a)\in\X} \E\brs*{R_h(s,a) \ind{s_h=s,a_h=a}\vert \pi} \\
     & = \sum_{(h,s,a)\in\X} \sum_{s'\in\Scal}\Pr\brc*{s_{t_L(h)}=s'\vert\pi}\E\brs*{R_h(s,a) \ind{s_h=s,a_h=a}\vert \pi,s_{t_L(h)}=s'} \\
     & = \sum_{(h,s,a)\in\X} \sum_{s'\in\Scal}d_{t_L(h)}^{\pi}(s')\E\brs*{R_h(s,a) \Pr\brc*{s_h=s,a_h=a\vert \pi,s_{t_L(h)}=s',R_h(s,a)}\vert \pi,s_{t_L(h)}=s'} \\
     & \leq \sum_{(h,s,a)\in\X} \sum_{s'\in\Scal}d_{t_L(h)}^{\pi}(s')\E\brs*{R_h(s,a) \max_{\pi^*\in\Pi^L}\Pr\brc*{s_h=s,a_h=a\vert \pi^*,s_{t_L(h)}=s',R_h(s,a)}\big\vert \pi,s_{t_L(h)}=s'} \\
     & \overset{(1)}{=}\sum_{(h,s,a)\in\X} \sum_{s'\in\Scal}d_{t_L(h)}^{\pi}(s')d_h^*(s\vert s_{t_L(h)}=s')\E\brs*{R_h(s,a) \vert \pi,s_{t_L(h)}=s'} \\
     & \overset{(2)}{=}\sum_{(h,s,a)\in\X} \sum_{s'\in\Scal}d_{t_L(h)}^{\pi}(s')d_h^*(s\vert s_{t_L(h)}=s')r_h(s,a) \\
     & \leq \max_{\pi^*\in\Pi^L} \sum_{(h,s,a)\in\X} \sum_{s'\in\Scal}d_{t_L(h)}^{\pi^*}(s')d_h^*(s\vert s_{t_L(h)}=s')r_h(s,a) \\
     & \overset{(3)}{=} \max_{\pi^*\in\Pi^{\Mcal}} \sum_{(h,s,a)\in\X} \sum_{s'\in\Scal}d_{t_L(h)}^{\pi^*}(s')d_h^*(s\vert s_{t_L(h)}=s')r_h(s,a)
\end{align*}
Relation $(1)$ holds since the state dynamics are independent of the rewards realization and the maximal reaching probability is $d_h^*(s\vert s_{t_L(h)}=s')$. 
Relation $(2)$ holds because we reach the state at timestep $t_L(h)$ \emph{just before} seeing $R_h(s,a)$; therefore, the two variables are independent. Finally, relation $(3)$ holds since we can rewrite the value as 
$$ \max_{\pi^*\in\Pi^L} \sum_{i=1}^H \sum_{s'\in\Scal} d_i^{\pi^*}(s') \sum_{(h,s,a)\in\X}\ind{t_L(h)=i}d_h^*(s\vert s_{t_L(h)}=s')r_h(s,a).  $$
This expression is equivalent to the optimal value of a no-lookahead agent whose expected reward at any $(i,s',a')\in\X$ is $\sum_{(h,s,a)\in\X}\ind{t_L(h)=i}d_h^*(s\vert s_{t_L(h)}=s')r_h(s,a)$, 
so there exists a Markovian policy that maximizes this value.
\end{proof}

\clearpage

\begin{theorem}
\label[theorem]{theorem:multistepCRExtended}[CR versus Multi-Step Lookahead Agents]
For any $L\in[H]$, let $t_L(h)=\max\brc*{h-L+1,1}$. Then, it holds that

\underline{Worst-case distributions:} $CR^L(P,r) =\frac{\max_{\pi\in\Pi^{\Mcal}}\sum_{(h,s,a)\in\X}r_h(s,a)d_h^{\pi}(s,a)}{ \max_{\pi\in\Pi^{\Mcal}}\sum_{(h,s,a)\in\X}r_h(s,a)\sum_{s'\in\Scal}d_{t_L(h)}^\pi(s')d_h^*(s\vert s_{t_L(h)}=s')}.$

\underline{Worst-case reward expectations:}  
        $$CR^L(P) =\min_{\pi^*\in\Pi^{\Mcal}}\max_{\pi\in\Pi^{\Mcal}}\min_{(h,s,a)\in\X}\frac{d_h^{\pi}(s,a)}{\sum_{s'\in\Scal}d_{t_L(h)}^{\pi^*}(s')d_h^*(s\vert s_{t_L(h)}=s')}.$$
        If the reward expectations are stationary ($r_h(s,a)=r(s,a)$), then 
        $$CR^L(P) =\min_{\pi^*\in\Pi^{\Mcal}}\max_{\pi\in\Pi^{\Mcal}}\min_{(s,a)\in\Scal\times\Acal}\frac{\sum_{h=1}^Hd_h^{\pi}(s,a)}{\sum_{h=1}^H\sum_{s'\in\Scal}d_{t_L(h)}^{\pi^*}(s')d_h^*(s\vert s_{t_L(h)}=s')}.$$

\underline{Worst-case environments:} For all environments, $CR^L\ge \max\brc*{\frac{1}{SAH},\frac{1}{(H-L+1)A^L}}$. Also, for any $\delta\in(0,1)$ there exist stationary environments with rewards over $[0,1]$ s.t. if $S=A^n+1$ for $n\in\brc*{0,\dots,L-1}$, then $CR^L(P,r)\le \frac{1+\delta}{(H-\log_A(S-1))\cdot(A-1)(S-1)}$, and if $S\ge A^L+1$, then $CR^L(P,r)\le \frac{1+\delta}{(H-L+1)(A^L-1)}$.
\end{theorem}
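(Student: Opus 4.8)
The plan is to mirror the structure of the full lookahead proof (Theorem~\ref{theorem:fullInfoCRExtended}), using \Cref{prop: multi-step value} in place of \Cref{eq: full info value upper bound} as the starting point. \textbf{Part I} is immediate: by \Cref{remark:CR for worst-case distribution} we write $CR^L(P,r) = V^{0,*}(P,r)/\sup_{\Rcal^H\sim\D(r)}V^{L,*}(P,r)$, then substitute \Cref{eq: value as occupancy} in the numerator and \Cref{prop: multi-step value} in the denominator. No further work is needed.

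\textbf{Part II.} First I would rewrite $CR^L(P,r)$ by pulling the $\max_{\pi^*}$ out of the denominator: since the denominator is itself a maximum over $\pi^*\in\Pi^{\Mcal}$, we have
\[
CR^L(P,r) = \min_{\pi^*\in\Pi^{\Mcal}}\max_{\pi\in\Pi^{\Mcal}}\frac{\sum_{(h,s,a)\in\X}r_h(s,a)d_h^{\pi}(s,a)}{\sum_{(h,s,a)\in\X}r_h(s,a)\sum_{s'\in\Scal}d_{t_L(h)}^{\pi^*}(s')d_h^*(s\vert s_{t_L(h)}=s')}.
\]
Then I take $\inf_{r}$ of both sides. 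The inner $\max_\pi$ over the fixed-$\pi^*$ ratio has exactly the form handled by \Cref{lemma: minmax for CR}: set $\alpha_{h,s,a}=\sum_{s'}d_{t_L(h)}^{\pi^*}(s')d_h^*(s\vert s_{t_L(h)}=s')$ (these are fixed once $\pi^*$ is fixed), $y_{h,s,a}=d_h^\pi(s,a)$, $x_{h,s,a}=r_h(s,a)$, and use that the set of Markovian occupancy measures is a convex compact polytope \citep{altman2021constrained}. This converts $\inf_r\max_\pi(\cdot)$ into $\max_\pi\min_{(h,s,a)}d_h^\pi(s,a)/\alpha_{h,s,a}$. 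The one subtlety, absent in the full lookahead case, is the outer $\min_{\pi^*}$: to legitimately interchange $\inf_r$ with $\min_{\pi^*}$ I need the minimum over $\pi^*$ to actually be attained, not just an infimum --- this is exactly what \Cref{lemma: min-inf for CR} provides, using compactness of the occupancy polytope. The stationary case is identical after grouping the sums over $h$, applying \Cref{lemma: minmax for CR} with $\alpha_{s,a}=\sum_h\sum_{s'}d_{t_L(h)}^{\pi^*}(s')d_h^*(s\vert s_{t_L(h)}=s')$.

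\textbf{Part III.} The lower bound $CR^L\ge 1/(SAH)$ is inherited verbatim from the full lookahead case, since $V^{L,*}\le V^{H,*}$ gives $CR^L\ge CR^H\ge 1/(SAH)$. For $CR^L\ge 1/((H-L+1)A^L)$, I would construct a single Markovian mixture policy $\pi_u$ and a family of reward functions $r^i$ ($i\in[H-L+1]$) such that $r^i$ is supported only on timesteps $i,\dots,i+L-1$ (the ``newly revealed'' window when the lookahead agent first reaches step $i$). The policy $\pi_u$ is a convex combination (in occupancy space, valid by convexity of the polytope) of policies $\pi_i$ that play the value-maximizing Markovian policy of \Cref{prop: multi-step value} up to step $i$, then switch to maximizing $r^i$. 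Bounding $V^{0,\pi_u}(P,r)/V^{L,*}(P,r)$ from below by $\frac{1}{H-L+1}\min_{i,s'}$ of per-window ratios, I then observe each such ratio is precisely a $CR^H$ against a horizon-$L$ full lookahead agent starting from $s_i=s'$, hence at least $1/A^L$ by the $CR^H\ge 1/A^H$ bound of \Cref{theorem:fullInfoCRExtended} applied with horizon $L$. The upper bound reuses the delayed-tree construction of \Cref{prop: worst case env example}: with a reward realized at a leaf of a depth-$(\le L)$ tree, a no-lookahead agent picks one leaf while the $L$-lookahead agent can both time the entrance (giving $\approx H-L+1$ attempts) and navigate to the realized leaf ($A^L$ factor), yielding the claimed bound, with the incomplete-tree variant handling general $S$.

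\textbf{Main obstacle.} The technically delicate point is Part~II's interchange of $\inf_r$ and $\min_{\pi^*}$: one must argue the outer optimum over $\pi^*$ is attained \emph{after} the $\inf_r$ is resolved, which requires both \Cref{lemma: min-inf for CR} and care that the coefficients $\alpha_{h,s,a}(\pi^*)$ depend continuously on $\pi^*$ over the compact polytope. For Part~III the subtle step is verifying that the reduction of each per-window ratio to a genuine horizon-$L$ full lookahead $CR$ is exact --- i.e.\ that restricting attention to the window $[i,i+L-1]$ and conditioning on $s_i=s'$ faithfully reproduces the quantity $CR^H$ of \Cref{theorem:fullInfoCRExtended} with $H$ replaced by $L$, so that its lower bound applies unchanged.
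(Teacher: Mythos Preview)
Your proposal is correct and matches the paper's proof essentially step for step: Part~I is immediate from \Cref{prop: multi-step value}; Part~II rewrites the denominator as an outer $\min_{\pi^*}$, swaps the two infima, applies \Cref{lemma: minmax for CR} to the inner $\inf_r\max_\pi$, and then \Cref{lemma: min-inf for CR} for attainment of the outer optimum; Part~III inherits $1/(SAH)$ from $CR^L\ge CR^H$, reduces the $1/((H-L+1)A^L)$ bound to per-window horizon-$L$ full lookahead ratios via the mixture policy $\pi_u$ and rewards $r^i$, and reuses \Cref{prop: worst case env example} for the upper bound. One small clarification on Part~II: interchanging $\inf_r$ and $\inf_{\pi^*}$ is automatic (two infima always commute), so \Cref{lemma: min-inf for CR} is needed not for the swap but only afterward, to show that the resulting outer $\inf_{\pi^*}$ of $\max_\pi\min_{(h,s,a)} d_h^\pi(s,a)/\alpha_{h,s,a}(\pi^*)$ is actually attained as a $\min$.
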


\begin{proof}
\textbf{Worst-case distribution:} This part of the theorem is a directly corollary of \Cref{prop: multi-step value}, applied with \Cref{remark:CR for worst-case distribution} and \Cref{eq: value as occupancy}. We remark that we assume w.l.o.g. that $r_h(s,a)>0$ for at least one reachable $(h,s,a)\in\X$ (i.e., $d_h^*(s,a)>0$). Otherwise, both values in the numerator and denominator equal zero and the ratio is defined as $+\infty$.

\textbf{Worst-case reward expectations:} As in  the proof of \Cref{theorem:fullInfoCRExtended}, we start by rewriting the maximization problems in the competitive ratio using, $D^{\Mcal}(P)$ the set of occupancy measures induced by the transition kernel $P$ and all Markovian policies:
\begin{align}
    \label{eq: CR worst-expectation multistep}
     CR^L(P)
    &=\inf_{r_h\in[0,1]^{SA}}CR^L(P,r) \nonumber\\
    &= \inf_{r_h\in[0,1]^{SA}}\frac{\max_{\pi\in\Pi^{\Mcal}}\sum_{(h,s,a)\in\X}r_h(s,a)d_h^{\pi}(s,a)}{ \max_{\pi^*\in\Pi^{\Mcal}}\sum_{(h,s,a)\in\X}r_h(s,a)\sum_{s'\in\Scal}d_{t_L(h)}^{\pi^*}(s')d_h^*(s\vert s_{t_L(h)}=s')}\nonumber\\
    &= \inf_{r_h\in[0,1]^{SA}}\min_{\pi^*\in\Pi^{\Mcal}}\max_{\pi\in\Pi^{\Mcal}}\frac{\sum_{(h,s,a)\in\X}r_h(s,a)d_h^{\pi}(s,a)}{ \sum_{(h,s,a)\in\X}r_h(s,a)\sum_{s'\in\Scal}d_{t_L(h)}^{\pi^*}(s')d_h^*(s\vert s_{t_L(h)}=s')}\nonumber\\ 
    &= \inf_{r_h\in[0,1]^{SA}}\min_{d'\in D^{\Mcal}(P)}\max_{d\in D^{\Mcal}(P)}\frac{\sum_{(h,s,a)\in\X}r_h(s,a)d_h(s,a)}{ \sum_{(h,s,a)\in\X}r_h(s,a)\sum_{s'\in\Scal}d'_{t_L(h)}(s')d_h^*(s\vert s_{t_L(h)}=s')}\nonumber\\ 
    &= \inf_{d'\in D^{\Mcal}(P)}\inf_{r_h\in[0,1]^{SA}}\max_{d\in D^{\Mcal}(P)}\frac{\sum_{(h,s,a)\in\X}r_h(s,a)d_h(s,a)}{ \sum_{(h,s,a)\in\X}r_h(s,a)\sum_{s'\in\Scal}d'_{t_L(h)}(s')d_h^*(s\vert s_{t_L(h)}=s')} .
\end{align}
Continuing following the proof of \Cref{theorem:fullInfoCRExtended}, we use the convexity and compactness of the set of occupancy measures to apply \Cref{lemma: minmax for CR} on the two internal problems, this time with $\alpha_{h,s,a}=\sum_{s'\in\Scal}d'_{t_L(h)}(s')d_h^*(s\vert s_{t_L(h)}=s')$. Doing so results with 
\begin{align*}
    CR^L(P)
    &= \inf_{d'\in D^{\Mcal}(P)}\max_{d\in D^{\Mcal}(P)}\min_{(h,s,a)\in\X}\frac{d_h(s,a)}{\sum_{s'\in\Scal}d'_{t_L(h)}(s')d_h^*(s\vert s_{t_L(h)}=s')}.
\end{align*}
At this point, we deviate from the previous proof and analyze the external optimization problem. In particular, we want to show that the minimum is obtained in the set of Markovian policies. We prove it using \Cref{lemma: min-inf for CR}. For its application, notice that $D^{\Mcal}(P)$ is a convex and compact polytope, and therefore so does its linear transformation $\Pcal=\brc*{\sum_{s'\in\Scal}d'_{t_L(h)}(s')d_h^*(s\vert s_{t_L(h)}=s')\vert d\in D^{\Mcal}(P)}$, 
so the conditions of the lemma hold: the infimum is obtained at a minimizer in the set. Substituting it back into $CR^L(P)$ and using the equivalence between occupancy measures and policies leads to the desired result:
\begin{align*}
    CR^L(P)
    &= \min_{d'\in D^{\Mcal}(P)}\max_{d\in D^{\Mcal}(P)}\min_{(h,s,a)\in\X}\frac{d_h(s,a)}{\sum_{s'\in\Scal}d'_{t_L(h)}(s')d_h^*(s\vert s_{t_L(h)}=s')} \tag{\Cref{lemma: min-inf for CR}}\\
    &= \min_{\pi^*\in \Pi^{\Mcal}}\max_{\pi\in \Pi^{\Mcal}}\min_{(h,s,a)\in\X}\frac{d_h^{\pi}(s,a)}{\sum_{s'\in\Scal}d_{t_L(h)}^{\pi^*}(s')d_h^*(s\vert s_{t_L(h)}=s')}.
\end{align*}

For stationary rewards, where $r_h(s,a)=r(s,a)$, we rewrite \Cref{eq: CR worst-expectation multistep} as 
\begin{align*}
    CR^L(P)
    = \inf_{d'\in D^{\Mcal}(P)}\inf_{r_h\in[0,1]^{SA}}\max_{d\in D^{\Mcal}(P)}\frac{\sum_{(s,a)\in\Scal\times\Acal}\br*{\sum_{h=1}^Hd_h(s,a)}r(s,a)}{\sum_{(s,a)\in\Scal\times\Acal}\br*{\sum_{h=1}^H\sum_{s'\in\Scal}d'_{t_L(h)}(s')d_h^*(s\vert s_{t_L(h)}=s')}r(s,a)}.
\end{align*}
We can now reapply \Cref{lemma: minmax for CR} with the appropriate $\alpha_{s,a}\!=\!\sum_{h=1}^H\sum_{s'\in\Scal}d'_{t_L(h)}(s')d_h^*(s\vert s_{t_L(h)}=s')$, followed by applying \Cref{lemma: min-inf for CR}, to get
\begin{align*}
    CR^L(P)
    &= \inf_{d'\in D^{\Mcal}(P)}\max_{d\in D^{\Mcal}(P)}\min_{(s,a)\in\Scal\times\Acal}\frac{\sum_{h=1}^Hd_h(s,a)}{\sum_{h=1}^H\sum_{s'\in\Scal}d'_{t_L(h)}(s')d_h^*(s\vert s_{t_L(h)}=s')} \tag{\Cref{lemma: minmax for CR} } \\
    &= \min_{\pi^*\in \Pi^{\Mcal}}\max_{\pi\in \Pi^{\Mcal}}\min_{(s,a)\in\Scal\times\Acal}\frac{\sum_{h=1}^Hd_h^{\pi}(s,a)}{\sum_{h=1}^H\sum_{s'\in\Scal}d_{t_L(h)}^{\pi^*}(s')d_h^*(s\vert s_{t_L(h)}=s')}. \tag{\Cref{lemma: min-inf for CR}}
\end{align*}

\textbf{Worst-case environment -- lower bound:} 
First notice that by definition, any $L$-step lookahead policy is also a full lookahead policy. In particular, for all environments, $V^{L,*}(P,r)\leq V^{H,*}(P,r)$, and the reverse relation would hold for the CR. Thus, from \Cref{theorem:fullInfoCR}, we directly get the lower bound $CR^L\ge CR^H\ge \frac{1}{SAH}$. We further proof that $CR^L\ge \frac{1}{(H-L+1)A^L}$
using a reduction to the full lookahead case. 

To this end, we start by decomposing the no-lookahead value of any $\pi\in\Pi^{\Mcal}$ as follows
\begin{align*}
    V^{0,\pi}(P,r) 
    &= \sum_{(h,s,a)\in\X}r_h(s,a)d_h^{\pi}(s,a) \\
    & = \sum_{(h,s,a)\in\X}r_h(s,a)\Pr\brc*{s_h=s,a_h=a} \\
    & =\sum_{(h,s,a)\in\X}r_h(s,a)\sum_{s'\in\Scal}\Pr\brc*{s_{t_L(h)}=s'}\Pr\brc*{s_h=s,a_h=a\vert s_{t_L(h)}=s'} \\
    & = \sum_{(h,s,a)\in\X}r_h(s,a)\sum_{s'\in\Scal}d_{t_L(h)}^{\pi}(s')d_h^{\pi}(s,a\vert s_{t_L(h)}=s') \\
    & = \sum_{s'\in\Scal}d_1^\pi(s')\sum_{h=1}^L\sum_{(s,a)\in\Scal\times\Acal}r_h(s,a)d_h^{\pi}(s,a\vert s_1=s')\\
    &\quad+ \sum_{i=2}^{H-L+1}\sum_{s'\in\Scal}d_i^\pi(s')\sum_{(s,a)\in\Scal\times\Acal}r_{i+L-1}(s,a)d_{i+L-1}^{\pi}(s,a\vert s_h=s'). 
\end{align*}
In the last inequality, we decompose the summation into two terms depending on whether $t_L(h)\triangleq i=1$ or $t_L(h)>1$. For brevity, let $\brc*{r^i}_{i=1}^{H-L+1}$ be such that for all $(h,s,a)\in\X$,
\begin{align*}
    &r^1_h(s,a) = r_h(s,a)\ind{h\in[L]},\\
    &r^i_h(s,a) = r_h(s,a)\ind{h=i+L-1}, \quad\forall i\in\brc*{2,\dots H-L+1}.
\end{align*}
Using this notation, one could rewrite the value as
\begin{align}
    \label{eq: standard value decomp to lookahead}
    V^{0,\pi}(P,r)  = \sum_{i=1}^{H-L+1}\sum_{s'\in\Scal}d_i^\pi(s')\sum_{(s,a)\in\Scal\times\Acal}\sum_{h=i}^{i+L-1} r_h^i(s,a)d_{h}^{\pi}(s,a\vert s_i=s'). 
\end{align}
Notice that $r_h^i$ are the expected rewards of timesteps observed by the lookahead agent at step $i$. 
We now define the following set of policies
\begin{itemize}
    \item A Markovian policy that maximizes the $L$-lookahead value is denoted by     
    $$\pi^*\in\argmax_{\pi\in\Pi^{\Mcal}}\sum_{(h,s,a)\in\X}r_h(s,a)\sum_{s'\in\Scal}d_{t_L(h)}^\pi(s')d_h^*(s\vert s_{t_L(h)}=s').$$
    \item For any $i\in[H-L+1]$, let $\pi_i$ be a Markovian policy that plays $\pi^*$ until reaching some state $s_i$ and then continues by a policy that maximizes the reward function $r^i$ for the following $L$ timesteps:     $$\pi_i\in\argmax_{\pi\in\Pi^{\Mcal}}\sum_{(s,a)\in\Scal\times\Acal}\sum_{h=i}^{i+L-1}r_h^i(s,a)d_{h}^{\pi}(s,a\vert s_i).$$
    For $i=1$, the state $s_1$ would be the initial state, generated from the initial state distribution. Notice that starting for the $i^{th}$ timestep, $\pi_i$ is an optimal policy given rewards $r^i$ in the standard MDP model, so there exists an optimal Markovian policy that maximizes its value simultaneously for all $s_i\in\Scal$. By ignoring all but the $i^{th}$ term in \Cref{eq: standard value decomp to lookahead}, one could clearly see that 
    \begin{align*}
        V^{0,\pi_i}
        &\geq \sum_{s'\in\Scal}d_i^{\pi^*}(s')\sum_{(s,a)\in\Scal\times\Acal}\sum_{h=i}^{i+L-1}r_h^i(s,a)d_{h}^{\pi_i}(s,a\vert s_i=s') \\
        & = \sum_{s'\in\Scal}d_i^{\pi^*}(s')\max_{\pi\in\Pi^{\Mcal}}\sum_{(s,a)\in\Scal\times\Acal}\sum_{h=i}^{i+L-1}r_h^i(s,a)d_{h}^{\pi}(s,a\vert s_i=s').
    \end{align*}
    \item All aforementioned policies are Markovian, so by the convexity of the occupancies induced by Markovian policies \citep{altman2021constrained}, there exists $\pi_u\in\Pi^{\Mcal}$ such that for all $(h,s,a)\in\X$, 
    $$d_h^{\pi_u}(s,a) = \frac{1}{H-L+1}\sum_{i=1}^{H-L+1}d_h^{\pi_i}(s,a).$$
\end{itemize}
Since values are linear in the occupancy measure, we can bound the optimal no-lookahead value by
\begin{align}
    \label{eq: no-lookahead value multistep decomp}
    V^{0,*} &(P,r)
    \geq V^{0,\pi_u} (P,r) \nonumber\\
    &= \frac{1}{H-L+1}\sum_{i=1}^{H-L+1}V^{0,\pi_i} \nonumber\\
    &\geq \frac{1}{H-L+1}\sum_{i=1}^{H-L+1}\sum_{s'\in\Scal}d_i^{\pi^*}(s')\max_{\pi\in\Pi^{\Mcal}}\sum_{(s,a)\in\Scal\times\Acal}\sum_{h=i}^{i+L-1}r_h^i(s,a)d_{h}^{\pi}(s,a\vert s_i=s').
\end{align}
Moving forwards, we use a similar decomposition to the $L$-lookahead value using \Cref{prop: multi-step value}:
\begin{align}
    \label{eq: L-lookahead value multistep decomp}
    V^{L,*}(P,r) 
    &\leq \max_{\pi\in\Pi^{\Mcal}}\sum_{(h,s,a)\in\X}r_h(s,a)\sum_{s'\in\Scal}d_{t_L(h)}^\pi(s')d_h^*(s\vert s_{t_L(h)}=s') \nonumber\\
    & = \sum_{(h,s,a)\in\X}r_h(s,a)\sum_{s'\in\Scal}d_{t_L(h)}^{\pi^*}(s')d_h^*(s\vert s_{t_L(h)}=s') \nonumber\\
    &= \sum_{s'\in\Scal}d_1(s')\sum_{h=1}^L\sum_{(s,a)\in\Scal\times\Acal}r_h(s,a)d_h^*(s\vert s_1=s')\nonumber\\
    &\quad+ \sum_{i=2}^{H-L+1}\sum_{s'\in\Scal}d_i^{\pi^*}(s')\sum_{(s,a)\in\Scal\times\Acal}r_{i+L-1}(s,a)d_{i+L-1}^*(s\vert s_i=s') \nonumber \\
    & = \sum_{i=1}^{H-L+1}\sum_{s'\in\Scal}d_i^{\pi^*}(s')\sum_{(s,a)\in\Scal\times\Acal}\sum_{h=i}^{i+L-1}r_h^i(s,a)d_h^*(s\vert s_i=s').
\end{align}
To conclude the reduction, recall the inequality $\frac{\sum_i \alpha_ix_i}{\sum_i\alpha_i y_i}\geq \min_{i}\brc*{\frac{x_i}{y_i}}$, which holds for all values of $x_i,y_i,\alpha_i\ge0$ s.t. $\sum_i\alpha_i>0$, due to the quasiconcavity of the ratio of linear functions (given the convention that $x/0=+\infty$). Applying this on the CR with the coefficients $\alpha_{i,s'}=d_i^{\pi^*}(s')$ and using \Cref{eq: no-lookahead value multistep decomp,eq: L-lookahead value multistep decomp}, we get for all environments that
\begin{align*}
    CR^L(P,r)
    &= \frac{V^{0,*}(P,r)}{V^{L,*}(P,r)} \\
    &\geq \frac{1}{H-L+1}
    \min_{\substack{i\in[H-L+1],\\ s'\in\Scal}}\brc*{\frac{\max_{\pi\in\Pi^{\Mcal}}\sum_{(s,a)\in\Scal\times\Acal}\sum_{h=i}^{i+L-1}r_h^i(s,a)d_h^{\pi}(s,a\vert s_i=s')}{\sum_{(s,a)\in\Scal\times\Acal}\sum_{h=i}^{i+L-1}r_h^i(s,a)d_h^*(s\vert s_i=s') }}\\
    & \overset{(*)}{\geq} \frac{1}{(H-L+1)A^L}.
\end{align*}
The last inequality is the reduction to the full lookahead: each of the terms is exactly the CR versus a full lookahead agent with horizon $L$ and reward expectations $r^i$ (see \Cref{theorem:fullInfoCR}, part 1). Thus, each of the terms is lower-bounded by the bound for the worst-case environment given horizon $L$ (see \Cref{theorem:fullInfoCR}, part 3) -- by $\frac{1}{A^L}$.

\textbf{Worst-case environment -- upper bound:} as in \Cref{theorem:multistepCR}, this part of the proof is covered in \Cref{prop: worst case env example}, where we present a tree-like stationary environment with the stated behavior. 
\end{proof}

\clearpage

\section{Analyzing the Competitive Ratio of Specific Environments}



\subsection{Upper-Bounds for Reward Lookahead -- Delayed Trees}
\begin{figure}[h]
    \centering
    \includegraphics[width=1\linewidth]{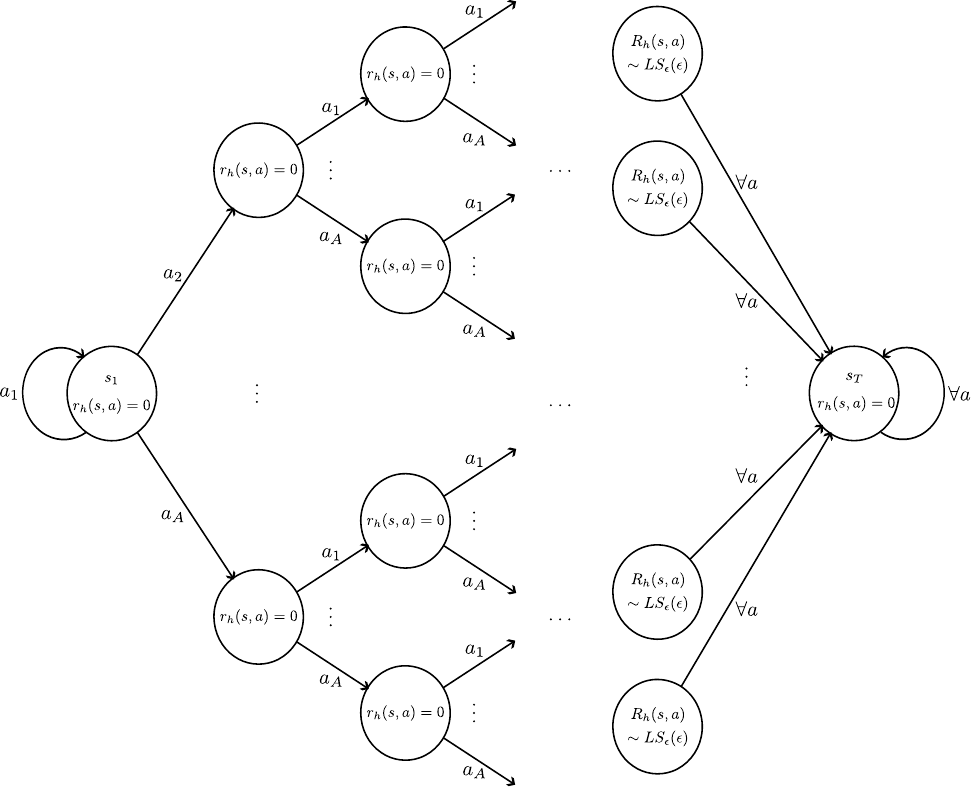} 
     \caption{A near-worst-case environment: tree-like MDP. An agent can decide to stay at the root of the tree, but once it starts to traverse the tree, it must navigate to one of its leaves, from which it moves to a non-rewarding terminal state. All leaves have long-shot rewards, while all other nodes yield no reward.}
     \label{figure:lower-bound-tree-mdp}
\end{figure}
\begin{proposition}
    \label{prop: worst case env example}
    For any $L\in[H]$ and any $\delta\in(0,1)$, there exist stationary environments with rewards over $[0,1]$ s.t. if $S=A^n+1$ for $n\in\brc*{0,\dots,L-1}$, then $CR^L\le \frac{1+\delta}{(H-\log_A(S-1))\cdot(A-1)(S-1)}$, and if $S\ge A^L+1$, then $CR^L\le \frac{1+\delta}{(H-L+1)(A^L-1)}$. Moreover, if $L=H$ and $S\ge A^H-1$, there exists an environment s.t. $CR^H\le \frac{1+\delta}{A^H}$
\end{proposition}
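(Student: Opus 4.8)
The plan is to exhibit, in each of the three regimes, one stationary \emph{delayed-tree} MDP (with a long-shot reward) whose ratio $CR^L(P,r)$ already meets the claimed upper bound; since $CR^L=\inf_{P,\mu}\inf_r CR^L(P,r)$, a single such environment per regime suffices. Fix $\delta\in(0,1)$. The environment I would use has a root state $s_0$; the internal nodes and leaves of an $A$-ary tree of depth $d$ (where $d=n=\log_A(S-1)$ in the first regime, $d\approx L$ in the second, and $d\approx H$ when $L=H$); and an absorbing, zero-reward terminal state $s_T$. At $s_0$ one action is a \emph{delay} self-loop and the remaining $A-1$ actions descend to the $A-1$ children; at each internal node all $A$ actions descend deterministically to distinct children; at each leaf every action carries a reward with distribution $LS_\epsilon(r)$ of common expectation and transitions to $s_T$, where $s_T$ self-loops with zero reward. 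Rescaling the common expectation so that long-shot realizations lie in $[0,1]$ does not change the CR, so this is a legitimate instance. Counting nodes, the first construction has $1+\sum_{k=1}^{n}(A-1)A^{k-1}+1=A^n+1$ states, matching the hypothesis $S=A^n+1$; the analogous count gives $A^L+1$ states in the second regime, and for intermediate values of $S$ one truncates to an incomplete tree and pads with unreachable dummy states (the extension indicated in the closing remark).

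The second step evaluates the no-lookahead value. Since every rewarding (leaf, action) pair shares one expectation and a reward can be collected at most once (every rewarding action leads to $s_T$, and the delay action only wastes time), the optimal Markovian policy descends to an arbitrary leaf and collects one reward in expectation: $V^{0,*}(P,r)=r$. This follows immediately from the Bellman equations for the instance and needs no real work.

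The heart of the argument is a lower bound on $\sup_{\Rcal^H\sim\D(r)}V^{L,*}(P,r)$, which by \Cref{remark:CR for worst-case distribution} is exactly the denominator of $CR^L(P,r)$. I would analyse the explicit $L$-lookahead policy that plays the delay action while at $s_0$ and, at every timestep $h$ spent at the root, inspects its window $\Rcal_h^L$: if that window exhibits a positive realization at a leaf and a timestep $h'$ with $h'-h$ equal to the tree depth $d$, the policy commits, descends to that leaf, arrives precisely at time $h'$, collects $r/\epsilon$, and absorbs into $s_T$. The depth $d$ is chosen (with $d\le L-1$ in the first regime, and analogously in the others) exactly so that the arrival time of every reachable leaf lies inside the lookahead window; hence, whenever a \emph{unique} positive long-shot realization occurs at a ``catchable'' (time, leaf, action) slot, this policy collects it. Writing $N$ for the number of catchable slots --- the number of admissible launch times $h$ (which is $H-d$, up to a boundary correction responsible for the ``$+1$'' in $H-L+1$) times the number of rewarding leaf-actions --- and using that long-shot realizations at distinct $(h,s,a)$ are independent, the $N$ events ``only this slot is hot'' are disjoint, each has probability $\epsilon(1-\epsilon)^{SAH-1}$, and on each the policy collects $r/\epsilon$; hence $\sup_{\Rcal}V^{L,*}(P,r)\ge rN(1-\epsilon)^{SAH-1}\to rN$ as $\epsilon\to0$, and choosing $\epsilon$ small enough (depending on $\delta,S,A,H$) makes $(1-\epsilon)^{SAH-1}\ge 1/(1+\delta)$. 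Equivalently, each launch window is a fresh full-lookahead problem with horizon $d$, which is the reduction to \Cref{theorem:fullInfoCR} also used in the multi-step analysis.

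Combining, $CR^L(P,r)\le V^{0,*}(P,r)/\sup_{\Rcal}V^{L,*}(P,r)\le (1+\delta)/N$. In the first regime $N=(H-n)\cdot (A-1)A^{n-1}\cdot A=(H-\log_A(S-1))(A-1)(S-1)$; in the second regime the analogous count gives $N=(H-L+1)(A^L-1)$; and taking $L=H$ (so the delay yields no extra attempts and one is left with a single traversal over all $\approx A^H$ rewarding leaf-actions) gives $\tfrac{1+\delta}{A^H}$, which incidentally shows the lower bound $CR^H\ge 1/A^H$ of \Cref{theorem:fullInfoCR} is essentially tight. I expect the main obstacle to be exactly the bookkeeping in the third step: selecting the tree depth so that \emph{every} relevant realization is genuinely catchable from the root, tracking the boundary timesteps that produce the ``$+1$'' in $H-L+1$, verifying that the node counts come out to exactly $A^n+1$ and $A^L+1$, and --- for general $S$ --- checking that an incomplete tree still yields the same counts. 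The probabilistic part (that two or more simultaneous positive realizations have vanishing probability and that each single-realization catchable event is in fact caught) is routine once the timing is pinned down.
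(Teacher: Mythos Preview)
Your plan is essentially the paper's proof: the same delayed-tree construction with long-shot (Bernoulli) leaf rewards, the same delay-then-commit lookahead policy, and the same slot-counting $N$. The one technical variation is that you lower-bound $V^{L,*}$ via the disjoint single-realization events (as in the proof of \Cref{prop: multi-step value}), whereas the paper uses the simpler complementary bound $V^{L,*}\ge 1-(1-\epsilon)^N$, exploiting that each realized Bernoulli reward equals~$1$; both routes coincide as $\epsilon\to 0$. One bookkeeping point to watch: for $L=H$ the paper \emph{discards} the self-loop and builds a full $A$-ary tree (root with $A$ children), which is what produces exactly $A^H$ leaf-action slots rather than the $(A-1)A^{H-1}$ your uniform description would give.
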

\begin{proof}
Assume that $S=A^n+1$ for some $n\in\N$. We divide the proof into different cases, depending on the values of $n$ and $L$.

\textbf{Case 1:} $L\in[H]$ and $n\in[1,L-1]$. To prove this bound, we design a tree MDP with an additional option to decide \emph{when} to traverse it, as illustrated in \Cref{figure:lower-bound-tree-mdp}. In particular, assume that the fixed initial state $s_1$ is the root of a tree of depth $n+1$ such that the root has $A-1$ descendants and all other nodes have $A$ descendants. Thus, the number of nodes in this tree is
$$1 +(A-1)\sum_{i=1}^{n}A^{i-1}= 1 + (A-1)\frac{A^{n}-1}{A-1}=A^n,$$
and the number of leaves is $(A-1)A^{n-1}$. Assuming that after traversing the tree, the environment moves to a terminal state $s_T$, this environment could indeed be represented using $S=A^n+1$ states. We denote the dynamics of this tree by $P$.

For the dynamics, we allocate one action in the root of the tree that keeps the agent at the root, while the rest of the actions allow traversing through the tree. At the leaves, all actions transition to a terminal state $s_T$. We emphasize that once an agent has decided to start traversing the tree, it has to continue all the way until the leaves (and terminal state), so the decision when to traverse the tree is taken at its root. Finally, the reward of any action at any leaf is a long-shot $LS_\epsilon(\epsilon)$, namely Bernoulli-distributed with probability $\epsilon$. In particular, this distribution is bounded in $[0,1]$. 

In this example, any agent with no lookahead information will perform at most one action at a single leaf, independently of the reward realization, thus collecting in expectation no more than the expected reward of a single leaf $V^{0,*}\le \E\brs*{LS_\epsilon(\epsilon)}=\epsilon$.

On the other hand, an $L$-lookahead agent could start traversing the tree only when a reward will be realized upon its arrival to the leaf. To reach a leaf at timestep $h$, the agent has to start traversing the tree at timestep $h-n$. Thus, this agent will wait at the root to see whether a reward is realized in any leaf at timesteps $\brc*{n+1, \dots, H}$, and only if so, will traverse it. 

Since there are $(A-1)A^{n-1}$ leaves with $A$ actions each, the probability that no reward is realized in any leaf at these timesteps is $\br*{1-\epsilon}^{(H-n)\cdot(A-1)A^{n-1}\cdot A}$, and the optimal lookahead agent would collect an expected reward of at least
\begin{align*}
    V^{L,*} 
    &\ge 1 - \br*{1-\epsilon}^{(H-n)\cdot(A-1)A^n} \\
    &\ge (H-n)\cdot(A-1)A^n\epsilon -\br*{(H-n)\cdot(A-1)A^n}^2\epsilon^2,
\end{align*}
    where the last inequality is since $(1-x)^n\le 1-nx+n^2x^2$. 
    
Combining both inequalities, for this environment we have that
\begin{align*}
    CR^L(P,r)
    &= \frac{V^{0,*}}{V^{L,*}} \\
    &\le \frac{\epsilon}{(H-n)\cdot(A-1)A^n\epsilon -\br*{(H-n)\cdot(A-1)A^n}^2\epsilon^2} \\
    & = \frac{1}{(H-n)\cdot(A-1)A^n -\br*{(H-n)\cdot(A-1)A^n}^2\epsilon}.
\end{align*}
In particular, for any $\delta>0$, we could fix $\epsilon$ small enough such that 
\begin{align*}
    CR^L(P,r) \le \frac{1+\delta}{(H-\log_A(S-1))\cdot(A-1)(S-1)},
\end{align*}
where we used the relation $S=A^n+1$.

\textbf{Case 2:} $L\in[H]$ and $n=0$. This is the case of $S=2$. We separate it for the clarity of presentation, but the example remains the same: the first state $s_1$ is the initial state and the second $s_2=s_T$ is a terminal non-rewarding state. When in $s_1$, a single action does not change the state but yields no reward, while all other $A-1$ actions transition the environment to state $s_2$, giving a long-shot reward $LS_\epsilon(\epsilon)$. As in the first case, without any lookahead information, the agent could collect a reward at most once and obtain in expectation at most  $V^{0,*}\le \E\brs*{LS_\epsilon(\epsilon)}=\epsilon$.

On the other hand, any lookahead agent would move from $s_1$ to $s_2$ only when a reward is realized. Since there are $A-1$ rewarding actions and $H$ opportunities to collect rewards, a lookahead agent could collect at least 
\begin{align*}
    V^{L,*} 
    \ge 1 - \br*{1-\epsilon}^{H(A-1)} 
    \ge H(A-1)\epsilon - H^2(A-1)^2\epsilon^2,
\end{align*}
where the last inequality is again due to the inequality $(1-x)^n\le 1-nx+n^2x^2$.

Combining both bounds, we now get
\begin{align*}
    CR^L(P,r) 
    &= \frac{V^{0,*}}{V^{L,*}} 
    \le \frac{\epsilon}{H(A-1)\epsilon - H^2(A-1)^2\epsilon^2} 
    = \frac{1}{H(A-1) - H^2(A-1)^2\epsilon}.
\end{align*}
Thus, for any $\delta>0$, there exist small enough $\epsilon$ such that
\begin{align*}
    CR^L(P,r) \le \frac{1+\delta}{H(A-1)}.
\end{align*}

\textbf{Case 3:} $L\in[H-1]$ and $S\ge A^{L-1}+1$. We use the same example as in the first case and $n=L-1$, ignoring all extra states. Direct substitution to that bound results with
\begin{align*}
    CR^L(P,r) \le \frac{1+\delta}{(H-L+1)\cdot(A-1)(S-1)},
\end{align*}

\textbf{Case 4:} Finally, if $L=H$ and $S\ge A^{H}-1$, we discard the loop at the root and just build a full tree of depth $H$, leading to $A^{H-1}$ leaves (with $A$ actions each). From the root, the full lookahead agent can reach any leaf with a realized reward, which exists with probability $1-(1-\epsilon)^{A^H}$. Following the exact same analysis would now yield $CR^H(P,r) \le \frac{1+\delta}{A^H}$ for any $\delta>0$, concluding the proof.

\textbf{Modification when $S\in [A^n+2, A^{n+1}]$:} In this case, we cannot build a complete tree of depth $\log_A(S-1)+1$. Instead, we start from the complete tree of depth $\floor*{\log_A(S-1)}+1$ and use any extra states to create additional leaves of depth $\ceil*{\log_A(S-1)}+1$. The number of leaves for $S=A^{n}+1$ was $N_0=(A-1)A^{n-1}$. Therefore, $\floor*{\frac{S-N_0}{A}}$ of these leaves will have $A$ descendants in the new tree, increasing the number of leaves by $A-1$ each, while one additional `old' leaf will take the rest of the states. For this reason, the total number of leaves $N$ would be
\begin{align*}
    N 
    &\geq N_0 + (A-1)\floor*{\frac{S-N_0}{A}} + \br*{S-N_0 - A\floor*{\frac{S-N_0}{A}}-1} \\
    & = S - \floor*{\frac{S-N_0}{A}} - 1 \\
    & \geq S\br*{1-\frac{1}{A}}-2.
\end{align*}
Recalling that in each leaf, we have $A$ possible actions, so rewards could be realized in $NA$ locations, and increasing the depth by $1$ (so that the lookahead agent has one less attempt), we can follow the exact same analysis and get a more general bound of
\begin{align*}
    CR^L(P,r) \le \frac{1+\delta}{(H-\ceil*{\log_A(S-1)})\cdot \br*{S(A-1) - 2A}} = \Theta\br*{\frac{1+\delta}{(H-\log_A(S-1))\cdot AS}}
\end{align*}
for any $A+2\leq S\le A^L+1$.
\end{proof}

\subsection{Analysis of Grid MDPs}
\label[appendix]{appendix: grid-mdp}
In the grid MDP, an agent starts at the bottom-left corner of an $n\times n$ grid and can either move up or right until getting to the top-right corner (`Manhattan navigation', see \Cref{subfigure:grid-mdp}). After taking one last action, the interaction ends. We denote the states on the $i^{th}$ column (starting from the left) by $s_{i,1},\dots,s_{i,n}$ (with $s_{i,1}$ as the bottom state) and the states on the $j^{th}$ row (starting from the bottom) by $s_{1,j},\dots,s_{n,j}$ (with $s_{1,j}$ as the leftmost state). At the top edge of the grid, the agent must move right, and at the right edge, it must move up. The size of the state space is $S=n^2$, the action space is of size (at most) $A=2$ and the horizon is $H=2n-1$.

This MDP generalizes the chain MDP with $A=2$, analyzed in \Cref{section: examples}; indeed, by setting the reward to be non-zero only when going up from the bottom row ($s_{1,j}$), we effectively get a chain of length $n$ and a corresponding CR of $ CR^H=CR^1 = \frac{1}{n}=\frac{2}{H+1}$. In particular, the reduction immediately leads to an upper bound of $\Ocal\br*{\frac{1}{H}}$ for $CR^1$ (and $CR^H$), where the bound for one-step is almost worst-case, since $CR^1\ge \frac{1}{AH}=\frac{1}{2H}$. 
Interestingly, this is a near-worst-case reward placement also versus full lookahead for the grid-MDP, as we now prove. 

\begin{figure}[t]
    \centering
    \includegraphics[width=0.75\linewidth]{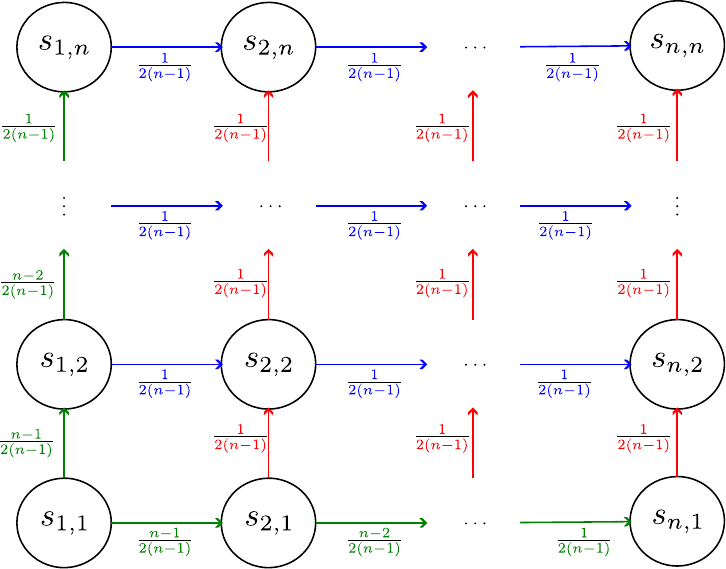} 
     \caption{Illustration of a possible flow on a grid graph, starting from the bottom-left corner and ending at the top-right corner. The first step is to distribute the flow on the bottom and leftmost states, such that there is excess flow of $\frac{1}{2(n-1)}$ flow in each of these states (green). At the leftmost state, this excess flow is sent at a direct line towards the right (blue), while in the bottom row, this flow is sent up (red). Such flow ensures that all edges have a minimal flow of $\frac{1}{2(n-1)}.$}
     \label{figure:grid-mdp flow}
\end{figure}

One way to prove this is to analyze a \emph{flow} on the grid graph, which is equivalent to occupancy in deterministic MDPs. The value of the full lookahead agent corresponds with the maximal possible flow through any edge in the graph, which is the unit flow ($d_{i+j-1}^*(s_{i,j})=1$). Hence, the goal of the no-lookahead agent is to make sure that there is a minimal flow in all the edges of the graph, and this minimum would be the CR. This could be achieved by distributing a flow on the bottom and leftmost states and sending it in straight lines to the other side of the grid, as explained in \Cref{figure:grid-mdp flow}. The resulting flow ensures a minimal flow of $\frac{1}{2(n-1)}$ through all the edges. 
Even more, looking at the flow description, we could explicitly write the stochastic policy that achieves this flow by looking at the ratio of the flow in each direction:
\begin{align*}
    \pi(\text{Move-Right}\vert s_{i,j}) = \left\{
        \begin{array}{lll}
             \frac{1}{2}&  i=j=1 &\mathrm{(start)}\\
             \frac{n-i}{n-i+1}&  i=1,j\in\brc*{2,\dots,n-1} &\mathrm{(bottom)} \\
             \frac{1}{n-j+1}&  j=1,i\in\brc*{2,\dots,n-1} &\mathrm{(leftmost)}\\
             \\
             \frac{1}{2}&  i,j\in\brc*{2,\dots,n-1} &\mathrm{(middle)}\\
             \\
             1&  i=n,j\in\brc*{2,\dots,n-1} &\mathrm{(top)}\\
             \\
             0&  j=n,i\in\brc*{2,\dots,n-1} &\mathrm{(rightmost)}\\
             \frac{1}{2}&  i=j=n &\mathrm{(end)}
        \end{array}
    \right. 
\end{align*}
For this policy, it is easy to prove that the minimal occupancy $d_{i+j-1}^{\pi}(s_{i,j}, a)$ is lower-bounded by $\frac{1}{2(n-1)}$ by directly verifying on the edges of the grid (starting from the bottom and left edges and then continuing to the top and right ones), and then proving with a simple induction that strictly inside the grid, $d_{i+j-1}^{\pi}(s_{i,j})=\frac{1}{n-1}$. This implies that 
\begin{align*}
    CR^H(P) 
    &=\max_{\pi\in\Pi^{\Mcal}}\min_{(h,s,a)\in\X}\frac{d_h^{\pi}(s,a)}{d_h^*(s)}
    = \max_{\pi\in\Pi^{\Mcal}}\min_{(i.j)\in[n]^2,a\in\Acal}d_{i+j-1}^{\pi}(s_{i,j},a) 
    \geq \frac{1}{2(n-1)}
     = \frac{1}{H-1}.
\end{align*}

In particular, for the grid MDP, the worst-case CR for full lookahead is at most worse by a factor of $2$ compared to the CR versus one-step lookahead, similar to the chain MDP. However, in contrast to chains, where the prophet inequality ensures a constant ratio between one-step and full lookahead, in grids, this ratio could depend on $H$. For example, assume long-shot rewards $R\sim LS_\epsilon(1)$ for arbitrarily small $\epsilon$. As we already calculated the value for long-shot rewards, we know that one-step lookahead agents effectively collect all expected rewards along their trajectory (\Cref{eq: one-step value}) -- at most $2H$ rewards -- while the full lookahead agents collect all reachable rewards (\Cref{eq: full lookahead longshots}) -- a total of $\Omega(H^2)$ rewards. At first glance, it might be seen as a contradiction, following a logic that
\begin{align*}
    "\frac{\text{no-lookahead}}{\text{full lookahead}} = \frac{\text{no-lookahead}}{\text{one-step lookahead}}\cdot \frac{\text{one-step lookahead}}{\text{full lookahead}} ",
\end{align*}
but the careful reader would notice that the CRs are derived for very different reward expectations; one CR is calculated for sparse chain-like rewards while the other is calculated for dense rewards where all expectations are equal.

\textbf{Dense rewards.} We end this example by analyzing the CR when rewards are dense -- all rewards are of unit expectation. Since all reward expectations are equal to $1$, regardless of the policy, the value of all no-lookahead agents would trivially be $H$. For the value of $L$-lookahead agents we use \Cref{prop: multi-step value} and rewrite the value by decomposing to different values of $t_L(h)$ as follows: 
\begin{align*}
    \sup_{\Rcal^H\sim\D(1)}V^{L,*}(P,1) 
    &= \max_{\pi\in\Pi^{\Mcal}}\sum_{(h,s,a)\in\X}1\cdot\sum_{s'\in\Scal}d_{t_L(h)}^\pi(s')d_h^*(s\vert s_{t_L(h)}=s') \\ 
    & =\sum_{(s,a)\in\Scal\times\Acal}\sum_{h=1}^Ld_h^*(s)
    + \max_{\pi\in\Pi^{\Mcal}}\brc*{\sum_{t=2}^{H-L+1}\sum_{s'\in\Scal}d_t^\pi(s')\sum_{(s,a)\in\Scal\times\Acal}d_{t+L-1}^*(s\vert s_t=s')} 
\end{align*}
Since the environment is deterministic, all occupancies $d^*$ are binary: one if a state is reachable and zero otherwise. From the initial state, there are $L^2$ reachable states so the first term is equal to $L^2$. For the second term, we bound the number of reachable states after exactly $L$ steps by $L+1$ (all the possible number of 'up' moves between $0$ and $+L$). This yields the bound 
\begin{align*}
    \sup_{\Rcal^H\sim\D(1)}V^{L,*}(P,1) 
    &\leq L^2 + \max_{\pi\in\Pi^{\Mcal}}\brc*{\sum_{t=2}^{H-L+1}\underbrace{\sum_{s'\in\Scal}d_t^\pi(s')}_{=1}(L+1)}    \\
    & = L^2 + (H-L)(L+1)\\
    &\leq H(L+1)
\end{align*}
and result with a CR of $CR^L(P,1) \geq \frac{1}{L+1}$.

This bound is near-tight, again using  \Cref{prop: multi-step value}. For the proof, we focus on a policy $\pi$ that iterates between moving up and right. As previously explained, the number of reachable states when looking $L$ steps forward is $L+1$ if we could perform all combinations of moving up and right. In particular, this is the case as long as we are not too close to the top-right border of the grid. By iterating the movements upwards and rightwards, for any $h\leq H-2L$, we arrive to a state $s_{i,j}$ such that $\max\brc*{i,j}\leq \ceil*{\frac{h}{2}}\leq \ceil*{\frac{2n-1-2L}{2}}\leq n-L$, which ensures we are a distance of at least $L$ from the border. Therefore, we can bound 
\begin{align*}
    \sup_{\Rcal^H\sim\D(1)}V^{L,*}(P,1) 
    &\geq L^2 + \sum_{t=2}^{H-2L}\underbrace{\sum_{s'\in\Scal}d_t^\pi(s')}_{=1}(L+1)    \\
    & = L^2 + \max\brc*{(H-2L-1)(L+1),0}\\
    &=\Omega(HL),
\end{align*}
where the last relation is immediately obtained by looking at either $L< H/4$ or $L\ge H/4$. Thus, we also have that $CR^L(P,1) =\Ocal(1/L)$, and we can conclude that $CR^L(P,1) =\Theta(1/L)$.

\subsection{Upper-Bound for Transition Lookahead}
\label[appendix]{appendix: transition lookahead}
In this appendix, we analyze the competitive ratio versus one-step transition lookahead agents. Formally, at each timestep $h$ and state $s_h$, such agents observe what the next state $s_{h+1}$ would be upon playing any of the actions $a\in\Acal$. We assume that this is the only information available to the agent (namely, the agent has no reward lookahead). We also assume that transitions are generated independently at different timesteps and are independent of the rewards. Notably, even with one-step information, the CR is exponentially small, as stated in the following proposition:

\begin{proposition}
    \label{prop: worst case transition example}
    For any $A\ge2$, $H\ge5$ and $S\ge A^{\br*{1-\frac{1}{e}}H}$, there exists an environment such that the CR versus one-step transition lookahead agents is $CR\le \frac{2}{(A-1)^{\br*{1-\frac{1}{e}}H-3}}$.
\end{proposition}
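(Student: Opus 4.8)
The plan is to exhibit a single tree-structured MDP in which a one-step transition-lookahead agent can navigate perfectly to a unique rewarding leaf, while a no-lookahead agent is forced into an uncontrolled uniform random walk down the tree. Fix a depth $D$ (to be chosen as large as the state budget and the horizon allow, roughly $D\approx(1-1/e)H$) and build a complete $A$-ary tree of depth $D$: the root is the fixed initial state $s_1$, a node at depth $k$ is visited at timestep $k+1$, and there is one extra absorbing, non-rewarding state $s_\perp$. The tree has $\tfrac{A^D-1}{A-1}\le A^D$ internal nodes and $A^D$ leaves, so together with $s_\perp$ the whole MDP uses at most $3A^D$ states. Transitions are stochastic: at timestep $h\le D$, from an internal node $v$, a uniformly random \emph{bijection} $\sigma_{h,v}$ between $\Acal$ and the set of children of $v$ is drawn afresh (independently across $h$, across $v$, and across episodes), and playing $a$ at $v$ moves the agent to the child $\sigma_{h,v}(a)$; every action at a leaf leads to $s_\perp$, which is absorbing. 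Rewards are deterministic: $1$ for playing any action at a single designated goal leaf $g$ and $0$ elsewhere; since $D+1\le H$, the goal leaf is reachable in time.

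First I would upper bound the no-lookahead value. At every internal node, for any fixed action $a$ the next state $\sigma_{h,v}(a)$ is uniform over the children of $v$, and the bijections are never observed; hence, whatever (possibly history-dependent) no-lookahead policy is used, the agent finishes the tree traversal at a uniformly random leaf and so reaches $g$ with probability exactly $A^{-D}$. Since $g$ is the only rewarding state, $V^{0,*}(P,r)\le A^{-D}$. Next I would lower bound the transition-lookahead value: at timestep $h$ in node $v$, a one-step transition-lookahead agent observes the realized next state of every action, i.e., the entire map $\sigma_{h,v}$; since it knows $g$, it can identify the unique child of $v$ lying on the path to $g$ and play the action $\sigma_{h,v}$ sends there. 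Iterating this over the $D$ levels reaches $g$ with probability $1$ and collects reward $1$, so $V^{\mathrm{TL},*}(P,r)\ge 1$, which together with the previous bound gives $CR\le A^{-D}$.

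It then remains to turn $A^{-D}$ into the claimed bound. Choosing $D$ to be the largest integer with $3A^D\le S$ and $D+1\le H$, one has $D\ge(1-1/e)H-3$ (using $S\ge A^{(1-1/e)H}$ and $\log_A 3<2$; for small $H$ the weaker $D\ge 1$ already suffices), and then $A^{-D}\le(A-1)^{-D}\le(A-1)^{-((1-1/e)H-3)}$. The leading factor $2$ in the statement absorbs the degenerate case $A=2$ (where the bound is the trivial $CR\le 1\le 2$) and the slack in rounding $D$, yielding $CR\le 2(A-1)^{-((1-1/e)H-3)}$.

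The conceptual crux — and the step I expect to be the main obstacle — is the decision to couple the per-action transition realizations at each node into a uniformly random \emph{bijection} rather than drawing them independently across actions: with independent realizations the lookahead agent would hit the desired child at a given level only with probability $1-(1-1/A)^A$, which is as low as $1-1/e$, giving $V^{\mathrm{TL},*}$ of order $(1-1/e)^D$ and hence only $CR\le(A(1-1/e))^{-D}$, a bound that is weaker than $(A-1)^{-D}$ for every $A\ge 3$ and therefore fails the claim for large $H$. The bijection coupling is exactly what keeps the lookahead agent's success probability bounded away from zero independently of the depth, and it is consistent with the model since only independence of transitions \emph{across timesteps} is required. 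Everything else — pinning down $D$ so the tree fits in exactly $S$ states while $A^{-D}$ meets the target for all $A\ge 2$ and all $H\ge 5$ (including the small-$H$ edge cases and the base change from $A$ to $A-1$) — is routine bookkeeping.
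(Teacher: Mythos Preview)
Your construction is correct and proves the statement, but it is genuinely different from the paper's. The paper builds an $(A-1)$-ary tree where one action \emph{stays} at the current node and the remaining $A-1$ actions each move to a uniformly random child, with these transition realizations drawn \emph{independently across actions}. Consequently the lookahead agent only finds an action leading toward the goal with probability $p_s=1-(1-\tfrac{1}{A-1})^{A-1}\ge 1-1/e$ at each level; the ``stay'' action lets it retry across the horizon, and a Hoeffding bound shows that with depth $d\approx(1-1/e)H$ it accumulates the required $d-1$ successes with constant probability (hence the $1-1/e$ in the depth and the factor $2\approx 1/(1-e^{-2})$). You instead take a full $A$-ary tree and \emph{couple} the per-action transition realizations into a uniformly random bijection, which makes the lookahead agent's success probability exactly~$1$ at every level and removes the need for the stay action and the concentration step entirely; you then get $CR\le A^{-D}$ directly and only have to downgrade $A$ to $A-1$ and fit $D\ge(1-1/e)H-3$ into the state budget. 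Your route is shorter and yields a nominally stronger bound; the paper's route has the advantage of going through even under the stricter modelling assumption that transition realizations are independent across actions (which the paper does not impose, but which is a natural default). Your identification of the bijection coupling as the crux is exactly the point of divergence.
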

\begin{proof}
    The environment we build is a complete tree of depth $d$ (to be determined), where each node has $A-1$ descendants. The agent always starts at the root of the tree. At each node, the agent can play $A=1$ to stay at the same node, while the rest of the $A-1$ actions move the agent to one of the descendants of the node uniformly at random. Only one leaf has a deterministic unit reward of $R=1$ for all actions, while all other leaves yield no reward. After traversing the tree, the agent moves to a terminal non-rewarding state $s_T$. The total number of states required to create this environment is 
    $$1 + \sum_{i=0}^{d-1}(A-1)^i \leq  2 + (A-1)\sum_{i=0}^{d-2}A^i=A^{d-1}+1\le A^d,$$
    and the number of leaves in the tree is $N = (A-1)^{d-1}$. A no-lookahead agent could not do better than randomly traversing the tree and would obtain an expected reward of at most $V^{0,*}\leq 1/N$.

    On the other hand, one-step transition lookahead agents could choose the following policy: if an action leads in the direction of the rewarding leaf, take it; otherwise, wait in the current node. To obtain the reward, there have to be at least $d-1$ timesteps where an action leads in the right direction, over the span of $H-1$ attempts (one additional round is required to collect the reward. Letting $p_s = 1 - \br*{1-\frac{1}{A-1}}^{A-1}\ge 1-\frac{1}{e}$ be the probability that such an action exists at a certain node (`success'), the value of the one-step lookahead agent would be at least the probability that a binomial distribution $\mathrm{Bin}(n=H-1,p=p_s)$ has at least $d-1$ successes. Setting $d = \floor*{\br*{1-\frac{1}{e}}H}-1$, so that $d-1\leq \br*{1-\frac{1}{e}}(H-1)-1$, we use Hoeffding's inequality to get
    \begin{align*}
        V^{1,*} 
        &\geq \Pr\br*{\mathrm{Bin}\br*{H-1,1-\frac{1}{e}}\ge d-1} \\
        & \geq 1 - \exp\br*{-2\br*{\br*{1-\frac{1}{e}}(H-1)-(d-1)}^2} \\
        & \geq 1-\frac{1}{e^2}.
    \end{align*}
    Therefore, the competitive ratio is upper-bounded for 
    \begin{align*}
        CR \leq \frac{1/(A-1)^{d-1}}{1-\frac{1}{e^2}} 
        \leq \frac{2}{(A-1)^{\br*{1-\frac{1}{e}}H-3}}.
    \end{align*}
We remark that the constraint $A\ge2$ allows building such a tree, while $H\ge5$ ensures a depth of at least $d\ge2$.
\end{proof}

\clearpage

\section{Auxiliary Lemmas}
\begin{lemma}
    \label[lemma]{lemma: minmax for CR}
    Let $d\in\N$ and $\alpha\in\R^{d}_{+}$. Also, let $D\subset\R^d_{+}$ be a convex compact nonempty set. Then,
    \begin{align*}
        \inf_{x\in[0,1]^d}\max_{y\in D}\frac{y^Tx}{\alpha^Tx}
        = \max_{y\in D}\min_{i\in[d]} \frac{y_i}{\alpha_i},
    \end{align*}
    where we define all ratios to be $+\infty$ if the denominator equals zero.
\end{lemma}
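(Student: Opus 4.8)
The plan is to prove the two inequalities $\ge$ and $\le$ separately, after two preliminary reductions. First, if $\alpha=0$ then $\alpha^Tx=0$ for all $x$ and $y_i/\alpha_i=+\infty$ for all $i,y$, so both sides equal $+\infty$; hence I may assume $\alpha\neq0$ and set $I=\brc*{i\in[d]:\alpha_i>0}\neq\emptyset$. For $y\in\R^d_+$ one has $\min_{i\in[d]}y_i/\alpha_i=\min_{i\in I}y_i/\alpha_i$, a finite minimum of continuous nonnegative functions of $y$; since $D$ is nonempty and compact, this attains a maximum $c\in[0,\infty)$ at some $y^\star\in D$, so the right-hand side is a genuine maximum. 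Second, $x\mapsto y^Tx/\alpha^Tx$ is invariant under $x\mapsto\lambda x$ for $\lambda>0$ and equals $+\infty$ at $x=0$, and every nonzero $x\in\R^d_+$ rescales into $[0,1]^d$; therefore the left-hand side is unchanged if the infimum ranges over $\R^d_+\setminus\brc*{0}$ instead of $[0,1]^d$, and I will use that freely.

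For $\ge$: by the choice of $y^\star$ we have $y^\star_i\ge c\,\alpha_i$ for every $i$ (immediate on $I$; on $[d]\setminus I$ both sides reduce to $y^\star_i\ge0=c\alpha_i$, using $c\ge0$). Hence $(y^\star)^Tx\ge c\,\alpha^Tx$ for every $x\in[0,1]^d$, so $(y^\star)^Tx/\alpha^Tx\ge c$ whenever $\alpha^Tx>0$ and equals $+\infty\ge c$ otherwise. Maximizing over $y\in D$ and then taking the infimum over $x$ gives that the left-hand side is $\ge c$.

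For $\le$, I would argue by contradiction: suppose the left-hand side equals $v>c$ and fix $c'\in(c,v)$. Using the scaling reduction, for every $x\in\R^d_+\setminus\brc*{0}$ with $\alpha^Tx>0$ we get $\max_{y\in D}y^Tx\ge v\,\alpha^Tx>c'\,\alpha^Tx$, i.e. $\max_{y\in D}(y-c'\alpha)^Tx>0$. The crux is the claim that some $\bar y\in D$ satisfies $\bar y\ge c'\alpha$ componentwise; granting it, $\min_i\bar y_i/\alpha_i\ge c'>c$ contradicts the maximality of $c$, forcing $v\le c$. To prove the claim I would separate: let $E=D-c'\alpha$, a nonempty compact convex set, and suppose $E\cap\R^d_+=\emptyset$. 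Strictly separating $E$ from the closed convex set $\R^d_+$ produces $w\in\R^d$ with $\sup_{e\in E}w^Te<\inf_{x\in\R^d_+}w^Tx$; since this infimum is $0$ when $w\ge0$ and $-\infty$ otherwise, necessarily $w\ge0$, $w\neq0$, and $\max_{y\in D}(y-c'\alpha)^Tw<0$. Splitting on $\alpha^Tw$: if $\alpha^Tw>0$ this contradicts the displayed consequence of $v>c$ applied with $x=w$; if $\alpha^Tw=0$ then $(y-c'\alpha)^Tw=y^Tw\ge0$ for every $y\in D$ (both $y,w\ge0$), again a contradiction. Hence $E\cap\R^d_+\neq\emptyset$, which proves the claim.

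The one genuinely delicate point — and the reason I would not simply invoke Sion's minimax theorem — is that $x\mapsto y^Tx/\alpha^Tx$ equals $+\infty$ on the entire face $\brc*{\alpha^Tx=0}$, even where $y^Tx=0$, so it fails to be lower semicontinuous there; the scaling reduction together with the separation argument is precisely what lets me work entirely off that face, and the sign case-analysis on $\alpha^Tw$ is what absorbs the zero coordinates of $\alpha$. Everything else (attainment of $c$, the strict separation theorem for a compact and a disjoint closed convex set, the elementary ratio manipulations) is routine.
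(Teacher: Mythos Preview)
Your argument is correct, and it takes a genuinely different route from the paper's. The paper first rewrites the left-hand side as $\inf_{z\ge0,\ \alpha^Tz=1}\max_{y\in D}y^Tz$ (the same rescaling idea you use, but normalized to a hyperplane), then compactifies the feasible set by forcing $z_i=0$ on coordinates with $\alpha_i=0$, applies the minimax theorem to swap the inf and max, and finally solves the inner linear program $\min_{z}y^Tz$ subject to $z\ge0$, $\alpha^Tz=1$ via KKT, obtaining the value $\min_i y_i/\alpha_i$. You instead prove the two inequalities directly: $\ge$ follows from the componentwise bound $y^\star\ge c\,\alpha$, and $\le$ from a separating-hyperplane argument showing that if the infimum strictly exceeded $c$ then $D$ would contain a point dominating $c'\alpha$, contradicting the definition of $c$. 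Your approach is more elementary---it avoids both Sion's theorem and KKT, and your case split on $\alpha^Tw$ cleanly handles the zero coordinates of $\alpha$ that the paper deals with by restricting the domain. The paper's route, on the other hand, is somewhat more constructive: the KKT analysis pins down the structure of the minimizing $x$ (supported on the set where $y_i/\alpha_i$ is minimal), which is not visible in your proof but also not needed for the statement.
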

\begin{proof}  
    We first remark that if $\alpha_i=0$ for all $i\in[d]$, then by the definition of the division by zero, both sides are trivially equal to $+\infty$, and the result holds. Thus, from this point onwards, we assume w.l.o.g. that for some $i_0\in[d]$, it holds that $\alpha_{i_0}>0$.

    \textbf{Step I:} We start from analyzing the l.h.s. problem and showing that
    \begin{align*}
        \inf_{x\in[0,1]^d}\max_{y\in D}\frac{y^Tx}{\alpha^Tx} =\inf_{\substack{z\in\R_+^d\\\alpha^Tz=1}}\max_{y\in D}y^Tz.
    \end{align*}
    Notice that choosing $x_i=\ind{i=i_0}$ leads to a bounded value of $\frac{\max_{y\in D}y_{i_0}}{\alpha_{i_0}}<\infty$, so the value is finite -- there cannot be a solution such that $\alpha^Tx=0$ (and the value is $+\infty$), and we can w.l.o.g add the constraint $\alpha^Tx>0$. We further remark that both the numerator and denominator are always non-negative, so the infimum is bounded from below by $0$. 
    Given that, the internal problem is always well-defined, and the maximizer is given by $y_x\in\arg\max_{y\in D} y^Tx$.

    We next show that the constraints $x\in[0,1]^d, \alpha^Tx>0$ can be replaced by the constraints $x\in\R_+^d,\alpha^Tx=1$. First, for any $x\in[0,1]^d$ s.t. $\alpha^Tx>0$, define $z_x=\frac{x}{\alpha^Tx}\in\R_+^d$, for which $\alpha^Tz=1$ and 
    \begin{align*}
        \max_{y\in D}\frac{y^Tx}{\alpha^Tx} = \max_{y\in D}y^T\frac{x}{\alpha^Tx}
        = \max_{y\in D}y^Tz_x
        \ge \inf_{\substack{z\in\R_+^d\\\alpha^Tz=1}}\max_{y\in D}y^Tz.
    \end{align*}
    Thus, we have the inequality
    \begin{align*}
        \inf_{x\in[0,1]^d}\max_{y\in D}\frac{y^Tx}{\alpha^Tx}
        =\inf_{\substack{x\in[0,1]^d\\ \alpha^Tx>0}}\max_{y\in D}\frac{y^Tx}{\alpha^Tx}
        \ge \inf_{\substack{z\in\R_+^d\\\alpha^Tz=1}}\max_{y\in D}y^Tz.
    \end{align*}
    On the other hand, for any $z\in\R_+^d$ s.t. $\alpha^Tz=1$, define $x_z=\frac{z}{\max_iz_i}$ (which is well defined due to the constraints). For this choice, we get that $x_z\in[0,1]^d$ and $\alpha^Tx_z = \frac{\alpha^Tz}{\max_iz_i}=\frac{1}{\max_iz_i}>0$. In particular, one can write $z = \frac{x_z}{\alpha^Tx_z}$, which implies that
    \begin{align*}
        \max_{y\in D}y^Tz = \max_{y\in D}y^T\frac{x_z}{\alpha^Tx_z}
        = \max_{y\in D}\frac{y^Tx_z}{\alpha^Tx_z}
        \ge \inf_{x\in[0,1]^d}\max_{y\in D}\frac{y^Tx}{\alpha^Tx}.
    \end{align*}
    Therefore, we also have the other inequality
    \begin{align*}
        \inf_{\substack{z\in\R_+^d\\\alpha^Tz=1}}\max_{y\in D}y^Tz
        \ge \inf_{x\in[0,1]^d}\max_{y\in D}\frac{y^Tx}{\alpha^Tx},
    \end{align*}
    which implies equality
    \begin{align*}
         \inf_{x\in[0,1]^d}\max_{y\in D}\frac{y^Tx}{\alpha^Tx}
         = \inf_{\substack{z\in\R_+^d\\\alpha^Tz=1}}\max_{y\in D}y^Tz.
    \end{align*}

    \textbf{Step II:} Applying the minimax theorem.

    The objective is linear in $z,y$ (and thus convex and concave in the variables, respectively), and the set $D$ is convex and compact. The constraint on $z$ is also convex, though not compact, but this is easily fixable; notice that for all $i$ such that $\alpha_i=0$, $z_i$ does not affect the constraint. On the other hand, setting $z_i>0$ can only increase the objective since $y_i,z_i\ge0$. Indeed, for any $z\in\R_+^d$ s.t. $\alpha^Tz=1$, letting $\tilde{z}_i=z_i\ind{\alpha_i>0}$, we have $\alpha^T\tilde{z}=1$ and $y^Tz\le y^T\tilde{z}$. Hence, w.l.o.g., we can always add the constraint that $z_i=0$ for all $i\in[d]$ with $\alpha_i=0$. With this additional constraint, the set $\Zcal=\brc*{z\in\R_+^d\vert \alpha^Tz=1, \forall i\textrm{ s.t. }\alpha_i=0: z_i=0}$ is convex and compact, so the infimum is actually a minimum and we can apply the minimax theorem to obtain
    \begin{align*}
        \inf_{\substack{z\in\R_+^d\\\alpha^Tz=1}}\max_{y\in D}y^Tz
        = \max_{y\in D}\min_{z\in\Zcal}y^Tz
    \end{align*}

    \textbf{Step III:} Solving the internal problem for fixed values of $y$.
    
    At this point, we note that components where $\alpha_i=0$ do not affect either the value or the solution. Therefore, from this point onwards, we assume w.l.o.g that $\alpha_i>0$ for all $i$; we will then apply our results only on the subset of components with $\alpha_i>0$. Given that, we also assume w.l.o.g. that $y_i>0$ for all $i$ -- otherwise, the constraint could be met by letting $z_i>0$ for components with $y_i=0$, which would lead to the optimal value of $0$ (we verify this case at the end of the proof).
    
    Thus, we focus on solving the following problem: for any fixed $y\in\R^d$ s.t. $y_i>0$ for all $i$, solve 
    \begin{align*}
        \begin{array}{cl}
             \min_z & y^Tz \\
             s.t. &  z_i\ge0, \;\forall i\in[d], \\
             & \alpha^Tz=1.
        \end{array}
    \end{align*}
    
    Due to the linearity of both the objective and constraints (in $z$), KKT conditions are both necessary and sufficient for the solution of this problem. Letting $\mu$ and $\lambda$ be the dual variables for the constraints $z\in\R_+^d$ and $\alpha^Tz=1$, respectively, the KKT requires that for all $i\in[d]$,
\begin{align*}
    &y_i - \mu_i - \lambda \alpha_i=0 \tag{stationarity}\\
    & \mu_iz_i = 0 \tag{complementary slackness}\\
    & \mu_i\ge0, \enspace z_i\ge0 \tag{feasibility 1} \\
    & \alpha^Tz=1. \tag{feasibility 2} 
\end{align*}
For the stationarity to hold with the non-negativity of $\mu_i$, we must have that $\lambda \le \min_{i\in[d]} \frac{y_i}{\alpha_i}$. Moreover, if this is a strict inequality, all $\mu_i$ are strictly positive, which leads to the infeasible zero-reward vector (due to the complementary slackness). Therefore, we can conclude that $\lambda = \min_{i\in[d]} \frac{y_i}{\alpha_i}$, and so $\mu_i=0$ only in coordinates where this minimal ratio in achieved. By complementary slackness, $z_i=0$ for the rest of the coordinates.

Substituting in the equality constraint, we get
\begin{align*}
    1 
    \overset{(1)}{=} \sum_{i=1}^d \alpha_iz_i
    \overset{(2)}{=} \sum_{i: \frac{y_i}{\alpha_i}=\lambda} \alpha_iz_i
    = \sum_{i: \frac{y_i}{\alpha_i}=\lambda} \frac{y_i}{\lambda}z_i
    \overset{(2)}{=} \frac{1}{\lambda}\sum_{i=1}^d y_iz_i.
\end{align*}
Explicitly, $(1)$ is by the constraint and $(2)$ is since $z_i=0$ when $\frac{y_i}{\alpha_i}>\lambda$. Reorganizing, we get that  the value of the internal problem is 
\begin{align*}
    \sum_{i=1}^d  y_iz_i
    = \lambda 
    = \min_{i\in[d]} \frac{y_i}{\alpha_i}.
\end{align*}
We end by remarking that when $y_i=0$ for some $i\in[d]$, the value becomes $0$ so that the result also holds in this case. 

\textbf{Summary:}
Combining all parts of the proof, we got 
\begin{align*}
    \inf_{x\in[0,1]^d}\max_{y\in D}\frac{y^Tx}{\alpha^Tx}
    = \max_{y\in D}\min_{i:\alpha_i>0} \frac{y_i}{\alpha_i}
\end{align*}
If we define the internal value to be $+\infty$ when $\alpha_i=0$, we can further write 
\begin{align*}
    \inf_{x\in[0,1]^d}\max_{y\in D}\frac{y^Tx}{\alpha^Tx}
    = \max_{y\in D}\min_{i\in[d]} \frac{y_i}{\alpha_i},
\end{align*}
which concludes the proof.    

\begin{remark}
    \label{remark: unbounded expected rewards}
    Following almost identical proof, we could similarly prove that 
    \begin{align*}
        \inf_{x\in\R_+^d}\max_{y\in D}\frac{y^Tx}{\alpha^Tx}
        = \max_{y\in D}\min_{i\in[d]} \frac{y_i}{\alpha_i}.
    \end{align*}
    The only change would be in the first step; using the same rescaling idea ($z_x=\frac{x}{\alpha^Tx}\in\R_+^d$), one could prove that 
    \begin{align*}
        \inf_{x\in\R_+^d}\max_{y\in D}\frac{y^Tx}{\alpha^Tx}
        \ge \inf_{\substack{z\in\R_+^d\\\alpha^Tz=1}}\max_{y\in D}y^Tz,
    \end{align*}
    while the reverse inequality trivially holds since $\brc*{z\in\R_+^d\vert \alpha^Tz=1}\subset \R_+^d$. The rest of the proof follows without any change.

    Notably, since this lemma is used in all our proofs to calculate the CR for the worst-case reward expectations, it implies that we would get the same results were we to define the CR as $CR^L(P)=\inf_{r_h\in\R_+^{SA}}CR^L(P,r)$.
\end{remark}
\end{proof}

\clearpage
\begin{lemma}
    \label[lemma]{lemma: min-inf for CR}
    Let $d\in\N$. Also, let $D\in\R^d_{+}$ be a convex compact set and $\Pcal\in\R^d_{+}$ be a convex compact polytope, both assumed to be nonempty. Then
    \begin{align*}
        \inf_{\alpha\in\Pcal}\max_{y\in D}\min_{i\in[d]} \frac{y_i}{\alpha_i}
        = \min_{\alpha\in\Pcal}\max_{y\in D}\min_{i\in[d]} \frac{y_i}{\alpha_i},
    \end{align*}
    where we define all ratios to be $+\infty$ if the denominator equals zero.
\end{lemma}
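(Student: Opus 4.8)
The plan is to prove that the function $g(\alpha):=\max_{y\in D}\min_{i\in[d]}y_i/\alpha_i$ (with the convention that a ratio is $+\infty$ when its denominator is $0$, used throughout) is \emph{quasi-concave} on $\Pcal$ as a map into $[0,+\infty]$, and then to invoke the elementary fact that a quasi-concave function on a bounded polytope attains its infimum at a vertex. Since a polytope has only finitely many vertices, this turns the infimum into a minimum over a finite set, which is automatically attained; that yields $\inf_{\alpha\in\Pcal}g(\alpha)=\min_{\alpha\in\Pcal}g(\alpha)$.

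I would first note that for each fixed $\alpha$ the inner maximum is attained: every map $y\mapsto y_i/\alpha_i$ is upper semicontinuous on $D$ (continuous when $\alpha_i>0$, constant equal to $+\infty$ when $\alpha_i=0$), hence so is their finite minimum $y\mapsto\min_i y_i/\alpha_i$, and a $[0,+\infty]$-valued upper semicontinuous function on the compact set $D$ attains its supremum. Next comes the core step, the inequality $g(\lambda\alpha+(1-\lambda)\beta)\ge\min\{g(\alpha),g(\beta)\}$ for $\alpha,\beta\in\Pcal$ and $\lambda\in[0,1]$. Set $c:=\min\{g(\alpha),g(\beta)\}$; the case $c=0$ is trivial since $g\ge0$, and the case $c=+\infty$ is immediate because $g(\alpha)=+\infty$ forces $\alpha_i=0$ for all $i$, so $\alpha=\beta=0$ and the convex combination is $0$. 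Assume then $0<c<\infty$ and let $y,z\in D$ attain $g(\alpha)$ and $g(\beta)$. From $\min_i y_i/\alpha_i\ge c$ one gets $\alpha_i\le y_i/c$ coordinatewise (immediate when $\alpha_i>0$, and $0\le y_i/c$ when $\alpha_i=0$ since $y_i\ge0$), and similarly $\beta\le z/c$. Hence $\lambda\alpha+(1-\lambda)\beta\le w/c$ coordinatewise, where $w:=\lambda y+(1-\lambda)z\in D$ by convexity of $D$; this gives $\min_i w_i/(\lambda\alpha+(1-\lambda)\beta)_i\ge c$ and therefore $g(\lambda\alpha+(1-\lambda)\beta)\ge c$. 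An easy induction then upgrades this to $g\bigl(\sum_j\lambda_j v_j\bigr)\ge\min_j g(v_j)$ for arbitrary convex combinations.

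To conclude, write $\Pcal=\mathrm{conv}\{v_1,\dots,v_m\}$ with $v_1,\dots,v_m$ its vertices (a bounded polytope is the convex hull of its finitely many extreme points). Every $\alpha\in\Pcal$ is a convex combination of the $v_j$, so $g(\alpha)\ge\min_j g(v_j)$; thus $\inf_{\alpha\in\Pcal}g(\alpha)\ge\min_j g(v_j)$, while the reverse inequality holds trivially since each $v_j\in\Pcal$. Therefore $\inf_{\alpha\in\Pcal}g(\alpha)=\min_j g(v_j)$ is attained at a minimizing vertex, which is exactly the claim. I expect the only genuine difficulty to be recognizing the right route: $g$ is upper semicontinuous but in general \emph{not} lower semicontinuous on $\Pcal$ (because of the $0/0=+\infty$ convention), so a naive ``continuous function on a compact set'' or ``closed sublevel sets'' argument fails — it is precisely the convexity of $D$ (which forces quasi-concavity, and is why $D$ merely convex and compact suffices) together with $\Pcal$ being a polytope (finitely many vertices) that makes the infimum a genuine minimum.
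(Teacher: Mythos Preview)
Your proof is correct and follows essentially the same route as the paper: show that $g(\alpha)=\max_{y\in D}\min_i y_i/\alpha_i$ is quasi-concave on $\Pcal$, then use that a compact polytope is the convex hull of its finitely many extreme points to reduce the infimum to a minimum over vertices. The only cosmetic difference is in how the quasi-concavity inequality is established---you pass through the coordinatewise bound $\alpha\le y/c$, while the paper takes maximizers $y^\alpha,y^\beta$ and applies the mediant inequality $\frac{a+c}{b+d}\ge\min\{a/b,c/d\}$ directly---but these are equivalent manipulations of the same idea.
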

\begin{proof}
We assume w.l.o.g. that $\Pcal\ne \brc{0}$, since the infimum over a singleton is always equal to the minimum (in this case, both equal $+\infty$), and the result trivially holds. 

Next, for all $\alpha\in\Pcal$, define $f(\alpha) = \max_{y\in D}\min_{i\in[d]} \frac{y_i}{\alpha_i}$. Notice that for any $\alpha\in\Pcal$ s.t. $\alpha\ne0$, there exists $i_0\in[d]$ such that $\alpha_{i_0}>0$, and so 
\begin{align*}
    f(\alpha) = \max_{y\in D}\min_{i\in[d]} \frac{y_i}{\alpha_i}
    \leq \max_{y\in D}\frac{y_{i_0}}{\alpha_{i_0}}
    <\infty,
\end{align*}
where the last inequality follows from the compactness of $D$. In particular, since $\Pcal\ne \brc{0}$ and is nonempty, such $\bar{\alpha}\ne0$ exists, and thus $\inf_{\alpha\in\Pcal}f(\alpha) \leq f(\bar{\alpha})<\infty$, so the value at the optimization problem in the l.h.s. is finite. 

We next prove that $f(\alpha)$ is quasi-concave over $\Pcal$, namely
\begin{align*}
    \forall \alpha\ne\beta\in\Pcal, \lambda\in(0,1):\enspace f(\lambda\alpha+(1-\lambda)\beta)\geq \min\brc*{f(\alpha),f(\beta)}.
\end{align*}
First, if $\alpha\ne0$ and $\beta=0$ (or the opposite), for any $\lambda\in(0,1)$ we have that
\begin{align*}
    f(\lambda\alpha+(1-\lambda)\beta)
    = f(\lambda\alpha)
    = \frac{1}{\lambda}f(\alpha)
    \geq f(\alpha)
    = \min\brc*{f(\alpha),f(\beta)},
\end{align*}
where we used the non-negativity of $f(\alpha)$ and the convention that $f(0)=+\infty$. Next, assume that both $\alpha,\beta\ne0$. Also, let $y^{\alpha}$ such that 
\begin{align*}
    y^{\alpha}\in\argmax_{y\in D}\min_{i\in[d]} \frac{y_i}{\alpha_i}
\end{align*}
and similarly define $y^{\beta}$. Such $y$ must exist, since we could always write
\begin{align*}
    f(\alpha) 
    = \max_{y\in D}\min_{i\in[d]} \frac{y_i}{\alpha_i}
    = \max_{y\in D}\min_{i\in[d]:\alpha_i>0} \frac{y_i}{\alpha_i}.
\end{align*}
The maximum over a finite number of linear functions is continuous and the set $D$ is compact, so a maximizer in $D$ is always attainable.
Using these definitions, we have,
\begin{align*}
    f(\lambda\alpha+(1-\lambda)\beta)
    &= \max_{y\in D}\min_{i\in[d]} \frac{y_i}{\lambda\alpha_i+(1-\lambda)\beta_i}\\
    &\geq \min_{i\in[d]} \frac{\lambda y^{\alpha}_i+(1-\lambda) y^{\beta}_i}{\lambda\alpha_i+(1-\lambda)\beta_i}\tag{$D$ is convex} \\
    &\overset{(*)}{\geq} \min_{i\in[d]} \min\brc*{\frac{y^{\alpha}_i}{\alpha_i},\frac{y^{\beta}_i}{\beta_i}} \\
    & = \min\brc*{\min_{i\in[d]}\frac{y^{\alpha}_i}{\alpha_i},\min_{i\in[d]}\frac{y^{\beta}_i}{\beta_i}}\\
    & = \min\brc*{f(\alpha),f(\beta)}.
\end{align*}
Relation $(*)$ is due to the inequality $\frac{a+c}{b+d}\geq \min\brc*{\frac{a}{b},\frac{c}{d}}$ for $a,b,c,d\ge0$, and one could easily verify that the inequality is still valid when either $b=0$ or $d=0$.

Finally, recall that $\Pcal$ is a compact convex polytope; in particular, each interior point could be represented as a convex combination of one of its finite extreme points $ext(\Pcal)$. Then, by the quasi-concavity, the value of each interior point is lower-bounded by the value of at least one of these extreme points so that
\begin{align*}
    \inf_{\alpha\in\Pcal}f(\alpha) = \min_{\alpha\in ext(\Pcal)}f(\alpha).
\end{align*}
This proves that the infimum is attainable by a point in $\Pcal$, thus concluding the proof.
\end{proof}


\newpage
\section*{NeurIPS Paper Checklist}

\begin{enumerate}

\item {\bf Claims}
    \item[] Question: Do the main claims made in the abstract and introduction accurately reflect the paper's contributions and scope?
    \item[] Answer: \answerYes{} 
    \item[] Justification: \textit{In the abstract we clearly present the concept of reward lookahead and claim to analyze it through competitive analysis; we do so in our theorems, which are formally proved in the appendix.}
    \item[] Guidelines:
    \begin{itemize}
        \item The answer NA means that the abstract and introduction do not include the claims made in the paper.
        \item The abstract and/or introduction should clearly state the claims made, including the contributions made in the paper and important assumptions and limitations. A No or NA answer to this question will not be perceived well by the reviewers. 
        \item The claims made should match theoretical and experimental results, and reflect how much the results can be expected to generalize to other settings. 
        \item It is fine to include aspirational goals as motivation as long as it is clear that these goals are not attained by the paper. 
    \end{itemize}

\item {\bf Limitations}
    \item[] Question: Does the paper discuss the limitations of the work performed by the authors?
    \item[] Answer: \answerYes{} 
    \item[] Justification: \textit{As with many theoretical works, the main limitation of this paper lies in the choice of the model and its assumptions, which is necessarily a simplification of real-world problems. Nonetheless, while defining the setting, we tried to choose the assumptions that we believe are most natural to study this problem. We clearly present our assumptions in \Cref{section: preliminaries} and discuss potential alternative models and extensions in \Cref{section: conclusions}.}
    \item[] Guidelines:
    \begin{itemize}
        \item The answer NA means that the paper has no limitation while the answer No means that the paper has limitations, but those are not discussed in the paper. 
        \item The authors are encouraged to create a separate "Limitations" section in their paper.
        \item The paper should point out any strong assumptions and how robust the results are to violations of these assumptions (e.g., independence assumptions, noiseless settings, model well-specification, asymptotic approximations only holding locally). The authors should reflect on how these assumptions might be violated in practice and what the implications would be.
        \item The authors should reflect on the scope of the claims made, e.g., if the approach was only tested on a few datasets or with a few runs. In general, empirical results often depend on implicit assumptions, which should be articulated.
        \item The authors should reflect on the factors that influence the performance of the approach. For example, a facial recognition algorithm may perform poorly when image resolution is low or images are taken in low lighting. Or a speech-to-text system might not be used reliably to provide closed captions for online lectures because it fails to handle technical jargon.
        \item The authors should discuss the computational efficiency of the proposed algorithms and how they scale with dataset size.
        \item If applicable, the authors should discuss possible limitations of their approach to address problems of privacy and fairness.
        \item While the authors might fear that complete honesty about limitations might be used by reviewers as grounds for rejection, a worse outcome might be that reviewers discover limitations that aren't acknowledged in the paper. The authors should use their best judgment and recognize that individual actions in favor of transparency play an important role in developing norms that preserve the integrity of the community. Reviewers will be specifically instructed to not penalize honesty concerning limitations.
    \end{itemize}

\item {\bf Theory Assumptions and Proofs}
    \item[] Question: For each theoretical result, does the paper provide the full set of assumptions and a complete (and correct) proof?
    \item[] Answer: \answerYes{} 
    \item[] Justification: \textit{We provide a sketch of the proof of the main results in the paper, and fully detail each proof in appendix.}
    \item[] Guidelines:
    \begin{itemize}
        \item The answer NA means that the paper does not include theoretical results. 
        \item All the theorems, formulas, and proofs in the paper should be numbered and cross-referenced.
        \item All assumptions should be clearly stated or referenced in the statement of any theorems.
        \item The proofs can either appear in the main paper or the supplemental material, but if they appear in the supplemental material, the authors are encouraged to provide a short proof sketch to provide intuition. 
        \item Inversely, any informal proof provided in the core of the paper should be complemented by formal proofs provided in appendix or supplemental material.
        \item Theorems and Lemmas that the proof relies upon should be properly referenced. 
    \end{itemize}

    \item {\bf Experimental Result Reproducibility}
    \item[] Question: Does the paper fully disclose all the information needed to reproduce the main experimental results of the paper to the extent that it affects the main claims and/or conclusions of the paper (regardless of whether the code and data are provided or not)?
    \item[] Answer: \answerNA{} 
    \item[] Justification: \textit{The paper does not include experiments.}
    \item[] Guidelines:
    \begin{itemize}
        \item The answer NA means that the paper does not include experiments.
        \item If the paper includes experiments, a No answer to this question will not be perceived well by the reviewers: Making the paper reproducible is important, regardless of whether the code and data are provided or not.
        \item If the contribution is a dataset and/or model, the authors should describe the steps taken to make their results reproducible or verifiable. 
        \item Depending on the contribution, reproducibility can be accomplished in various ways. For example, if the contribution is a novel architecture, describing the architecture fully might suffice, or if the contribution is a specific model and empirical evaluation, it may be necessary to either make it possible for others to replicate the model with the same dataset, or provide access to the model. In general. releasing code and data is often one good way to accomplish this, but reproducibility can also be provided via detailed instructions for how to replicate the results, access to a hosted model (e.g., in the case of a large language model), releasing of a model checkpoint, or other means that are appropriate to the research performed.
        \item While NeurIPS does not require releasing code, the conference does require all submissions to provide some reasonable avenue for reproducibility, which may depend on the nature of the contribution. For example
        \begin{enumerate}
            \item If the contribution is primarily a new algorithm, the paper should make it clear how to reproduce that algorithm.
            \item If the contribution is primarily a new model architecture, the paper should describe the architecture clearly and fully.
            \item If the contribution is a new model (e.g., a large language model), then there should either be a way to access this model for reproducing the results or a way to reproduce the model (e.g., with an open-source dataset or instructions for how to construct the dataset).
            \item We recognize that reproducibility may be tricky in some cases, in which case authors are welcome to describe the particular way they provide for reproducibility. In the case of closed-source models, it may be that access to the model is limited in some way (e.g., to registered users), but it should be possible for other researchers to have some path to reproducing or verifying the results.
        \end{enumerate}
    \end{itemize}

\item {\bf Open access to data and code}
    \item[] Question: Does the paper provide open access to the data and code, with sufficient instructions to faithfully reproduce the main experimental results, as described in supplemental material?
    \item[] Answer: \answerNA{} 
    \item[] Justification: \textit{The paper does not include experiments.}
    \item[] Guidelines:
    \begin{itemize}
        \item The answer NA means that paper does not include experiments requiring code.
        \item Please see the NeurIPS code and data submission guidelines (\url{https://nips.cc/public/guides/CodeSubmissionPolicy}) for more details.
        \item While we encourage the release of code and data, we understand that this might not be possible, so “No” is an acceptable answer. Papers cannot be rejected simply for not including code, unless this is central to the contribution (e.g., for a new open-source benchmark).
        \item The instructions should contain the exact command and environment needed to run to reproduce the results. See the NeurIPS code and data submission guidelines (\url{https://nips.cc/public/guides/CodeSubmissionPolicy}) for more details.
        \item The authors should provide instructions on data access and preparation, including how to access the raw data, preprocessed data, intermediate data, and generated data, etc.
        \item The authors should provide scripts to reproduce all experimental results for the new proposed method and baselines. If only a subset of experiments are reproducible, they should state which ones are omitted from the script and why.
        \item At submission time, to preserve anonymity, the authors should release anonymized versions (if applicable).
        \item Providing as much information as possible in supplemental material (appended to the paper) is recommended, but including URLs to data and code is permitted.
    \end{itemize}

\item {\bf Experimental Setting/Details}
    \item[] Question: Does the paper specify all the training and test details (e.g., data splits, hyperparameters, how they were chosen, type of optimizer, etc.) necessary to understand the results?
    \item[] Answer: \answerNA{} 
    \item[] Justification: \textit{The paper does not include experiments.}
    \item[] Guidelines:
    \begin{itemize}
        \item The answer NA means that the paper does not include experiments.
        \item The experimental setting should be presented in the core of the paper to a level of detail that is necessary to appreciate the results and make sense of them.
        \item The full details can be provided either with the code, in appendix, or as supplemental material.
    \end{itemize}

\item {\bf Experiment Statistical Significance}
    \item[] Question: Does the paper report error bars suitably and correctly defined or other appropriate information about the statistical significance of the experiments?
    \item[] Answer: \answerNA{} 
    \item[] Justification: \textit{The paper does not include experiments.}
    \item[] Guidelines:
    \begin{itemize}
        \item The answer NA means that the paper does not include experiments.
        \item The authors should answer "Yes" if the results are accompanied by error bars, confidence intervals, or statistical significance tests, at least for the experiments that support the main claims of the paper.
        \item The factors of variability that the error bars are capturing should be clearly stated (for example, train/test split, initialization, random drawing of some parameter, or overall run with given experimental conditions).
        \item The method for calculating the error bars should be explained (closed form formula, call to a library function, bootstrap, etc.)
        \item The assumptions made should be given (e.g., Normally distributed errors).
        \item It should be clear whether the error bar is the standard deviation or the standard error of the mean.
        \item It is OK to report 1-sigma error bars, but one should state it. The authors should preferably report a 2-sigma error bar than state that they have a 96\% CI, if the hypothesis of Normality of errors is not verified.
        \item For asymmetric distributions, the authors should be careful not to show in tables or figures symmetric error bars that would yield results that are out of range (e.g. negative error rates).
        \item If error bars are reported in tables or plots, The authors should explain in the text how they were calculated and reference the corresponding figures or tables in the text.
    \end{itemize}

\item {\bf Experiments Compute Resources}
    \item[] Question: For each experiment, does the paper provide sufficient information on the computer resources (type of compute workers, memory, time of execution) needed to reproduce the experiments?
    \item[] Answer: \answerNA{} 
    \item[] Justification: \textit{The paper does not include experiments.}
    \item[] Guidelines:
    \begin{itemize}
        \item The answer NA means that the paper does not include experiments.
        \item The paper should indicate the type of compute workers CPU or GPU, internal cluster, or cloud provider, including relevant memory and storage.
        \item The paper should provide the amount of compute required for each of the individual experimental runs as well as estimate the total compute. 
        \item The paper should disclose whether the full research project required more compute than the experiments reported in the paper (e.g., preliminary or failed experiments that didn't make it into the paper). 
    \end{itemize}
    
\item {\bf Code Of Ethics}
    \item[] Question: Does the research conducted in the paper conform, in every respect, with the NeurIPS Code of Ethics \url{https://neurips.cc/public/EthicsGuidelines}?
    \item[] Answer: \answerYes{} 
    \item[] Justification: \textit{This is a fundamental research on a core theoretical model, with no clear ethical implications.}
    \item[] Guidelines:
    \begin{itemize}
        \item The answer NA means that the authors have not reviewed the NeurIPS Code of Ethics.
        \item If the authors answer No, they should explain the special circumstances that require a deviation from the Code of Ethics.
        \item The authors should make sure to preserve anonymity (e.g., if there is a special consideration due to laws or regulations in their jurisdiction).
    \end{itemize}

\item {\bf Broader Impacts}
    \item[] Question: Does the paper discuss both potential positive societal impacts and negative societal impacts of the work performed?
    \item[] Answer: \answerNA{} 
    \item[] Justification: \textit{This is a fundamental research on a core theoretical model, with no direct societal impacts.}
    \item[] Guidelines:
    \begin{itemize}
        \item The answer NA means that there is no societal impact of the work performed.
        \item If the authors answer NA or No, they should explain why their work has no societal impact or why the paper does not address societal impact.
        \item Examples of negative societal impacts include potential malicious or unintended uses (e.g., disinformation, generating fake profiles, surveillance), fairness considerations (e.g., deployment of technologies that could make decisions that unfairly impact specific groups), privacy considerations, and security considerations.
        \item The conference expects that many papers will be foundational research and not tied to particular applications, let alone deployments. However, if there is a direct path to any negative applications, the authors should point it out. For example, it is legitimate to point out that an improvement in the quality of generative models could be used to generate deepfakes for disinformation. On the other hand, it is not needed to point out that a generic algorithm for optimizing neural networks could enable people to train models that generate Deepfakes faster.
        \item The authors should consider possible harms that could arise when the technology is being used as intended and functioning correctly, harms that could arise when the technology is being used as intended but gives incorrect results, and harms following from (intentional or unintentional) misuse of the technology.
        \item If there are negative societal impacts, the authors could also discuss possible mitigation strategies (e.g., gated release of models, providing defenses in addition to attacks, mechanisms for monitoring misuse, mechanisms to monitor how a system learns from feedback over time, improving the efficiency and accessibility of ML).
    \end{itemize}
    
\item {\bf Safeguards}
    \item[] Question: Does the paper describe safeguards that have been put in place for responsible release of data or models that have a high risk for misuse (e.g., pretrained language models, image generators, or scraped datasets)?
    \item[] Answer: \answerNA{} 
    \item[] Justification: \textit{The paper poses no such risks.}
    \item[] Guidelines:
    \begin{itemize}
        \item The answer NA means that the paper poses no such risks.
        \item Released models that have a high risk for misuse or dual-use should be released with necessary safeguards to allow for controlled use of the model, for example by requiring that users adhere to usage guidelines or restrictions to access the model or implementing safety filters. 
        \item Datasets that have been scraped from the Internet could pose safety risks. The authors should describe how they avoided releasing unsafe images.
        \item We recognize that providing effective safeguards is challenging, and many papers do not require this, but we encourage authors to take this into account and make a best faith effort.
    \end{itemize}

\item {\bf Licenses for existing assets}
    \item[] Question: Are the creators or original owners of assets (e.g., code, data, models), used in the paper, properly credited and are the license and terms of use explicitly mentioned and properly respected?
    \item[] Answer: \answerNA{} 
    \item[] Justification: \textit{The paper does not use existing assets.}
    \item[] Guidelines:
    \begin{itemize}
        \item The answer NA means that the paper does not use existing assets.
        \item The authors should cite the original paper that produced the code package or dataset.
        \item The authors should state which version of the asset is used and, if possible, include a URL.
        \item The name of the license (e.g., CC-BY 4.0) should be included for each asset.
        \item For scraped data from a particular source (e.g., website), the copyright and terms of service of that source should be provided.
        \item If assets are released, the license, copyright information, and terms of use in the package should be provided. For popular datasets, \url{paperswithcode.com/datasets} has curated licenses for some datasets. Their licensing guide can help determine the license of a dataset.
        \item For existing datasets that are re-packaged, both the original license and the license of the derived asset (if it has changed) should be provided.
        \item If this information is not available online, the authors are encouraged to reach out to the asset's creators.
    \end{itemize}

\item {\bf New Assets}
    \item[] Question: Are new assets introduced in the paper well documented and is the documentation provided alongside the assets?
    \item[] Answer: \answerNA{} 
    \item[] Justification: \textit{The paper does not release new assets.}
    \item[] Guidelines:
    \begin{itemize}
        \item The answer NA means that the paper does not release new assets.
        \item Researchers should communicate the details of the dataset/code/model as part of their submissions via structured templates. This includes details about training, license, limitations, etc. 
        \item The paper should discuss whether and how consent was obtained from people whose asset is used.
        \item At submission time, remember to anonymize your assets (if applicable). You can either create an anonymized URL or include an anonymized zip file.
    \end{itemize}

\item {\bf Crowdsourcing and Research with Human Subjects}
    \item[] Question: For crowdsourcing experiments and research with human subjects, does the paper include the full text of instructions given to participants and screenshots, if applicable, as well as details about compensation (if any)? 
    \item[] Answer: \answerNA{} 
    \item[] Justification: \textit{The paper does not involve crowdsourcing nor research with human subjects.}
    \item[] Guidelines:
    \begin{itemize}
        \item The answer NA means that the paper does not involve crowdsourcing nor research with human subjects.
        \item Including this information in the supplemental material is fine, but if the main contribution of the paper involves human subjects, then as much detail as possible should be included in the main paper. 
        \item According to the NeurIPS Code of Ethics, workers involved in data collection, curation, or other labor should be paid at least the minimum wage in the country of the data collector. 
    \end{itemize}

\item {\bf Institutional Review Board (IRB) Approvals or Equivalent for Research with Human Subjects}
    \item[] Question: Does the paper describe potential risks incurred by study participants, whether such risks were disclosed to the subjects, and whether Institutional Review Board (IRB) approvals (or an equivalent approval/review based on the requirements of your country or institution) were obtained?
    \item[] Answer: \answerNA{} 
    \item[] Justification: \textit{The paper does not involve crowdsourcing nor research with human subjects.}
    \item[] Guidelines:
    \begin{itemize}
        \item The answer NA means that the paper does not involve crowdsourcing nor research with human subjects.
        \item Depending on the country in which research is conducted, IRB approval (or equivalent) may be required for any human subjects research. If you obtained IRB approval, you should clearly state this in the paper. 
        \item We recognize that the procedures for this may vary significantly between institutions and locations, and we expect authors to adhere to the NeurIPS Code of Ethics and the guidelines for their institution. 
        \item For initial submissions, do not include any information that would break anonymity (if applicable), such as the institution conducting the review.
    \end{itemize}

\end{enumerate}

\end{document}